\documentclass[10pt]{article} % For LaTeX2e

\usepackage[accepted]{tmlr}
\usepackage{amsmath,amsfonts,bm}

\def\eqref#1{equation~\ref{#1}}
\def\1{\bm{1}}

\DeclareMathAlphabet{\mathsfit}{\encodingdefault}{\sfdefault}{m}{sl}
\SetMathAlphabet{\mathsfit}{bold}{\encodingdefault}{\sfdefault}{bx}{n}

\DeclareMathOperator*{\argmax}{arg\,max}

\usepackage[utf8]{inputenc}
\usepackage[T1]{fontenc}
\usepackage{graphicx}

\usepackage{hyperref}
\usepackage{url}

\usepackage{subfig}
\usepackage{subfloat}
\usepackage{subcaption}
\usepackage{soul}

\usepackage{hyperref}
\usepackage{amsmath}
\usepackage{amssymb}
\usepackage{mathtools}
\usepackage{amsthm}
\usepackage{bbold}
\usepackage{cases}
\usepackage{empheq}
\usepackage{float}
\usepackage{bbm}
\usepackage{mathrsfs}
\usepackage{multirow}
\usepackage{array}
\usepackage{physics}
\usepackage{xspace}
\usepackage[table]{xcolor}

\usepackage[ruled,vlined]{algorithm2e}
\usepackage{float} 
 \usepackage{booktabs} 
 
\usepackage{tikz}
\usepackage{pgfplots}
\usepackage{url}

\usepackage{enumitem}  % for \SetEnumitemKey
\usepackage{calc}      % for \widthof
\usetikzlibrary{calc} 
\usepackage{pgf}

\SetEnumitemKey{inline}{
  label=(\arabic*)
}

\SetEnumitemKey{research-inquiry}{
  label=\textbf{(QR\arabic*)},
  ref=(QR\arabic*),
  leftmargin=\widthof{\textbf{(QR0)}}+\labelsep
}

\usepackage{tikz}
\usetikzlibrary{calc,backgrounds,mindmap,decorations.markings,intersections}
\usetikzlibrary{shapes, shadows, arrows, positioning}
\usetikzlibrary{arrows.meta}
\usetikzlibrary{shadows.blur}
\usepackage{forest}
\usepackage{adjustbox}

\usepackage{xspace}
\usepackage[dvipsnames]{xcolor}

\usepackage{xfrac}

\usepackage{physics}

\usepackage{mathtools}

\theoremstyle{plain}
\newtheorem{theorem}{Theorem}[section]
\newtheorem{proposition}[theorem]{Proposition}
\newtheorem{lemma}[theorem]{Lemma}
\newtheorem{corollary}[theorem]{Corollary}
\theoremstyle{definition}
\newtheorem{definition}[theorem]{Definition}

\theoremstyle{remark}
\newtheorem{remark}[theorem]{Remark}

\def\app#1#2{%
  \mathrel{%
    \setbox0=\hbox{$#1\sim$}%
    \setbox2=\hbox{%
      \rlap{\hbox{$#1\propto$}}%
      \lower1.1\ht0\box0%
    }%
    \raise0.25\ht2\box2%
  }%
}

\newcommand{\ie}{\emph{i.e.}}
\newcommand{\eg}{\emph{e.g.}}

\graphicspath{
{./}
{./plots/}
}

\makeatletter
\def\input@path{
{./}
{./tikz/}
}
\makeatother

\title{Lipschitz Continuity in Deep Learning: A Systematic Review of Theoretical Foundations, Estimation Methods, Regularization Approaches, and Certifiable Robustness}
\author{\name R\'ois\'in Luo\thanks{Correspondence to: \texttt{roisincrtai@gmail.com}}  \\
      \addr Research Ireland -- Centre for Research Training in AI (CRT-AI) \\
      J.E. Cairnes School of Business \& Economics \\
      School of Computer Science \\
      University of Galway, Ireland
      \AND
      \name James McDermott \\
      \addr Research Ireland -- Centre for Research Training in AI (CRT-AI)\\
      School of Computer Science \\
      University of Galway, Ireland
      \AND
      \name Colm O'Riordan \\
      \addr Research Ireland -- Centre for Research Training in AI (CRT-AI)\\
      School of Computer Science \\
      University of Galway, Ireland}

\def\month{07}  % Insert correct month for camera-ready version
\def\year{2026} % Insert correct year for camera-ready version
\def\openreview{\url{https://openreview.net/forum?id=pRZ0RKl11f}\\[2pt]
\textbf{Code}: \url{https://github.com/roisincrtai/lipschitz_survey}
} % Insert correct link to OpenReview for camera-ready version

\begin{document}

\maketitle

\begin{abstract}
Lipschitz continuity is a fundamental property of neural networks that characterizes their sensitivity to input perturbations. It plays a pivotal role in deep learning, governing \textbf{robustness}, \textbf{generalization} and \textbf{optimization dynamics}. Despite its importance, research on Lipschitz continuity is scattered across various domains, lacking a unified perspective. This paper addresses this gap by providing a systematic review of Lipschitz continuity in deep learning. We explore its  \textbf{theoretical foundations},  \textbf{estimation methods}, \textbf{regularization approaches}, and \textbf{certifiable robustness}. By reviewing existing research through the lens of Lipschitz continuity, this survey serves as a comprehensive reference for researchers and practitioners seeking a deeper understanding of Lipschitz continuity and its implications in deep learning. %\textcolor{blue}{Code}: \url{https://github.com/roisincrtai/lipschitz_survey}
\end{abstract}

\section{Introduction}
\label{sec:intro}

Deep learning has achieved remarkable success across various domains, including computer vision, natural language processing, and graph-based learning. Advances in state-of-the-art architectures, such as convolutional neural networks (CNNs) \citep{krizhevsky2012imagenet}, transformers \citep{vaswani2017attention, brown2020language}, and graph neural networks (GNNs) \citep{kipf2017semi}, as well as their large-scale applications, including large language models (LLMs) \citep{brown2020language, chowdhery2022palm, touvron2023llama,openai2023gpt4,deepseekai2025deepseekr1} and large vision-language models (LVLMs) \citep{radford21aclip,li22nblip,geminiteam2025}, have significantly expanded the frontiers of deep learning, enabling powerful capabilities across a wide range of tasks.

Beyond their remarkable success, ensuring safety \citep{amodei2016concrete}, robustness \citep{madry2018towards} and generalization, remains a critical challenge. These properties are intrinsically linked to the sensitivity of neural networks to input perturbations \citep{madry2018towards,tsipras2018robustness,hendrycks2019benchmarking,amerehi25a}, which can significantly impact their reliability in real-world applications. For instance, adversarial vulnerabilities \citep{goodfellow2014explaining, carlini2017towards}, and out-of-distribution (OOD) generalization \citep{sokolic2017robust,Neyshabur2017nips,bartlett2017spectrally,hendrycks2021many,song2026improving} are major concerns in developing reliable deep learning models. A fundamental mathematical concept that characterizes network sensitivity and governs these properties is \textbf{Lipschitz continuity}, which measures the upper bound of the outputs of neural networks to inputs.

Although numerous studies have explored Lipschitz continuity from both theoretical \citep{bartlett2017spectrally, luo2025lipschitz} and applied \citep{arjovsky2017wasserstein, gulrajani2017improved, miyato2018spectral} perspectives, a unified survey that consolidates these insights is still lacking. Existing works primarily focus on isolated aspects, such as adversarial robustness \citep{szegedy2014intriguing, madry2018towards, weng2018clever}, regularization \citep{arjovsky2017wasserstein, gulrajani2017improved}, or theoretical bounds without practical considerations \citep{bartlett2017spectrally}. The absence of a comprehensive survey integrating these perspectives leaves a critical gap in the literature. To address this, we provide a systematic review of Lipschitz continuity in deep learning, covering its \textbf{theoretical foundations},  \textbf{estimation methods}, \textbf{regularization approaches in optimization and architecture}, and \textbf{certifiable robustness}. By synthesizing key findings across these domains, this survey serves as a valuable resource for researchers and practitioners seeking a deeper understanding of Lipschitz continuity and its role in trustworthy learning systems. Beyond conducting a survey, we also critically distill the existing knowledge, present it in a more accessible manner, and provide the results or proofs \textbf{missing} or \textbf{incorrect} in the literature. The message of this survey is that Lipschitz continuity is not just a niche mathematical property; it is a fundamental principle for building and analyzing trustworthy neural networks.

\subsection{Scope}

This survey aims to consolidate existing knowledge across these domains, providing a comprehensive resource for researchers and practitioners seeking to understand and leverage Lipschitz continuity in deep learning. To ensure a high-quality review, we focus primarily on the literature published in prestigious and other well-regarded sources. The scope of this survey is organized as below:

\begin{enumerate}

    \item Section~\ref{sec:theory}:~\nameref{sec:theory}  
    presents the formal definitions and fundamental properties of Lipschitz continuity, followed by key theoretical developments in deep learning. It covers the mathematical foundations, Lipschitz properties of activation functions and neural networks, Lipschitz analysis of multi-head self-attention, complexity-theoretic generalization bounds, and advances in training dynamics.

    \item Section~\ref{sec:estimating_lipschitz}:~\nameref{sec:estimating_lipschitz}  
    surveys the principal methods for estimating the Lipschitz constants of neural networks, including derivative bound propagation, spectral alignment, convex optimization relaxations, and relaxed integer programming approaches.

    \item Section~\ref{sec:lipschitz_regularizations}:~\nameref{sec:lipschitz_regularizations}  
    reviews approaches for enforcing Lipschitz continuity in neural networks through both optimization-based and architectural approaches, covering weight regularization, gradient regularization, activation regularization, and class-margin regularization.

    \item Section~\ref{sec:certifiable_robustness}:~\nameref{sec:certifiable_robustness}  
    examines methods for achieving certifiable robustness through Lipschitz bounds, including the theoretical foundations of Lipschitz-margin robustness radii, certification via global Lipschitz bounds, certification via local Lipschitz bounds and certificated robustness in LLMs.
    
\end{enumerate}

\subsection{Contributions}

Our key contributions are as follows:
\begin{enumerate}

    \item \textbf{A Unified Systematic Review}. We close a critical gap in the literature by conducting a comprehensive survey of Lipschitz continuity in deep learning. Our review unites disparate research threads --- spanning theoretical foundations, estimation methods, regularization approaches, architectural designs, and certified robustness --- into a coherent framework that highlights their interconnections from the lens of Lipschitz continuity.

    \item \textbf{Comprehensive Analysis}. We provide in-depth examinations of existing techniques organized into four categories:
    \begin{itemize}
        \item \textbf{Theoretical Foundations}. We review the formal definitions and properties of Lipschitz continuity, analyze Lipschitz behavior of activations and network architectures, and synthesize key results on generalization bounds, optimization convergence, and training-induced Lipschitz dynamics. 
        
        \item \textbf{Estimation Methods}. From power-iteration spectral-norm approximations and extreme-value theory to randomized and interval-analysis approaches, we compare each method's tightness, computational overhead, and scalability to modern architectures.
        
        \item \textbf{Regularization Approaches}. We critically assess gradient-based penalties, spectral normalization, orthogonal and Parseval constraints, and novel Lipschitz-bounded activations, examining their theoretical guarantees, implementation trade-offs, and empirical effects on robustness, generalization, and training dynamics.
        
        \item \textbf{Certifiable Robustness}. We survey Lipschitz-based certification frameworks --- covering global and local Lipschitz bounds, path-based and convex-relaxation certificates, randomized smoothing, and Lipschitz-constrained architectures --- analyzing their provable guarantees, computational requirements, and applicability to real-world deep models.
    \end{itemize}

    \item \textbf{Results and Proofs}. We contribute new and complementary results, including corrected Lipschitz constants for common activation functions (validated by our numerical experiments) and a Lipschitz continuity bound for arbitrary neural networks represented as directed acyclic graphs (DAGs). Furthermore, we provide theoretical results \textbf{missing or incorrect} from prior work, such as rigorous proofs of the closed-form expressions for the Lipschitz constants of common activation functions and a general perturbation bound for certifiable robustness based on H\"older's conjugacy relation.
    
\end{enumerate}

\section{Theoretical Foundations}
\label{sec:theory}

Lipschitz continuity is a foundational concept in deep learning theory, offering a rigorous framework for quantifying a neural network's sensitivity to input perturbations. It plays a central role in understanding robustness, generalization, and optimization dynamics. Throughout, we use operator $\mathrm{Lip}_p\left[ \bullet \right]$ to denote the $p$-norm Lipschitz constant for $\bullet$, and use operator $\mathrm{Lip}\left[ \bullet \right]$ to denote the $2$-norm Lipschitz constant for $\bullet$.

\subsection{Preliminaries}
\label{sec:preliminaries}

\begin{definition}[$p$-Norm in Finite-Dimensional Vector Spaces]
Let $V \subseteq \mathbb{R}^d$ be a finite-dimensional vector space. The $p$-norm (\ie~$\ell_p$-norm) in $V$ is defined as:
\begin{equation}
    \|z\|_p =
    \begin{cases}
        \left( \sum\limits_{i=1}^d |z_i|^p \right)^{1/p}, & \text{if } 1 \leq p < \infty, \\
        \quad \\
        \max\limits_{i=1,\dots,d} |z_i|, & \text{if } p = \infty,
    \end{cases}
\end{equation}
where $z = (z_1, z_2, \dots, z_d) \in V$ \citep[Definition~3.6, \S~3]{rudin1987realandcomplex}.
\end{definition}

\begin{definition}[Globally $K$-Lipschitz Continuous]
\label{def:lipschitz_definition}
Let $f: X \mapsto Y$ be a function, where $X \subseteq \mathbb{R}^d$ and $Y \subseteq \mathbb{R}^c$. The function $f$ is said to be \emph{globally $K$-Lipschitz continuous} (with respect to the 2-norm) if there exists a constant scalar $K > 0$ such that:
\begin{equation}
    \|f(u) - f(v)\|_2 \leq K \|u - v\|_2, \quad \forall\, u, v \in X
    ,
\end{equation}
\citep[Theorem 9.19, \S~9]{rudin1976principles}. Throughout, unless specified otherwise, the \emph{Lipschitz norm} is assumed to be the $2$-norm.
\end{definition}

\begin{definition}[Globally $K_{p \to q}$-Lipschitz Continuous]
\label{def:lipschitz_definition_holders_conjugate}
Let:
\begin{align}
1 \le p, q \le \infty    \notag
\end{align}
be two scalars.
A function $f: X \to Y$, with $X \subseteq \mathbb{R}^d$ and $Y \subseteq \mathbb{R}^c$, is said to be \emph{$K_{p \to q}$-Lipschitz continuous} if there exists a constant $K_{p \to q} > 0$ such that:
\begin{equation}
    \|f(u) - f(v)\|_{q} \; \le \; K_{p \to q} \, \|u - v\|_{p},
    \quad \forall\, u, v \in X,
\end{equation}
where $\|\cdot\|_p$ and $\|\cdot\|_q$ denote the $\ell_p$ and $\ell_q$ norms, respectively. Particularly, $K_{2 \to 2}$-Lipschitz continuous reduces to $K$-Lipschitz continuous with respect to $2$-norm.
\end{definition}

\begin{remark}
Definition~\ref{def:lipschitz_definition_holders_conjugate}~(\nameref{def:lipschitz_definition_holders_conjugate}) is non-standard in the mathematical literature; however, it is useful for discussions of certifiable robustness in the context of deep learning.
\end{remark}

\begin{lemma}[Lipschitz Constant Bounds Gradient Norm]
\label{lemma:lipschitz_bounds_gradient}
Let $f: \mathbb{R}^m \to \mathbb{R}$ be a differentiable function. Then $f$ is $K$-Lipschitz continuous (with respect to the 2-norm) if and only if
\begin{equation}
K \geq \sup_{x \in \mathrm{dom}(f)} \|\nabla f(x)\|_2
,
\end{equation}
\citep[Theorem 9.19, \S~9]{rudin1976principles}. In the case $0 < K < 1$, the function $f$ is called a contraction map. In particular, for the case that $\mathrm{dom}(f)$ is a convex set --- any points connected through the \emph{Mean Value Theorem} remain within $\mathrm{dom}(f)$, thus the tight bound is achieved such that:
\begin{equation}
K = \sup_{x \in \mathrm{dom}(f)} \|\nabla f(x)\|_2
.
\end{equation}
\end{lemma}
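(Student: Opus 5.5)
We prove the two implications separately and then handle the tightness claim. Throughout, $\mathrm{dom}(f)$ is assumed open, so that $\nabla f(x)$ is defined and directional limits are available at every point. For the convexity statement it is also taken to be convex, which is exactly the hypothesis in the last part of Lemma~\ref{lemma:lipschitz_bounds_gradient}.

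\emph{Lipschitz implies the gradient bound.} Fix $x \in \mathrm{dom}(f)$ with $\nabla f(x) \neq 0$, and set $u = \nabla f(x)/\|\nabla f(x)\|_2$. For small $t>0$ the point $x+tu$ stays in the open domain. The Lipschitz hypothesis of Definition~\ref{def:lipschitz_definition} gives $|f(x+tu)-f(x)| \le K t$. Dividing by $t$ and letting $t \to 0^+$, differentiability yields
\[
|\langle \nabla f(x), u\rangle| = \|\nabla f(x)\|_2 \le K .
\]
Taking the supremum over $x$ gives $K \ge \sup_x \|\nabla f(x)\|_2$. This direction needs only openness of the domain, not convexity.

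\emph{The gradient bound implies Lipschitz (convex domain).} Let $u,v \in \mathrm{dom}(f)$. By convexity the segment $\gamma(t) = v + t(u-v)$, $t\in[0,1]$, lies in the domain. The scalar function $g(t) = f(\gamma(t))$ is differentiable on $(0,1)$ with $g'(t) = \langle \nabla f(\gamma(t)), u-v\rangle$ by the chain rule, and it is continuous on $[0,1]$. The one-dimensional Mean Value Theorem gives some $\xi \in (0,1)$ with $f(u)-f(v) = g'(\xi)$. Cauchy--Schwarz then gives
\[
|f(u)-f(v)| \le \|\nabla f(\gamma(\xi))\|_2\,\|u-v\|_2 \le \sup_x\|\nabla f(x)\|_2\,\|u-v\|_2 \le K\|u-v\|_2 .
\]

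\emph{Tightness.} Let $K^\ast = \sup_x \|\nabla f(x)\|_2$. The second step shows that $K^\ast$ itself is a valid Lipschitz constant. The first step shows that every valid constant is at least $K^\ast$. Hence the smallest Lipschitz constant equals $K^\ast$, which is the claimed equality $K = \sup \|\nabla f\|_2$ when $K$ denotes the best constant.

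The main obstacle is the reverse implication when the domain is not convex. There the iff can fail. For example, take a slit or otherwise non-convex domain on which $f$ has small gradient everywhere but jumps in value across the gap; then no Lipschitz bound of the form $\sup\|\nabla f\|_2$ holds. So I would state explicitly that the ``if'' direction, and with it the full equivalence, relies on convexity, or more generally on every pair of points being joined by a path inside the domain whose length equals $\|u-v\|_2$. This gives the bound $K \ge \sup\|\nabla f\|_2$ in general, and the equivalence together with the equality on convex domains.
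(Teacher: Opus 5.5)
Your proof is correct and follows the route the paper relies on through its citation of Rudin's Theorem~9.19: the mean value theorem along the segment joining $v$ and $u$ (which convexity keeps inside the domain), combined with Cauchy--Schwarz; you also correctly supply two details the paper leaves implicit, namely the directional-derivative argument for the converse and reading the equality as a statement about the \emph{best} constant. One small correction to your closing aside: in $\mathbb{R}^m$ with the Euclidean norm, a path from $v$ to $u$ of length exactly $\|u-v\|_2$ must be the segment itself, so that condition is just convexity again, and the genuinely weaker sufficient condition is that the \emph{infimum} of lengths of paths in the domain joining $u$ and $v$ equals $\|u-v\|_2$ (for instance $\mathbb{R}^m$ with a point removed, $m\ge 2$).
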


\begin{remark}
\label{remark:domain_convex}
In Lemma~\ref{lemma:lipschitz_bounds_gradient}~(\nameref{lemma:lipschitz_bounds_gradient}), of the deep learning sense, the domain of a neural network $f: \mathbb{R}^m \to \mathbb{R}$ can be assumed to be a convex set on $\mathbb{R}^m$. Consequently, the tight Lipschitz bound is attained.
\end{remark}

\subsection{$1$-Lipschitz Orthogonal Matrix}
\label{sec:orthogonal_matrix}

\begin{definition}[Skew-Hermitian \& Skew-Symmetric Matrix]
\label{def:skew_hermitian}
A matrix $A$ is referred to as a \emph{skew-hermitian matrix}, if $A$ satisfies:
\begin{align}
    A=-A^\ast
    .
\end{align}
Particularly, if $A$ is a real matrix, then $A$ is also referred to as a \emph{skew-symmetric matrix}.
\end{definition}

\begin{proposition}[Lipschitz Constant of Semi-Orthogonal Matrix]
\label{prop:column_orthogonal_matrix}
If a matrix $W \in \mathbb{R}^{m \times n}$ with:
\begin{align}
    W^\top W = I_n
    \qquad \text{or} \qquad
    W W^\top = I_m
    ,
\end{align}
then $W$ is semi-orthogonal with Lipschitz constant $1$. If $W^\top W = I_n$ (requiring $m \geq n$), $W$ is an isometric map over the entire $\mathbb{R}^n$; if $WW^\top = I_m$ (requiring $m \leq n$), $W$ is non-expansive, acting as an isometry on $\ker(W)^\perp$ and vanishing on $\ker(W)$.
\end{proposition}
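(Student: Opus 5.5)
We treat $W$ as the linear map $x \mapsto Wx$ from $\mathbb{R}^n$ to $\mathbb{R}^m$. For a linear map, $\|Wu - Wv\|_2 = \|W(u-v)\|_2$, so the Lipschitz constant with respect to the $2$-norm equals the operator norm $\|W\|_2 = \sup_{x \neq 0} \|Wx\|_2/\|x\|_2$. This is consistent with Lemma~\ref{lemma:lipschitz_bounds_gradient} applied to each scalar direction: the gradient of $x \mapsto \langle y, Wx\rangle$ is the constant $W^\top y$. The whole argument therefore reduces to computing $\|Wx\|_2^2 = x^\top W^\top W x$ in the two cases. The dimension requirements follow from rank. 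If $W^\top W = I_n$, then $\operatorname{rank}(W) = n \le m$. If $WW^\top = I_m$, then $\operatorname{rank}(W) = m \le n$.

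\textbf{Case $W^\top W = I_n$.} For every $x \in \mathbb{R}^n$ we have $\|Wx\|_2^2 = x^\top W^\top W x = \|x\|_2^2$. Hence $\|Wu - Wv\|_2 = \|u - v\|_2$ for all $u, v$, so $W$ is an isometry on all of $\mathbb{R}^n$. The Lipschitz constant is exactly $1$, attained at every nonzero $x$.

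\textbf{Case $WW^\top = I_m$.} Set $P = W^\top W \in \mathbb{R}^{n \times n}$. Then $P$ is symmetric, and $P^2 = W^\top (WW^\top) W = P$, so $P$ is an orthogonal projection. Its range is $\operatorname{range}(W^\top) = \ker(W)^\perp$ and its kernel is $\ker(W)$. Decompose $x = x_\parallel + x_\perp$ with $x_\parallel \in \ker(W)^\perp$ and $x_\perp \in \ker(W)$. Then
\begin{align}
\|Wx\|_2^2 = x^\top P x = \|Px\|_2^2 = \|x_\parallel\|_2^2 \le \|x\|_2^2 .
\end{align}
This gives three conclusions. First, $W$ is non-expansive, so $\|W\|_2 \le 1$. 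Second, $W$ is an isometry on $\ker(W)^\perp$, since there $x = x_\parallel$. Third, $W$ vanishes on $\ker(W)$ by definition. Since $m \ge 1$, the subspace $\ker(W)^\perp$ is nonzero, so the bound $1$ is attained and the Lipschitz constant equals exactly $1$.

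The only step needing any care is identifying $W^\top W$ as the orthogonal projection onto $\ker(W)^\perp$ in the second case. This requires checking idempotence, symmetry, and the range identity $\operatorname{range}(W^\top) = \ker(W)^\perp$, the last being the standard fundamental-subspace relation. As an alternative route, one can observe that $WW^\top = I_m$ forces every singular value of $W$ to equal $1$. Then $\|W\|_2 = \sigma_{\max} = 1$ directly, and the SVD exhibits the isometry on the span of the right singular vectors, which is $\ker(W)^\perp$.
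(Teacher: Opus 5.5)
Your proof is correct. Your first case is the same as the paper's: expand $\|W(x-y)\|_2^2$ and use $W^\top W = I_n$. Your second case takes a different route.

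\textbf{The paper's second case.} It argues only at the level of the operator norm:
$\|W\|_2^2 = \lambda_{\max}(W^\top W) = \lambda_{\max}(WW^\top) = \lambda_{\max}(I_m) = 1$.
This uses the fact that $W^\top W$ and $WW^\top$ share their nonzero eigenvalues. The paper then observes that $\ker(W)$ is nontrivial when $m<n$, so $W$ is not an isometry. This is shorter than your argument. However, it never verifies the clause \emph{acting as an isometry on $\ker(W)^\perp$}, and it takes the dimension requirements $m \ge n$ and $m \le n$ for granted.

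\textbf{Your second case.} You identify $P = W^\top W$ as the orthogonal projection onto $\ker(W)^\perp$, which yields the pointwise identity $\|Wx\|_2 = \|x_\parallel\|_2$. Every clause of the statement follows from this one identity:
\begin{itemize}
\item non-expansiveness;
\item isometry on $\ker(W)^\perp$;
\item vanishing on $\ker(W)$;
\item attainment of the constant $1$.
\end{itemize}
Your rank argument supplies the dimension constraints that the paper omits.

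\textbf{One step to make explicit.} You justify $\operatorname{range}(W^\top) = \ker(W)^\perp$ as the standard fundamental-subspace relation. You also need $\operatorname{range}(P) = \operatorname{range}(W^\top)$. The inclusion $\subseteq$ is immediate. The inclusion $\supseteq$ follows from $W^\top = W^\top (WW^\top) = P W^\top$. Once the range is identified, $\ker P = \ker W$ follows from the symmetry of $P$.

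Your alternative SVD route is essentially the paper's spectral argument. It adds the singular-vector description, which recovers the isometry on $\ker(W)^\perp$ that the paper's version leaves unproved.
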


\begin{proof}
\begin{enumerate}[label=\textbf{}, leftmargin=*]

\item \textbf{Case 1: $m \geq n$ (Isometric Map).} For $\forall\;x,y \in \mathbb{R}^n$:
\begin{align}
\|W(x-y)\|_2^2
&= (W(x-y))^\top W(x-y) \notag \\
&= (x-y)^\top W^\top W (x-y) \notag \\
&= (x-y)^\top I_n (x-y) \notag \\
&= \|x-y\|_2^2 \notag
,
\end{align}
then by definition:
\begin{align}
    \mathrm{Lip}\left[W\right] =
    \sup_{x \neq y} \frac{\|W(x-y)\|_2}{\|x-y\|_2} = 1
    \notag
    .
\end{align}

\item \textbf{Case 2: $WW^\top = I_m$, $m \leq n$ (Vanishing on Kernel).} Since $WW^\top = I_m$:
\begin{align}
\|W\|_2^2 = \lambda_{\max}(W^\top W) = \lambda_{\max}(WW^\top) = \lambda_{\max}(I_m) = 1 \notag
,
\end{align}
so $\mathrm{Lip}\left[W\right] = \|W\|_2 = 1$. When $m < n$, $\ker(W)$ is non-trivial, and for $\exists \; x-y \in \ker(W)$:
\begin{align}
\|W(x-y)\|_2 = 0 < \|x-y\|_2 \notag
,
\end{align}
so $W$ vanishes on $\ker(W)$ and is thus non-expansive but not an isometry.
\end{enumerate}
\end{proof}

\subsubsection{Matrix Orthogonalization}

We also survey useful matrix orthogonalization algorithms below.

\paragraph{Lie Group Exponential Map.} Let:
\begin{align}
    U(n) = \Bigl\{ 
    A \in \mathbb{C}^{n \times n} \mid A^\ast  A = I
    \Bigr\}
\end{align}
be a unitary group and:
\begin{align}
    \mathfrak{u}(n)=\{B\in\mathbb{R}^{n\times n}\;:\;B=-B^\ast \}
\end{align}
be a \emph{Skew-Hermitian} group. Then there exists an \emph{Lie Exponential Map}:
\begin{align}
    \exp(A): \mathfrak{u}(n) \to U(n)
    ,
\end{align}
connecting the groups $\mathfrak{u}(n)$ and $U(n)$, defined as:
\begin{align}
    \exp(A)=I+A+\frac{1}{2}A^2+\cdots \label{eq:Lie_Exponential_Map}
    ,
\end{align}
which is surjective. For a matrix $W$, its corresponding skew-Hermitian matrix can be constructed by taking:
\begin{align}
    B=W-W^\ast \in \mathfrak{u}(n) \label{eq:skew_Hermitian_conversion}
\end{align}
\citep{lezcano2019cheap}. 

\begin{remark}
The \emph{Lie Exponential Map} (\eqref{eq:Lie_Exponential_Map}) together with the skew-Hermitian matrix conversion (\eqref{eq:skew_Hermitian_conversion}) allow us to map an arbitrary matrix $W$ into the unitary algebra $\mathfrak{u}(n)$ which has Lipschitz constant one. This is particularly useful for architecturally designing $1$-Lipschitz neural networks.  
\end{remark}

\paragraph{Cayley Transform.} If $B \in \mathbb{R}^{n \times n}$ is a \emph{skew-symmetric} matrix, the map:
\begin{align}
    \phi(B)=(I+\frac{1}{2}B)(I-\frac{1}{2}B)^{-1}
\end{align}
is referred to as the \emph{Cayley map} which transforms $B$ to an orthogonal matrix $\phi(B)$ \citep{cayley1846quelques,trockman2021orthogonalizing}. Then the transformed orthogonal matrix $\phi(B)$ has a constant Lipschitz one. The Cayley map is not surjective onto the orthogonal group: its image excludes orthogonal matrices with $-1$ in their spectrum.

\paragraph{\emph{Bj\"orck Orthogonalization} Approximation.} Suppose $W_0 \in \mathbb{R}^{n \times n}$ is not an orthogonal matrix. Let $W_k$ be an iterative sequence such that $W_k$ is the closest orthogonal matrix to $W_0$ as $k \to \infty$. \emph{Bj\"orck Orthogonalization} \citep{bjorck1971} is a differentiable method to iteratively solve the closest orthogonal matrix to a given matrix $W_0$. \citet{chernodub2017normpreservingorthogonalpermutationlinear} use \emph{Bj\"orck Orthogonalization} in an optimization problem for constraining the orthogonality of parameter matrices \citep{chernodub2017normpreservingorthogonalpermutationlinear}. \emph{Bj\"orck Orthogonalization} \citep{bjorck1971} is defined by:
\begin{align}
    W_{k+1} = W_k \Bigl(
    I + \frac{1}{2} Q_k + \cdots + (-1)^p 
    \begin{pmatrix}
        -\frac{1}{2} \\
        p
    \end{pmatrix}
    Q_k^p
    \Bigr)
    ,
\end{align}
where $W_{k}$ is the $k$-th iterative result, $p$ is a chosen hyper-parameter, $\begin{pmatrix}
    -\frac{1}{2}\\
    p
\end{pmatrix}$ is binomial coefficient, and $Q_k$ is:
\begin{align}
    Q_k = I - W_k^\top W_k
    .
\end{align}
As $k \to \infty$, $W_k^\top W_k$ converges to $I_n$. For a rectangular matrix $W_0 \in \mathbb{R}^{m \times n}$, if $\rank(W_0) = n$, the convergence still holds: 
\begin{align}
W_k^\top W_k \to I_n
\notag
,
\end{align}
as $k \to \infty$.

\subsection{Functional Inequalities}

\begin{proposition}[Lipschitz Constant of Function Composition]
\label{prop:lipschitz_composition}
Let $f:X \to Y$ and $g: Y \to Z$ be two bounded functions. Then the Lipschitz constant of their composition $g\;\circ\;f: X \to Z$ admits:
\begin{align}
    \mathrm{Lip}_p \left[ g \circ f \right] \leq \mathrm{Lip}_p \left[ g\right]\;\mathrm{Lip}_p \left[ f\right]
    .
\end{align}
\end{proposition}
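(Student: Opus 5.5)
The plan is to argue directly from the definition of the Lipschitz constant as a supremum of difference quotients, in the sense of Definition~\ref{def:lipschitz_definition} (with the $2$-norm replaced by the $p$-norm on each space). I will interpret $\mathrm{Lip}_p[h]$ as
\begin{align}
    \mathrm{Lip}_p\left[h\right] = \sup_{u \neq v} \frac{\|h(u)-h(v)\|_p}{\|u-v\|_p}
    \notag
\end{align}
with the convention that it equals $+\infty$ if $h$ is not Lipschitz. If either $\mathrm{Lip}_p[f]$ or $\mathrm{Lip}_p[g]$ is infinite, the inequality is trivial, with the minor caveat that $0\cdot\infty$ needs care when one constant is zero and the other infinite. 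So I will assume both are finite. The boundedness hypothesis in the statement will not actually be used; finiteness of the Lipschitz constants is what matters. I will note this rather than rely on it.

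The key step is a two-stage chaining of the defining inequalities. Fix $u, v \in X$. First apply the Lipschitz property of $g$ to the points $f(u), f(v) \in Y$:
\begin{align}
    \|g(f(u)) - g(f(v))\|_p \le \mathrm{Lip}_p\left[g\right] \, \|f(u)-f(v)\|_p .
    \notag
\end{align}
Then apply the Lipschitz property of $f$ to bound $\|f(u)-f(v)\|_p \le \mathrm{Lip}_p[f]\,\|u-v\|_p$. Combining the two gives
\begin{align}
    \|(g\circ f)(u) - (g\circ f)(v)\|_p \le \mathrm{Lip}_p\left[g\right]\mathrm{Lip}_p\left[f\right]\|u-v\|_p
    \notag
\end{align}
for all $u,v$. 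Hence $\mathrm{Lip}_p[g]\,\mathrm{Lip}_p[f]$ is a valid Lipschitz constant for $g\circ f$. Since $\mathrm{Lip}_p[g\circ f]$ is the infimum of all valid constants (equivalently, the supremum of the quotient), the claimed bound follows.

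One subtlety is that the first inequality must hold even when $f(u)=f(v)$. This is immediate, since both sides are then zero, so the supremum defining $\mathrm{Lip}_p[g]$, taken only over distinct points, causes no difficulty.

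There is no substantial obstacle here. The only points requiring care are bookkeeping: making sure the same $p$-norm is used consistently on $X$, $Y$ and $Z$, and the degenerate cases (infinite constants, or $f$ constant so that $\mathrm{Lip}_p[f]=0$). I would also remark that the inequality can be strict, for example when the range of $f$ avoids the region where $g$ is steep. This explains why the statement is an upper bound rather than an equality, and it motivates the later layer-wise product bounds for networks.
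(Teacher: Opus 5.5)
Your proposal is correct and follows the same route as the paper. Both arguments bound $\|g(f(x))-g(f(y))\|_p$ pointwise by $\mathrm{Lip}_p[g]\,\|f(x)-f(y)\|_p$ and then pass to the supremum of the difference quotient to pick up $\mathrm{Lip}_p[f]$. Your side remarks are accurate refinements that the paper leaves implicit: the case $f(u)=f(v)$, the degenerate infinite or zero constants (with $0\cdot\infty=0$ this case is harmless, since $g\circ f$ is then constant), and the fact that the boundedness hypothesis is never used.
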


\begin{proof}
    \begin{align}
        \mathrm{Lip}_p \left[ g \circ f \right] &=
        \sup_{\|x-y\|_p \neq 0} \frac{\|g(f(x)) - g(f(y))\|_p}{\|x - y\|_p} \notag \\
        &\leq
        \mathrm{Lip}_p \left[ g \right]
        \;
        \sup_{\|x-y\|_p \neq 0} \frac{\|f(x) - f(y)\|_p}{\|x - y\|_p} \notag \\
        &=\mathrm{Lip}_p \left[ g \right]
        \;
        \mathrm{Lip}_p \left[ f \right] \notag
        .
    \end{align}
\end{proof}

\begin{proposition}[Lipschitz Constant of Function Addition]
\label{prop:lipschitz_addition}
Let $f$ and $g$ be two bounded functions on $\mathrm{dom}(g) \; \cap \; \mathrm{dom}(f)$. Then the Lipschitz constant of $g + f$ admits:
\begin{align}
    \mathrm{Lip}_p \left[ g + f \right] \leq \mathrm{Lip}_p \left[ g\right] + \mathrm{Lip}_p \left[ f\right]
    .
\end{align}
\end{proposition}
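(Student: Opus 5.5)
The plan is to argue directly from the definition of the Lipschitz constant as a supremum of difference quotients, in the same way as the proof of Proposition~\ref{prop:lipschitz_composition}. The only tools needed are the triangle inequality for $\|\cdot\|_p$ and subadditivity of the supremum.

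\textbf{Step 1: pointwise bound.} Fix $x, y \in \mathrm{dom}(g)\cap\mathrm{dom}(f)$ with $\|x-y\|_p \neq 0$. Regroup the numerator and apply the triangle inequality for the $p$-norm (valid for all $1\le p\le\infty$, i.e.\ Minkowski's inequality):
\begin{align}
\|(g+f)(x)-(g+f)(y)\|_p
&= \|(g(x)-g(y)) + (f(x)-f(y))\|_p \notag \\
&\le \|g(x)-g(y)\|_p + \|f(x)-f(y)\|_p . \notag
\end{align}

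\textbf{Step 2: bound each piece.} Divide by $\|x-y\|_p$. By the definition of $\mathrm{Lip}_p$ as a supremum, each resulting quotient is bounded by $\mathrm{Lip}_p[g]$ and $\mathrm{Lip}_p[f]$ respectively. This gives
\begin{align}
\frac{\|(g+f)(x)-(g+f)(y)\|_p}{\|x-y\|_p} \le \mathrm{Lip}_p[g] + \mathrm{Lip}_p[f] . \notag
\end{align}

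\textbf{Step 3: take the supremum.} The right-hand side does not depend on $x$ or $y$. Taking the supremum over all admissible pairs on the left therefore yields $\mathrm{Lip}_p[g+f] \le \mathrm{Lip}_p[g]+\mathrm{Lip}_p[f]$.

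There is no real obstacle here; two points need attention.

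\begin{enumerate}
\item The functions must share a codomain so that $g+f$ makes sense. The supremum is taken over the common domain $\mathrm{dom}(g)\cap\mathrm{dom}(f)$. Restricting $g$ and $f$ to this set can only decrease their Lipschitz constants, so the bound still holds with the constants computed on their full domains.
\item The convention for infinite constants should be noted: if either $\mathrm{Lip}_p[g]$ or $\mathrm{Lip}_p[f]$ is $+\infty$, the inequality holds trivially.
\end{enumerate}
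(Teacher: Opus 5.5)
Your proof is correct and takes essentially the same approach as the paper. Both regroup the difference, apply the triangle inequality for $\|\cdot\|_p$, and bound each difference quotient by the corresponding Lipschitz constant; you bound the quotient pointwise before taking the supremum, while the paper splits the supremum directly, and your remarks on the common domain and infinite constants are sound additions.
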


\begin{proof}
   \begin{align}
       \mathrm{Lip}_p \left[ g + f \right] &=
       \sup_{\|x-y\|_p \neq 0} \frac{\|f(x) + g(x) - \left[f(y) + g(y)\right]\|_p}{\|x - y\|_p}  \notag \\
       &=\sup_{\|x-y\|_p \neq 0} \frac{\|f(x) - f(y) + g(x) - g(y)\|_p}{\|x - y\|_p}  \notag \\
       &\leq
       \sup_{\|x-y\|_p \neq 0} \frac{\|f(x) - f(y)\|_p}{\|x - y\|_p}
       +
       \sup_{\|x-y\|_p \neq 0} \frac{\|g(x) - g(y)\|_p}{\|x - y\|_p} \notag \\
       &= \mathrm{Lip}_p \left[ g\right] + \mathrm{Lip}_p \left[ f\right]
       \notag
       .
   \end{align} 
\end{proof}

\begin{proposition}[Lipschitz Constant of Function Concatenation]
\label{prop:lipschitz_concatenation}
Let $f: \mathbb{R}^d \to \mathbb{R}^m$ and $g: \mathbb{R}^d \to \mathbb{R}^n$ be two Lipschitz continuous functions. Let:
\begin{align}
    (f \oplus g)(x) := \begin{bmatrix}
        f(x) \\
        g(x)
    \end{bmatrix} 
    \in \mathbb{R}^{m+n}
\end{align}
be a function by concatenating the outputs of $f$ and $g$. Then, for $1 \leq p<\infty$, the following inequality holds true:
\begin{align}
    \Bigl(\mathrm{Lip}_p\left[f \oplus g\right]\Bigr)^p \leq \Bigl(\mathrm{Lip}_p\left[f\right]\Bigr)^p + \Bigl(\mathrm{Lip}_p\left[g\right]\Bigr)^p 
    .
\end{align}
\end{proposition}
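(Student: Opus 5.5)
The plan is to work directly from the definition of the Lipschitz constant, exploiting the fact that for $1 \le p < \infty$ the $p$-th power of the $\ell_p$-norm is additive over a concatenated vector. For fixed $x \neq y$ in $\mathbb{R}^d$, the difference of the concatenated outputs is the stacked vector of $f(x)-f(y)$ and $g(x)-g(y)$. Since $\|z\|_p^p = \sum_i |z_i|^p$, and the coordinates of the stacked vector split into the first $m$ and the last $n$, we obtain the exact identity
\begin{align}
\|(f\oplus g)(x) - (f\oplus g)(y)\|_p^p = \|f(x)-f(y)\|_p^p + \|g(x)-g(y)\|_p^p \notag .
\end{align}

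Next we bound each summand using the definition of $\mathrm{Lip}_p$, which applies because both $f$ and $g$ are Lipschitz:
\begin{align}
\|f(x)-f(y)\|_p^p \le \bigl(\mathrm{Lip}_p[f]\bigr)^p \|x-y\|_p^p , \notag
\end{align}
and similarly for $g$. Adding the two bounds and dividing by $\|x-y\|_p^p > 0$ shows that, for every pair $x\neq y$,
\begin{align}
\frac{\|(f\oplus g)(x)-(f\oplus g)(y)\|_p^p}{\|x-y\|_p^p} \le \bigl(\mathrm{Lip}_p[f]\bigr)^p + \bigl(\mathrm{Lip}_p[g]\bigr)^p \notag .
\end{align}

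Finally, take the supremum over $x \neq y$. Since $t \mapsto t^p$ is continuous and increasing on $[0,\infty)$, the supremum of the $p$-th powers equals the $p$-th power of the supremum, which is $\bigl(\mathrm{Lip}_p[f\oplus g]\bigr)^p$. This yields the claimed inequality.

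No step is a genuine obstacle. The only point needing care is this last interchange of the supremum with the $p$-th power, which monotonicity handles. It is also worth noting that the bound can be strict, because the two suprema need not be attained at the same pair $(x,y)$. The restriction $p<\infty$ is used only in the additive decomposition of $\|\cdot\|_p^p$. For $p=\infty$ the analogous argument, with a maximum in place of the sum, would instead give $\mathrm{Lip}_\infty[f\oplus g] = \max\bigl(\mathrm{Lip}_\infty[f], \mathrm{Lip}_\infty[g]\bigr)$.
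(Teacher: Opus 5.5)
Your proposal is correct and follows the same route as the paper. You split $\|\cdot\|_p^p$ of the stacked difference into the $f$-block and the $g$-block, bound each by the corresponding $(\mathrm{Lip}_p)^p$ times $\|x-y\|_p^p$, and take the supremum over $x \neq y$; your explicit justification for exchanging the supremum with $t \mapsto t^p$ is a step the paper leaves implicit.
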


\begin{proof}
For any $x,y \in \mathbb{R}^d$, we have:
\begin{align}
    \Bigl\|(f \oplus g)(x) - (f \oplus g)(y)\Bigr\|_p^p
    &= \Biggl\|
        \begin{bmatrix}
            f(x) \\
            g(x)
        \end{bmatrix}
        -
        \begin{bmatrix}
            f(y) \\
            g(y)
        \end{bmatrix}
    \Biggr\|_p^p \notag\\
    &=\sum_{i=1}^{m} \Bigl(|f(x)-f(y)|^p\Bigr)_i + \sum_{j=1}^{n} \Bigl(|g(x)-g(y)|^p\Bigr)_j \notag\\
    &\leq \Biggl(\Bigl(\mathrm{Lip}_p\left[f\right]\Bigr)^p  + \Bigl(\mathrm{Lip}_p\left[g\right]\Bigr)^p \Biggr)\|x-y\|_p^p \notag
    .
\end{align}
Hence:
\begin{align}
   \Bigl(\mathrm{Lip}_p\left[f \oplus g\right]\Bigr)^p &= \sup_{x\neq\,y} \frac{\bigl\|(f \oplus g)(x) - (f \oplus g)(y)\bigr\|_p^p}{\|x-y\|_p^p} \notag \\
   &\leq \Bigl(\mathrm{Lip}_p\left[f\right]\Bigr)^p  + \Bigl(\mathrm{Lip}_p\left[g\right]\Bigr)^p
   \notag
   .
\end{align}
\end{proof}

\subsection{Sub-Differential Convex Functions}
\label{sec:subdifferential}

For a function that is not differentiable but is locally Lipschitz continuous --- such as the ReLU activation --- the Clarke sub-differential provides a generalized notion of gradient that allows one to analyze and bound local Lipschitz constants \citep{clarke1975generalized}.

\begin{definition}[Generalized Gradient]
Let $f : \mathbb{R}^m \to \mathbb{R}$ be a function on $X$.
The sub-differential of $f$ at a point $x$ is the set
\begin{align}
\partial_s f(x)
=
\left\{
g \in \mathbb{R}^m
\;\middle|\;
f(y) \ge f(x) + \langle g, y - x \rangle,\ \forall y \in X
\right\}.    
\end{align}
If $f$ is convex, then:
\begin{align}
\partial_s f(x) \neq \varnothing 
\qquad
\text{and}
\qquad
\partial_s f(x) = \{\nabla f(x^\ast ) \mid \forall~x^\ast  \to x\}
.
\end{align}

\end{definition}

\begin{definition}[Clarke Sub-Gradient]
Let $f: \mathbb{R}^m \to \mathbb{R}$ be a proper convex function on $X$. If $f$ is locally Lipschitz continuous on $X$, then by Rademacher's theorem it is differentiable almost everywhere. The Clarke sub-differential $f$ at a point $x$ is defined as the convex hull of all limit points of gradients at differentiable points approaching $x$. That is, a vector $g$ lies in the Clarke sub-differential at $x$ if there exists a sequence of points $x_k \to x$ ($k \in \mathbb{N}$) at which $f$ is differentiable such that the gradients $\nabla f(x_k)$ converge to $g$. Then the Clarke sub-differential is a convex hull, satisfying:
\begin{align}
    \partial f(x) = \mathrm{conv}\Bigl\{ 
    g \mid \exists\; \text{a sequence}~(x_k) \to x~\text{as}~k\to \infty, \nabla f(x_k) \to g
    \Bigr\}
\end{align}
\citep[Definition 1.1]{clarke1975generalized}. If $f$ is convex, then $\partial f(x) = \partial_s f(x)$. This definition also extends to vector-valued functions \citep{clarke1990optimization}.
\end{definition}

\begin{lemma}[Local Lipschitz Constant of Sub-Differential Function]
Let $f : \mathbb{R}^m \to \mathbb{R}^n$ be a convex function differentiable almost everywhere on $X$.
Then the $p \to q$ local Lipschitz constant of $f$ on $X$ is given by:
\begin{align}
\mathrm{Lip}_{p \to q}[f; X]
=
\sup_{x \in X} \sup_{G \in \partial f(x)}
\sup_{\|v\|_p \neq 0}
\frac{\|G^\top v\|_q}{\|v\|_p}
=
\sup_{x \in X} \sup_{G \in \partial f(x)}
\sup_{\|v\|_p = 1}
\|G^\top v\|_q,
\end{align}
such that:
\begin{align}
\|f(x) - f(y)\|_q \le \mathrm{Lip}_{p \to q}[f; X] \|x - y\|_p,
\end{align}
for all $x, y \in X$ in the sense of Clarke sub-gradient \citep{jordan2020exactly,boyd2022ee364b}.

\end{lemma}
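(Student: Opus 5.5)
I would prove the two directions separately: first that the displayed supremum $L := \sup_{x\in X}\sup_{G\in\partial f(x)}\sup_{\|v\|_p=1}\|G^\top v\|_q$ is a valid Lipschitz bound (the inequality $\|f(x)-f(y)\|_q \le L\|x-y\|_p$), and then that no smaller constant works, so that $L$ is indeed $\mathrm{Lip}_{p\to q}[f;X]$. Throughout I will read $G^\top v$ as the action of the generalized Jacobian on a direction $v$ (so for scalar-valued $f$, $G^\top v=\langle G,v\rangle$). I will also assume $X$ is convex, as in Remark~\ref{remark:domain_convex}, because segments must stay inside the domain.

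For the upper bound I would fix $x,y\in X$ and consider the segment $\gamma(t)=y+t(x-y)$, $t\in[0,1]$. Since $f$ is locally Lipschitz (convexity gives this on the interior), the map $t\mapsto f(\gamma(t))$ is absolutely continuous. By Rademacher's theorem, as recalled in the Clarke sub-gradient definition, $f$ is differentiable almost everywhere. The difficulty is that the segment itself might lie in the null set of non-differentiable points. To avoid this I would use the standard Fubini perturbation: for almost every small shift $w$, the shifted segment $\gamma+w$ meets the non-differentiability set in a set of one-dimensional measure zero. On such a segment I would write
\[
f(x+w)-f(y+w)=\int_0^1 J(t)\,(x-y)\,dt,
\]
where $J(t)$ is the genuine Jacobian at $\gamma(t)+w$. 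Jacobians at differentiable points lie in $\partial f$ by the definition via limits of gradients, so Minkowski's integral inequality gives
\[
\|f(x+w)-f(y+w)\|_q\le \int_0^1\|J(t)(x-y)\|_q\,dt\le L\|x-y\|_p,
\]
after scaling $v=(x-y)/\|x-y\|_p$. Letting $w\to0$ and using continuity of $f$ yields the bound. An alternative route is Clarke's mean value theorem, $f(x)-f(y)\in\overline{\mathrm{conv}}\,\partial f([x,y])(x-y)$, combined with the convexity of the norm to pass the bound from extreme points to the convex hull.

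For tightness I would argue in the other direction. Take any $x$, any $G\in\partial f(x)$ and any unit vector $v$. First consider $G$ equal to a limit of gradients $\nabla f(x_k)$ with $x_k\to x$. At each differentiable point $x_k$, the directional difference quotients $\|f(x_k+hv)-f(x_k)\|_q/h$ converge to $\|\nabla f(x_k)v\|_q$ as $h\to 0$, and each quotient is bounded above by $\mathrm{Lip}_{p\to q}[f;X]$. Passing to the limit in $k$ shows $\|Gv\|_q\le \mathrm{Lip}_{p\to q}[f;X]$ for such limiting $G$. For a general $G$ in the convex hull, the same bound holds because $G\mapsto\|Gv\|_q$ is convex. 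Combining this with the upper bound gives equality.

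The main obstacle is the measure-zero issue in the first step, namely justifying that the line integral can be taken over genuine Jacobians. I would handle it with the Fubini perturbation above, and fall back on citing Clarke's mean value theorem \citep{clarke1990optimization} for the vector-valued case, whose generalized Jacobian needs slightly more care than the scalar one.
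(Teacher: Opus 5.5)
The paper gives no proof of this lemma; it only cites Jordan and Dimakis and Boyd's notes. Your two-direction argument is the standard proof behind that citation. It also uses only local Lipschitz continuity of $f$, never convexity, which matters because the paper later applies the result to ReLU networks, and those are not convex.

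\textbf{Gap: the domain hypothesis.} You add convexity of $X$, but you also need $X$ to be open, and your reverse inequality silently relies on it. You bound each quotient $\|f(x_k+hv)-f(x_k)\|_q/h$ by $\mathrm{Lip}_{p\to q}[f;X]$, which requires $x_k\in X$ and $x_k+hv\in X$. When $x$ is a boundary point of $X$, the differentiable points $x_k\to x$ that generate $G\in\partial f(x)$ may all lie outside $X$. This is not a technicality. Take the convex function $f(x)=\max\{0,10(x-1)\}$ on $\mathbb{R}$ and the convex set $X=[0,1]$. Since $f\equiv 0$ on $X$, we get $\mathrm{Lip}[f;X]=0$. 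But $\partial f(1)=[0,10]$, where the value $10$ comes from $x_k=1+1/k\notin X$, so the right-hand side of the claimed identity is $10$. The equality is therefore false under your hypotheses, and under the paper's, which omit openness as well. No argument can close that step without strengthening them.

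\textbf{Repair.} Take $X$ open and convex.
\begin{itemize}
\item For the reverse inequality: if $x\in X$, the points $x_k$ eventually lie in $X$, and $x_k+hv\in X$ for small $h>0$. Your difference-quotient, limit, and convex-hull argument then goes through unchanged. This direction needs only openness.
\item For your Fubini step: the Jacobians $J(t)$ are evaluated on the shifted segment $[y+w,x+w]$. They are controlled by the supremum over points of $X$ only if that segment lies in $X$. For open $X$ this holds once $\|w\|_2$ is smaller than the distance from the compact segment $[y,x]$ to $\mathbb{R}^m\setminus X$.
\item If you want only the upper bound for an arbitrary convex $X$, your Clarke mean-value alternative is cleaner. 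It only involves $\partial f$ at points of $[x,y]\subseteq X$.
\end{itemize}
In short, convexity of $X$ is what the upper-bound direction needs, and openness is what the equality needs.
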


\subsection{Lipschitz Continuity of Activation Functions}
\label{sec:activation_functions}

An activation function:
\begin{align}
    \rho: \mathbb{R}^d \to \mathbb{R}^d \notag
\end{align}
is a nonlinear mapping applied after neuron's output at a layer, enabling neural networks to learn complex patterns. For example, if $\sigma$ is piecewise defined as:
\begin{align*}
    \rho(x) =
\begin{cases}
x, & \text{if } x > 0, \\
0, & \text{otherwise}
\end{cases}
,
\end{align*}
then the $\rho$ is referred to as \emph{ReLU} activation function \citep{nair2010relu}.

The Lipschitz constants of commonly used activation functions are provided in Table~\ref{tab:lipschitz_activations}. The constants are derived as the supremum of the $2$-norm of the derivatives of activation functions. Proofs are provided in the Appendix~\ref{appendix:proofs_lipschitz_constants_of_activation_functions}. To numerically validate the results, we have conducted experiments by maximizing the gradient norm using Lemma~\ref{lemma:lipschitz_bounds_gradient} for a further sanity check.

\textbf{Notes.} The Lipschitz constant for the \emph{softmax} function reported in the literature~\citep{gouk2021regularisation} and \emph{Sigmoid} function reported in the literature~\citep{virmaux2018lipschitz} as 1. \textcolor{black}{We show in Appendix~\ref{appendix:proofs_lipschitz_constants_of_activation_functions:Softmax} that the exact Lipschitz constant of \emph{softmax} is indeed $\frac{1}{2}$, validated by our numerical experiment.} This result has been reported in recent concurrent literature \citep{nair2025softmax}. We also show in Appendix~\ref{appendix:proofs_lipschitz_constants_of_activation_functions:Sigmoid} that exact Lipschitz constant of \emph{sigmoid} is indeed $\frac{1}{4}$, validated by our numerical experiment. For \emph{Leaky ReLU} and \emph{ELU}, the parameter $\alpha$ is typically chosen with $\alpha < 1$, so their effective Lipschitz constants are usually $1$ in practice.

\textbf{Auto-Differentiation based Numerical Method.} Suppose $f: \mathbb{R}^{d} \to \mathbb{R}$ is a first- and second-order differentiable neural network under domain convex condition (Remark~\ref{remark:domain_convex}). Leveraging the autograd mechanism of PyTorch, we use gradient descent --- \eg, Adam \citep{diederik2014adam} --- to numerically approximate the supremum:
\begin{align}
    \mathrm{Lip}[f] = \sup_x \|f'(x)\|_2 \notag
    ,
\end{align}
using Lemma~\ref{lemma:lipschitz_bounds_gradient}.

\begin{table}[H]
\centering

\caption{Lipschitz constants of activation functions.}

\label{tab:lipschitz_activations}
\renewcommand{\arraystretch}{1.2}

\adjustbox{max width=0.9\textwidth}{%
\begin{tabular}{|l|c|c|p{6.2cm}|}
\hline
\textbf{Activation} & \textbf{Definition} & \textbf{Lipschitz Constant \& Proof} & \textbf{Citations} \\
\hline
ReLU & $ \max(0, x) $ & $ 1 $ &\citet[Sec.~2]{nair2010relu}; \citet[Sec.~5]{virmaux2018lipschitz} \\
\hline
Leaky ReLU & $ \max(\alpha x, x) $ & $ \max(1, \alpha) $ & \citet[Sec.~2]{maas2013leakyrelu}; \citet[Sec.~5]{virmaux2018lipschitz} \\
\hline
Sigmoid & $ \frac{1}{1 + e^{-x}} $ & $ \frac{1}{4} $ (Appendix~\ref{appendix:proofs_lipschitz_constants_of_activation_functions:Sigmoid}) & \citet[Sec.~5]{virmaux2018lipschitz} \\
\hline
Tanh & $ \tanh(x) $ & $ 1 $ (Appendix~\ref{appendix:proofs_lipschitz_constants_of_activation_functions:Tanh}) &\citet{cybenko1989approximation}; \citet[Sec.~5]{virmaux2018lipschitz} \\
\hline
Softplus & $ \log(1 + e^x) $ & $ 1 $ (Appendix~\ref{appendix:proofs_lipschitz_constants_of_activation_functions:Softplus}) &\citet[Sec.~2]{dugas2001softplus}; \citet[Sec.~5]{virmaux2018lipschitz} \\
\hline
ELU & $ \begin{cases} x & x > 0 \\ \alpha (e^x - 1) & x \leq 0 \end{cases} $ & $ \max(1, \alpha) $ &\citet[Sec.~3]{clevert2015elu}\\
\hline
Swish & $ x \cdot \sigma(x) $ & $ \approx 1.1 $ (Appendix~\ref{appendix:proofs_lipschitz_constants_of_activation_functions:Swish}) &\citet[Sec.~1]{ramachandran2017searching} \\
\hline
GELU & $ \frac{x}{2}\left(1+\mathrm{erf}\left(\frac{x}{\sqrt{2}}\right)\right) $ & $ \approx 1.13 $ (Appendix~\ref{appendix:proofs_lipschitz_constants_of_activation_functions:GELU}) &\citet[Sec.~2]{hendrycks2016gelu} \\
\hline
Softmax & $ \text{softmax}(x_i) = \frac{e^{x_i}}{\sum_j e^{x_j}} $ & $\frac{1}{2}$ (Appendix~\ref{appendix:proofs_lipschitz_constants_of_activation_functions:Softmax}) &\citet[Sec.~3.3]{gouk2021regularisation}\\
\hline
\end{tabular}
}
\end{table}

\subsection{Lipschitz Continuity of Dot-Product Self-Attention}
\label{sec:selfattention_lipschitz}

LLMs \citep{brown2020language, chowdhery2022palm, touvron2023llama,openai2023gpt4,deepseekai2025deepseekr1}, built on transformer architectures \citep{vaswani2017attention}, rely heavily on dot-product self-attention mechanisms, whose Lipschitz continuity governs their sensitivity to input perturbations, such as token-level adversarial attacks. Dot-product self-attention mechanisms form the core of transformer architectures, enabling effective modeling of long-range dependencies in sequences for applications such as natural language processing and vision-language tasks. 

\paragraph{Single-Head Dot-Product Self-Attention.} For an input sequence matrix $x \in \mathbb{R}^{n \times d}$, where $n$ is the sequence length and $d$ is the input dimension, a self-attention layer computes the output as
\begin{align}
\text{Attention}(x) = 
\underbrace{
\mathrm{softmax}\left( \frac{Q K^{\top}}{\sqrt{d_K}} \right)
}_{\text{row-wise softmax}}
V \in \mathbb{R}^{n \times m} 
,\label{eq:shdp_attention}
\end{align}
\citep[Equation~1]{vaswani2017attention} where
\begin{align}
Q = x W_Q \in \mathbb{R}^{n \times m}, \quad K = x W_K \in \mathbb{R}^{n \times m}, \quad V = x W_V \in \mathbb{R}^{n \times m} \label{eq:KQV_def}
,
\end{align}
are the \emph{query}, \emph{key}, and \emph{value} matrices, with weight matrices:
\begin{align}
W_Q, W_K, W_V \in \mathbb{R}^{d \times d_K}\notag
,    
\end{align}
and $d_K$ denotes the key (and query) vector dimension. The softmax is applied row-wise to normalize attention scores. The scaling factor $\frac{1}{\sqrt{d_K}}$ stabilizes the computation numerically. 

\paragraph{Multi-Head Dot-Product Self-Attention.} 
In practice, transformers employ $h$ parallel self-attention heads to jointly attend to information from different representation subspaces \citep{vaswani2017attention}. Each head $i$ has its own projection matrices:
\begin{align}
W_{Q,i}, W_{K,i}, W_{V,i} \in \mathbb{R}^{d \times m_i}\notag
,   
\end{align}
where $m_i = \frac{m}{h}$, $h$ denotes the number of heads, and produces an output for head $i$:
\begin{align}
\mathrm{Attention}_i(x) \in \mathbb{R}^{n \times m_i}
.
\end{align}
The outputs are concatenated:
\begin{align}
\mathrm{Attention}(x) =\bigl[
\mathrm{Attention}_1(x),\mathrm{Attention}_2(x), \cdots, \mathrm{Attention}_h(x)
\bigr]    \in \mathbb{R}^{n \times m} 
,
\end{align}
produces the final output.

\paragraph{Locally Lipschitz Continuous.} Standard dot-product self-attention is locally Lipschitz continuous, and several works have derived explicit upper bounds for its local Lipschitz constant. Let $\mathcal{B}(x_0,\delta)=\{x \mid \|x-x_0\|_2 < \delta \}$ denote a ball centered $x_0$ with a radius $\delta$. \citeauthor{hu2024specformer} show that for the dot-product self-attention layer within a ball $\mathcal{B}_2(x,\delta)$ is locally Lipschitz continuous bounded by:
\begin{align}
    \mathrm{Lip}\Bigl[\mathrm{Attention}; \mathcal{B}_2(x,\delta)\Bigr] 
    \leq
    n(n+1)\Bigl(\|x\|_2 + \delta\Bigr)^2
    \Bigl[
    \|W_V\|_2
    \|W_Q\|_2
    \|W_K^\top\|_2
    +
    \|W_V\|_2
    \Bigr] 
    ,
\end{align}
where $x \in \mathbb{R}^{n \times d}$, $n$ is the sequence length, $d$ is the input dimension, $W_K, W_Q, W_V \in \mathbb{R}^{d \times d_K}$ are the \emph{key}, \emph{query} and \emph{value} parameter matrices, and $\delta$ is a $2$-norm radius \citep[Theorem~2]{hu2024specformer}. \citeauthor{castin2023smooth} provide a tighter bound by $\sqrt{n}$ up to a constant factor, showing that within a ball $\mathcal{B}_2(0, \delta)$, the Lipschitz constant of dot-product self-attention is bounded by:
\begin{align}
    \mathrm{Lip}\Bigl[\mathrm{Attention}; \mathcal{B}_2(0,\delta)\Bigr] 
    \leq
    \sqrt{3} \|W_V\|_2 
    \Biggl[
    \Bigl\|\frac{W_K^\top\,W_Q}{\sqrt{d_K}}\Bigr\|_2^2
    \delta^4
    (4n+1)+n
    \Biggr]^{\frac{1}{2}} 
    ,
\end{align}
where $A$ is the attention score matrix \citep[Theorem~3.3]{castin2023smooth}. 

Most recently, \citet{yudin2025pay} refine the bound by explicitly incorporating the Jacobian of the softmax function, given by:
\begin{align}
\mathcal{M}(P_x) := \nabla \, \mathrm{softmax}\left(x\right) = 
\mathrm{diag}(P_x) - P_x P_x^\top 
,
\end{align}
where:
\begin{align}
    P_x =
    \frac{\mathrm{e}^{x_i}}{\sum_j \mathrm{e}^{x_j}}
    \notag
    ,
\end{align}
showing that the Lipschitz constant for a single head is bounded by:
\begin{align}
    \mathrm{Lip}\Bigl[\mathrm{Attention}\Bigr] 
    \leq
    \|W_V\|_2
    \Biggl(
    \|P\|_2 + 2 \|x\|_2^2 \; \|A\|_2 \; \max_i \|\mathcal{M}(P_{i, :})\|_2
    \Biggr)
\end{align}
with:
\begin{align}
    P = \mathrm{softmax}\Bigl(
    x
    A
    x^\top
    \Bigr) \in \mathbb{R}^{n \times n} \qquad \text{and} \qquad A = \frac{W_QW_K^\top}{\sqrt{d_K}} \in \mathbb{R}^{d \times d} \notag
\end{align}
\citep[Theorem~3]{yudin2025pay}.

\paragraph{Globally Non-Lipschitz Continuous.} It is worth noting that the Lipschitz constants of both single-head and multi-head dot-product self-attention are usually not globally bounded for arbitrary inputs. This is because the Lipschitz bound for dot-product self-attention grows with the sequence length and input norm itself, implying that the dot-product self-attention is not globally Lipschitz continuous \citep[Theorem~3.1]{kim2021lipschitz}.

\subsection{Lipschitz Continuity of Neural Networks}
\label{sec:network_lipschitz}

\begin{definition}[Feedforward Network (Compositional Definition)]
Let $f: \mathbb{R}^m \to \mathbb{R}^n$ denote a feedforward neural network consisting of $L$ layers, defined as the composition:
\begin{align}
    f := h^{(L)} \circ \cdots \circ h^{(2)} \circ h^{(1)}, \notag
\end{align}
where each layer $h^{(\ell)}$ is composed of a linear transformation $\phi^{(\ell)}(\cdot)$ followed by a non-linear activation function $\rho^{(\ell)}(\cdot)$:
\begin{align}
    h^{(\ell)} := \rho^{(\ell)} \circ \phi^{(\ell)}. \notag
\end{align}
\end{definition}

\begin{proposition}[Lipschitz Constant of a Linear Layer]
\label{prop:lipschitz_constant_of_linear_unit}
Let $\phi^{(\ell)}: \mathbb{R}^{m_{\ell}} \to \mathbb{R}^{n_{\ell}}$ denote a linear transformation implemented either as a convolutional layer:
\begin{align}
    \phi^{(\ell)}(z^{(\ell-1)}) = \boldsymbol{\theta}^{(\ell)} \circledast z^{(\ell-1)} + b^{(\ell)}, \notag
\end{align}
or as a fully connected (dense) layer:
\begin{align}
    \phi^{(\ell)}(z^{(\ell-1)}) = \boldsymbol{\theta}^{(\ell)} z^{(\ell-1)} + b^{(\ell)},\notag
\end{align}
where $\boldsymbol{\theta}^{(\ell)}$ is the weight matrix (or convolution kernel), $b^{(\ell)}$ is the bias term, $\circledast$ denotes the convolution operator, and $z^{(\ell-1)}$ is the output from the previous layer. The Lipschitz constant of $\phi^{(\ell)}$ with respect to the Euclidean norm is given by:
\begin{align}
    \mathrm{Lip}[\phi^{(\ell)}] = \| \boldsymbol{\theta}^{(\ell)} \|_2,\notag
\end{align}
where $\| \boldsymbol{\theta}^{(\ell)} \|_2$ denotes the spectral-norm of the associated linear operator.
\end{proposition}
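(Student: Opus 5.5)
The plan is to reduce both cases to a single statement about affine maps of the form $z \mapsto T z + b$, where $T$ is a linear operator on a finite-dimensional space. For the dense layer, $T$ is the matrix $\boldsymbol{\theta}^{(\ell)}$ itself. For the convolutional layer, I will first note that for a fixed kernel, $z \mapsto \boldsymbol{\theta}^{(\ell)} \circledast z$ is linear in $z$. Once the input and output feature maps are flattened into vectors in $\mathbb{R}^{m_\ell}$ and $\mathbb{R}^{n_\ell}$, this map is therefore represented by a matrix $T_{\boldsymbol{\theta}^{(\ell)}}$, a doubly block-Toeplitz or block-circulant matrix depending on the padding. The symbol $\|\boldsymbol{\theta}^{(\ell)}\|_2$ in the statement is then read, as the statement indicates, as the spectral norm of this associated operator $T$.

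The key step is that the bias cancels in differences. For all $u \neq v$,
\begin{align}
\frac{\|\phi^{(\ell)}(u)-\phi^{(\ell)}(v)\|_2}{\|u-v\|_2} = \frac{\|T(u-v)\|_2}{\|u-v\|_2}. \notag
\end{align}
Taking the supremum over $u\neq v$ is the same as taking it over $w = u-v \neq 0$, because every nonzero $w$ arises, for example with $v=0$. This gives
\begin{align}
\mathrm{Lip}[\phi^{(\ell)}] = \sup_{w\neq 0}\frac{\|Tw\|_2}{\|w\|_2} = \|T\|_2, \notag
\end{align}
which is exactly the definition of the induced $2$-norm. To make the link with the spectral norm explicit, I will write $\|Tw\|_2^2 = w^\top T^\top T w \le \lambda_{\max}(T^\top T)\|w\|_2^2$. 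Equality holds at a top eigenvector of $T^\top T$, so $\|T\|_2 = \sigma_{\max}(T)$, mirroring the computation in Proposition~\ref{prop:column_orthogonal_matrix}. As a cross-check, $\phi^{(\ell)}$ is differentiable with constant Jacobian $T$ on the convex domain $\mathbb{R}^{m_\ell}$. Lemma~\ref{lemma:lipschitz_bounds_gradient}, applied to each scalar projection $x\mapsto \langle e, \phi^{(\ell)}(x)\rangle$ with $\|e\|_2=1$, then gives the same value after taking the supremum over $e$.

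The main obstacle is interpretive rather than computational, and it lies in the convolutional case. There one must be clear that $\|\boldsymbol{\theta}^{(\ell)}\|_2$ means the operator norm of the unrolled convolution, not the spectral norm of the reshaped kernel tensor. The latter can differ from the operator norm by factors depending on the kernel size. I would justify the unrolling by checking linearity of $\circledast$ in its second argument, which follows directly from its definition as a finite sum of products. For circular padding, one can also give the exact value: diagonalizing by the 2D DFT yields $\|T_{\boldsymbol{\theta}}\|_2 = \max_\omega \sigma_{\max}(\hat{\boldsymbol{\theta}}(\omega))$. This refinement is not needed for the equality claimed in the statement.
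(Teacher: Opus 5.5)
Your proof is correct: cancelling the bias reduces $\mathrm{Lip}[\phi^{(\ell)}]$ to the induced $2$-norm of the linear part, and the Rayleigh-quotient bound on $T^\top T$ identifies that norm with $\sigma_{\max}(T)$. This is exactly the standard fact the paper relies on; the paper states this proposition without proof and defers $\|\boldsymbol{\theta}\|_2=\sigma_1$ to Proposition~\ref{prop:spectralnorm_bound_single_layer}, citing Horn and Johnson. Your handling of the convolutional case is also more explicit than the paper's: you read $\|\boldsymbol{\theta}^{(\ell)}\|_2$ as the norm of the unrolled operator rather than of the reshaped kernel, whereas the paper only signals this through the phrase ``associated linear operator''.
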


\begin{proposition}[Spectral-Norm via Truncated Singular Value Decomposition]
\label{prop:spectralnorm_bound_single_layer}
Let $\boldsymbol{\theta} \in \mathbb{R}^{n \times m}$ be a matrix, admitting a truncated singular value decomposition (SVD):
\begin{align}
\boldsymbol{\theta} = \sum_{i=1}^{r} \sigma_i \boldsymbol{u}_i \boldsymbol{v}_i^\top,
\quad \text{with} \quad \sigma_1 > \sigma_2 > \cdots > \sigma_r > 0,
\quad r = \operatorname{rank}(\boldsymbol{\theta}), \notag
\end{align}
where $\sigma_i$ are the singular values, and $\boldsymbol{u}_i$, $\boldsymbol{v}_i$ are the left and right singular vectors, respectively. Then, the spectral-norm of $\boldsymbol{\theta}$ (\ie, its operator norm induced by the Euclidean norm) is given by the largest singular value:
\begin{align}
    \| \boldsymbol{\theta} \|_2 = \sigma_1 \notag
\end{align}
 \citep[Example 5.6.6, \S~5]{horn2012matrix}.
\end{proposition}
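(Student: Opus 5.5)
We prove $\|\boldsymbol{\theta}\|_2 = \sigma_1$ by computing the operator norm $\sup_{\|x\|_2=1}\|\boldsymbol{\theta} x\|_2$ directly from the truncated SVD. We first derive an upper bound valid for every unit vector, and then show that the right singular vector $\boldsymbol{v}_1$ attains it.

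\textbf{Setup.} The SVD in the statement comes with orthonormal families $\{\boldsymbol{u}_i\}_{i=1}^r \subset \mathbb{R}^n$ and $\{\boldsymbol{v}_i\}_{i=1}^r \subset \mathbb{R}^m$. We extend $\{\boldsymbol{v}_i\}$ to an orthonormal basis $\{\boldsymbol{v}_i\}_{i=1}^m$ of $\mathbb{R}^m$. Any $x \in \mathbb{R}^m$ can then be written as
\begin{align}
x = \sum_{i=1}^m c_i \boldsymbol{v}_i, \qquad c_i = \boldsymbol{v}_i^\top x, \qquad \|x\|_2^2 = \sum_{i=1}^m c_i^2 \notag
\end{align}
by Parseval's identity.

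\textbf{Upper bound.} Since $\boldsymbol{v}_j^\top x = c_j$, we get $\boldsymbol{\theta} x = \sum_{i=1}^r \sigma_i c_i \boldsymbol{u}_i$. The $\boldsymbol{u}_i$ are orthonormal, so
\begin{align}
\|\boldsymbol{\theta} x\|_2^2 = \sum_{i=1}^r \sigma_i^2 c_i^2 \le \sigma_1^2 \sum_{i=1}^m c_i^2 = \sigma_1^2 \|x\|_2^2, \notag
\end{align}
where the inequality uses the ordering $\sigma_1 \ge \sigma_i$ for all $i$. Hence $\|\boldsymbol{\theta}\|_2 \le \sigma_1$.

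\textbf{Attainment.} Take $x = \boldsymbol{v}_1$. By orthonormality, $\boldsymbol{\theta}\boldsymbol{v}_1 = \sigma_1 \boldsymbol{u}_1$, so $\|\boldsymbol{\theta}\boldsymbol{v}_1\|_2 = \sigma_1$ with $\|\boldsymbol{v}_1\|_2 = 1$. Combined with the upper bound, this gives $\|\boldsymbol{\theta}\|_2 = \sigma_1$.

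\textbf{Alternative route.} One can also argue through $\|\boldsymbol{\theta}\|_2^2 = \lambda_{\max}(\boldsymbol{\theta}^\top\boldsymbol{\theta})$, as in the proof of Proposition~\ref{prop:column_orthogonal_matrix}. From the SVD,
\begin{align}
\boldsymbol{\theta}^\top\boldsymbol{\theta} = \sum_{i=1}^r \sigma_i^2 \boldsymbol{v}_i \boldsymbol{v}_i^\top, \notag
\end{align}
whose largest eigenvalue is $\sigma_1^2$ by the Rayleigh quotient. The direct argument above avoids having to justify that identity separately.

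\textbf{Remarks.} The argument is elementary and no step is a genuine obstacle. The only point needing care is that the $\boldsymbol{v}_i$ need not span $\mathbb{R}^m$ when $r<m$. Components of $x$ in $\ker(\boldsymbol{\theta})$ contribute nothing to $\|\boldsymbol{\theta} x\|_2$ but still count in $\|x\|_2$, so they only help the inequality. We also note that the strict ordering of the singular values is not needed; $\sigma_1 \ge \sigma_i$ suffices.
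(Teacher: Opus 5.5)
Your proof is correct and complete. The paper gives no argument of its own for this proposition; it only cites Horn and Johnson (Example 5.6.6). Your direct computation in the right-singular basis therefore serves as a self-contained replacement rather than a variant of an in-paper proof.

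The standard textbook route is what you list as your alternative. It is also the one the paper relies on without justification in Case~2 of Proposition~\ref{prop:column_orthogonal_matrix}. One writes $\|\boldsymbol{\theta}x\|_2^2 = x^\top\boldsymbol{\theta}^\top\boldsymbol{\theta}x$ and applies Rayleigh--Ritz to $\boldsymbol{\theta}^\top\boldsymbol{\theta}=\sum_{i}\sigma_i^2\boldsymbol{v}_i\boldsymbol{v}_i^\top$. That route is shorter once the spectral theorem is available, and it directly yields the identity $\|\boldsymbol{\theta}\|_2^2=\lambda_{\max}(\boldsymbol{\theta}^\top\boldsymbol{\theta})$ used elsewhere in the paper.

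Your route needs only orthonormality and Parseval. It handles the rectangular, rank-deficient case explicitly through the completion of $\{\boldsymbol{v}_i\}$. Combined with reading the eigenvalues of $\boldsymbol{\theta}^\top\boldsymbol{\theta}$ off its SVD, it also recovers that identity.

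Your remark that $\sigma_1\ge\sigma_i$ suffices is worth keeping. The stated hypothesis $\sigma_1>\cdots>\sigma_r$ would exclude, for instance, the semi-orthogonal matrices of Proposition~\ref{prop:column_orthogonal_matrix} of rank at least $2$, whose singular values are all equal to $1$. Your argument covers them without change.
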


\begin{proposition}[Lipschitz Constant Upper Bound of Feedforward Neural Network]
\label{prop:survey_upper_bound_lipschitz}
Let $f: \mathbb{R}^m \to \mathbb{R}^n$ be an $L$-layer feedforward neural network composed of linear transformations $\phi^{(\ell)}$ and activation functions $\rho^{(\ell)}$. The Lipschitz constant of $f$ with respect to the Euclidean norm is defined as:
\begin{align}
\mathrm{Lip}[f] = \sup_{\forall~u \neq v} \frac{\|f(u) - f(v)\|_2}{\|u - v\|_2},
\end{align}
which immediately admits an upper bound by applying Proposition~\ref{prop:lipschitz_composition}~(\nameref{prop:lipschitz_composition}):
\begin{align}
    \mathrm{Lip}[f] 
    \leq 
    \prod_{\ell=1}^{L} \; \mathrm{Lip}[\rho^{(\ell)}] 
    \;
    \prod_{\ell=1}^{L} \; \mathrm{Lip}[\phi^{(\ell)}]
\end{align}
\citep[Proposition 8, \S~5]{luo2025lipschitz}. 
\end{proposition}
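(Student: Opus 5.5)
The plan is to write $f$ as an alternating composition of $2L$ maps and then peel them off one at a time using Proposition~\ref{prop:lipschitz_composition}~(\nameref{prop:lipschitz_composition}). By the compositional definition of a feedforward network,
\begin{align}
f = \rho^{(L)} \circ \phi^{(L)} \circ \rho^{(L-1)} \circ \phi^{(L-1)} \circ \cdots \circ \rho^{(1)} \circ \phi^{(1)} \notag
.
\end{align}
I will first record that the quantity defined in the statement is exactly $\mathrm{Lip}_2[f]$ in the sense of Definition~\ref{def:lipschitz_definition}, namely the smallest admissible $K$. This is the same supremum form used inside the proof of Proposition~\ref{prop:lipschitz_composition}.

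The core step is an induction on the number of composed maps. Let $g_k$ denote the composition of the first $k$ of the $2L$ maps, taken in the order listed above (innermost first). The base case is trivial: $g_1 = \phi^{(1)}$. For the inductive step, write $g_{k+1} = \psi \circ g_k$, where $\psi$ is either some $\phi^{(\ell)}$ or some $\rho^{(\ell)}$. Proposition~\ref{prop:lipschitz_composition} then gives $\mathrm{Lip}[g_{k+1}] \le \mathrm{Lip}[\psi]\,\mathrm{Lip}[g_k]$. Unrolling this recursion bounds $\mathrm{Lip}[f]$ by the product of all $2L$ individual constants. Since these are nonnegative scalars, I can regroup the product into $\prod_\ell \mathrm{Lip}[\rho^{(\ell)}]\prod_\ell \mathrm{Lip}[\phi^{(\ell)}]$ by commutativity. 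Optionally, Proposition~\ref{prop:lipschitz_constant_of_linear_unit} lets me replace each $\mathrm{Lip}[\phi^{(\ell)}]$ by $\|\boldsymbol{\theta}^{(\ell)}\|_2$.

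The main obstacle is not the algebra but the hypotheses needed to apply Proposition~\ref{prop:lipschitz_composition}.
\begin{itemize}
\item \textbf{Boundedness.} Proposition~\ref{prop:lipschitz_composition} is stated for ``bounded'' functions, but linear layers are not bounded. Its proof only uses the inequality $\|g(a)-g(b)\| \le \mathrm{Lip}[g]\|a-b\|$ for every pair in the range of $f$. I would therefore point out that what is actually required is that each map is Lipschitz on the whole of its codomain, not that it is bounded. This holds for affine maps with finite spectral norm and for the activations listed in Table~\ref{tab:lipschitz_activations}.
\item \textbf{Degenerate pairs.} When $f(x)=f(y)$ at an inner stage, the ratio used in the proof of Proposition~\ref{prop:lipschitz_composition} becomes $0/0$. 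I would handle this by arguing directly with the pointwise inequality rather than with ratios. In that form the degenerate case gives $0 \le$ (anything nonnegative) and is automatic.
\item \textbf{Infinite constants.} If any factor is $+\infty$, the bound holds trivially.
\end{itemize}
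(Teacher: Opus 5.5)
Your proposal is correct and follows the paper's own argument. The paper gives no proof beyond invoking Proposition~\ref{prop:lipschitz_composition} repeatedly on $f=\rho^{(L)}\circ\phi^{(L)}\circ\cdots\circ\rho^{(1)}\circ\phi^{(1)}$ and regrouping the $2L$ factors, which is exactly what your induction spells out. Your side remarks are sound: the composition bound only needs each map to be Lipschitz on its domain (the ``bounded'' hypothesis would in fact exclude affine layers), and the pointwise inequality $\|g(a)-g(b)\|_2\le\mathrm{Lip}[g]\,\|a-b\|_2$ is the step the paper's proof of Proposition~\ref{prop:lipschitz_composition} already uses implicitly, so no $0/0$ issue arises there.
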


\begin{corollary}[Spectral-Norm Upper Bound of Feedforward Neural Network]
For an $\ell$-layer feedforward network $f$ with only linear or convolutional layers and $1$-Lipschitz activation functions, combining
Proposition~\ref{prop:lipschitz_constant_of_linear_unit} and Proposition~\ref{prop:survey_upper_bound_lipschitz} immediately yields a spectral-norm product bound:
\begin{align}
    \mathrm{Lip}\left[f\right] \leq \prod_{\ell=1}^{L} \| \theta^{(\ell)} \|_2
    ,
\end{align}
where $\theta^{(\ell)}$ is the $\ell$-th layer parameter matrix. 
\end{corollary}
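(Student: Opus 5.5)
The plan is to obtain the bound as a direct specialization of Proposition~\ref{prop:survey_upper_bound_lipschitz}. That proposition already gives
\begin{align}
\mathrm{Lip}[f] \leq \prod_{\ell=1}^{L} \mathrm{Lip}[\rho^{(\ell)}] \; \prod_{\ell=1}^{L} \mathrm{Lip}[\phi^{(\ell)}] \notag
.
\end{align}
The two product factors can then be handled separately. Recall that Proposition~\ref{prop:survey_upper_bound_lipschitz} comes from writing $f = \rho^{(L)}\circ\phi^{(L)}\circ\cdots\circ\rho^{(1)}\circ\phi^{(1)}$ and applying Proposition~\ref{prop:lipschitz_composition} repeatedly, by induction on $L$. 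So nothing new is needed about how the layers are interleaved.

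\textbf{Activation factor.} By hypothesis each $\rho^{(\ell)}$ is $1$-Lipschitz with respect to the $2$-norm, so $\mathrm{Lip}[\rho^{(\ell)}] \le 1$. The first product is therefore at most $1$. When a layer has no activation (for instance, a purely linear final layer), I treat $\rho^{(\ell)}$ as the identity, which has Lipschitz constant exactly $1$. This keeps the compositional definition intact.

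\textbf{Linear factor.} For each layer, fully connected or convolutional, Proposition~\ref{prop:lipschitz_constant_of_linear_unit} gives $\mathrm{Lip}[\phi^{(\ell)}] = \|\theta^{(\ell)}\|_2$. Here the spectral norm is understood as that of the associated linear operator; for a convolution this is its (doubly block-Toeplitz) matrix representation. The bias $b^{(\ell)}$ cancels in the difference $\phi^{(\ell)}(u)-\phi^{(\ell)}(v)$, so it plays no role. Substituting both facts into the product gives $\mathrm{Lip}[f] \le 1\cdot\prod_{\ell=1}^{L}\|\theta^{(\ell)}\|_2$. Proposition~\ref{prop:spectralnorm_bound_single_layer} optionally identifies each factor with the top singular value $\sigma_1(\theta^{(\ell)})$.

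\textbf{Main obstacle.} The only step that is not pure bookkeeping is the meaning of $\|\theta^{(\ell)}\|_2$ for convolutional layers. This spectral norm is the operator norm of the convolution viewed as a linear map on flattened inputs, not the norm of the kernel tensor. I would state this explicitly and rely on Proposition~\ref{prop:lipschitz_constant_of_linear_unit} as given, rather than re-derive it. I would also note the index mismatch in the statement (``$\ell$-layer'' versus the product up to $L$) and read both as $L$.
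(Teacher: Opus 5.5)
Your proposal is correct and follows the paper's own reasoning: you substitute $\mathrm{Lip}[\rho^{(\ell)}]\le 1$ and $\mathrm{Lip}[\phi^{(\ell)}]=\|\theta^{(\ell)}\|_2$ (Proposition~\ref{prop:lipschitz_constant_of_linear_unit}) into the product bound of Proposition~\ref{prop:survey_upper_bound_lipschitz}, which is exactly what the paper means by ``immediately yields.'' Your remarks that the bias cancels, that $\|\theta^{(\ell)}\|_2$ is the operator norm for convolutions, and that the $\ell$ versus $L$ index is a slip are sound clarifications the paper leaves implicit.
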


\subsubsection{Lipschitz Continuity for a Directed Acyclic Graph (DAG)}

\begin{figure}[t!]
  \centering
  \begin{tikzpicture}[
    ->, 
    node distance=1.8cm and 2cm,
    every node/.style={circle, draw, minimum size=10mm},
    shorten >=1pt]

\node (s)  at (0, 0)       {$s$};
\node (u1) at (2, 2.5)     {$u_1$};
\node (u2) at (2, 0.8)     {$u_2$};
\node (u3) at (2, -0.8)    {$u_3$};
\node (v)  at (4, 0)       {$v$};
\node (t)  at (6, 0)       {$t$};

\draw (s) to[out=90, in=180] (u1);
\draw (s) to[out=45, in=180] (u2);
\draw (s) to[out=-45, in=180] (u3);

\draw (u1) to[out=0, in=90] (v);
\draw (u2) to[out=0, in=130] (v);
\draw (u3) to[out=0, in=220] (v);

\draw (v) -- (t);

\draw (s) to[out=-90, in=-90] (t);

\end{tikzpicture}

  \caption[Example of Feedforward DAG Network]{Example of Feedforward DAG Network. There are four computational paths: $s\to u_1 \to v \to t$, $s\to u_2 \to v \to t$, $s\to u_3 \to v \to t$, and $s\to t$. A node is a module in the DAG neural network $f$.}
  
  \label{fig:dag_network}
\end{figure}

To the best of our knowledge, the existing literature does not present an explicit Lipschitz bound for a general feedforward network with skip connections. Prior literature such as \citep{yoshida2017spectral,virmaux2018lipschitz} focuses strictly on sequential architectures, while norm-based capacity measures such as the path norm \citep{neyshabur2015norm} implicitly involve path enumeration but are not formulated as direct Lipschitz bounds. More recent path-metric bounds \citep{gonon2024_rescaling_invariant} apply to arbitrary DAGs but are expressed in a rescaling-invariant metric form rather than the simple sum-over-paths structure we consider. 

Figure \ref{fig:dag_network} provides an illustrative example of a DAG network. To complement the survey, we derive an explicit bound in Theorem \ref{thm:path_lipschitz_general} using graph-theoretic method. To the best of our knowledge, this result has not previously appeared in the literature.

\begin{theorem}[Lipschitz Bound for DAG Network]
\label{thm:path_lipschitz_general}
Let $G=(V,E)$ be a finite directed acyclic graph (DAG) representing a feedforward neural network with unique input node $s\in V$ and output node $t\in V$.  
Each node $v \in V$ is a module $h_v$ that is Lipschitz continuous with constant $\mathrm{Lip}[h_v]$.  
An edge $(u \to v) \in E$ represents the computation of $h_v$ immediately after $h_u$. Let $x_{v_i}: x \to x_{v_i}(x)$ represent the output at a node $v_i \in V$. The network is evaluated additively along incoming edges:
\begin{align}
\begin{cases}
x_s(x) = x  &\\
x_v(x) = \sum_{(u\to v)\in E} h_v\!\big(x_u(x)\big), &\text{if} \quad v\neq s 
\end{cases}
.
\end{align}

A \emph{computational path} $p$ from $s$ to $t$ is an ordered sequence of nodes
\begin{align}
p = (v_0=s, v_1, \dots, v_{L_p}=t), \notag
\end{align}
where $(v_{i-1} \to v_i) \in E$ for all $i=1,\dots,L_p$.
Define for each edge $(u\to v)$ the constant
\begin{align}
C_{(u\to v)} := \mathrm{Lip}[h_v], 
\end{align}
and for a path $p$ set
\begin{align}
C_p := \prod_{i=1}^{L_p} \mathrm{Lip}[h_{v_i}]
= \prod_{i=1}^{L_p} C_{(v_{i-1}\to v_i)}. 
\end{align}
Let $\mathcal{P}$ be the set of all such paths from $s$ to $t$.  
Then the Lipschitz constant of the overall network $f(x):=x_t(x)$ satisfies
\begin{align}
\mathrm{Lip}[f] \;\le\; \sum_{p \in \mathcal{P}} C_p.
\end{align}
\end{theorem}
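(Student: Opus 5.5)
We argue by induction along a topological order of $G$. For every node $v\in V$, let $\mathcal{P}_v$ denote the set of computational paths from $s$ to $v$, and set
\begin{align}
K_v := \sum_{p\in\mathcal{P}_v} C_p, \qquad K_s := 1, \notag
\end{align}
where the only path in $\mathcal{P}_s$ is the trivial path $(s)$, whose product is empty and so equals $1$. The claim we prove is stronger than the theorem: for every node $v$, the map $x\mapsto x_v(x)$ is Lipschitz with $\mathrm{Lip}[x_v]\le K_v$. Taking $v=t$ gives the theorem. We can also restrict attention to nodes reachable from $s$. A node not reachable from $s$ receives no input depending on $x$, so it is constant (or undefined), has Lipschitz constant $0$, and contributes no paths. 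This is consistent with $K_v=0$.

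Because $G$ is finite and acyclic, it has a topological ordering, and every predecessor $u$ of $v$ (that is, every $u$ with $(u\to v)\in E$) appears before $v$ in that order. The base case is $x_s(x)=x$, which is $1$-Lipschitz. For the inductive step, fix $v\neq s$ and assume $\mathrm{Lip}[x_u]\le K_u$ for every predecessor $u$. We apply two earlier results to the defining formula $x_v=\sum_{(u\to v)\in E} h_v\circ x_u$:
\begin{itemize}
\item Proposition~\ref{prop:lipschitz_addition} (\nameref{prop:lipschitz_addition}), applied repeatedly to the finite sum over incoming edges;
\item Proposition~\ref{prop:lipschitz_composition} (\nameref{prop:lipschitz_composition}), applied to each term.
\end{itemize}
Together they give
\begin{align}
\mathrm{Lip}[x_v]\le \sum_{(u\to v)\in E}\mathrm{Lip}[h_v]\,\mathrm{Lip}[x_u]\le \sum_{(u\to v)\in E} C_{(u\to v)}\,K_u . \notag
\end{align}

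The remaining step is combinatorial. We need
\begin{align}
\sum_{(u\to v)\in E} C_{(u\to v)} K_u = K_v. \notag
\end{align}
To see this, note that every path $p\in\mathcal{P}_v$ with $v\neq s$ can be split uniquely into two pieces: its last edge $(u\to v)$, and a prefix path $p'\in\mathcal{P}_u$. Conversely, appending the edge $(u\to v)$ to any $p'\in\mathcal{P}_u$ produces a path in $\mathcal{P}_v$; acyclicity guarantees that no node is repeated. Hence this map is a bijection between $\mathcal{P}_v$ and $\bigsqcup_{(u\to v)\in E}\mathcal{P}_u$. Under this bijection the path weights factor as $C_p = C_{p'}\,C_{(u\to v)}$. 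Summing over the bijection yields the identity, and the induction closes.

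The main obstacle is bookkeeping rather than analysis. The edge set should be read as a simple graph, so that each incoming edge is counted once in both the network evaluation and the path enumeration. One must also check that the modules $h_v$ act on a common input space, so that addition at $v$ is well defined. Finally, each term of the sum must be treated as the composite function $h_v\circ x_u$ of $x$, not as a function of $x_u$ alone; only then do the addition and composition propositions apply to functions sharing the same domain. Once these points are handled, the argument is just the standard path-sum recursion, analogous to counting paths in a DAG via dynamic programming.
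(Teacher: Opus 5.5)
Your proof is correct and follows essentially the same route as the paper. Both argue by induction along a topological order, using the triangle inequality and the composition bound, and both use the last-edge bijection $\mathcal{P}_v \leftrightarrow \bigsqcup_{(u\to v)\in E}\mathcal{P}_u$ to identify the recursion with the path sum. The only difference is cosmetic: the paper first defines the recursive quantity $S(v)$ and proves its path expansion as a separate lemma, whereas you define $K_v$ directly as the path sum and verify the recursion inside the inductive step.
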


\begin{proof}
Fix any topological order of $V$. For $v\in V$, define the scalar
\begin{align}
S(v) := 
\begin{cases}
1, & v=s,\\[2pt]
\displaystyle \sum_{(u\to v)\in E} C_{(u\to v)}\, S(u), & v\neq s.
\end{cases}
\end{align}
\emph{Claim:} for all $v\in V$,
\begin{align}
\label{eq:node_lip_bound}
\mathrm{Lip}[x_v] \;\le\; S(v).
\end{align}
We prove \eqref{eq:node_lip_bound} by induction along the topological order.

\paragraph{Base case ($v=s$).} Considering $x_s(x)=x$, hence base case holds true:
\begin{align}
\mathrm{Lip}[x_s]=1=S(s) 
.\notag
\end{align}

\paragraph{Inductive step.} Let $v\neq s$ and assume \eqref{eq:node_lip_bound} holds for all predecessors $u$ of $v$. Applying Minkowski inequality, for any $x,y$ in the input space,
\begin{align}
\|x_v(x)-x_v(y)\|
&= \left\| \sum_{(u\to v)} h_v\big(x_u(x)\big) - \sum_{(u\to v)} h_v\big(x_u(y)\big) \right\| \notag\\
&\le \sum_{(u\to v)} \big\| h_v\big(x_u(x)\big) - h_v\big(x_u(y)\big) \big\| \notag\\
&\le \sum_{(u\to v)} \mathrm{Lip}[h_v] \,\big\| x_u(x) - x_u(y) \big\| \notag\\
&\le \sum_{(u\to v)} \mathrm{Lip}[h_v] \,\mathrm{Lip}[x_u] \,\|x-y\| \notag\\
&= \sum_{(u\to v)} C_{(u\to v)}\, \mathrm{Lip}[x_u] \,\|x-y\| \notag\\
&\le \left( \sum_{(u\to v)} C_{(u\to v)}\, S(u) \right) \|x-y\| = S(v)\,\|x-y\| \notag
,
\end{align}
since inductive hypothesis:
\begin{align}
\mathrm{Lip}[x_u] \le S(u) \notag
,
\end{align}
so that $\mathrm{Lip}[x_v]\le S(v)$. In particular,
\begin{align}
\mathrm{Lip}[f]=\mathrm{Lip}[x_t] \;\le\; S(t). \notag
\end{align}

It remains to show $S(t)=\sum_{p\in\mathcal P} C_p$. More generally:

\begin{lemma}[Lemma (Path expansion of $S$)]
\label{lemma:path-expansion}
For every $v\in V$, the path extension of $S$ holds:
\begin{align}
\label{eq:path-expansion}
S(v) = \sum_{p\in\mathcal P(s\to v)} \prod_{e\in p} C_e , 
\end{align}
where $\mathcal P(s\to v)$ is the set of all directed paths from $s$ to $v$.    
\end{lemma}

\begin{proof}
Proceed again by induction in topological order. For $v=s$, $\mathcal P(s\to s)$ contains only the empty path with product $1$, so \eqref{eq:path-expansion} holds.
Assume \eqref{eq:path-expansion} holds for all predecessors $u$ of $v$. Then
\begin{align}
S(v)
&= \sum_{(u\to v)} C_{(u\to v)}\, S(u) \\
&= \sum_{(u\to v)} C_{(u\to v)} \left( \sum_{p\in\mathcal P(s\to u)} \prod_{e\in p} C_e \right) \notag\\
&= \sum_{(u\to v)} \sum_{p\in\mathcal P(s\to u)} \left( \prod_{e\in p} C_e \right) C_{(u\to v)}. \notag
\end{align}
Concatenating each $p\in\mathcal P(s\to u)$ with the edge $(u\to v)$ yields a bijection onto $\mathcal P(s\to v)$, and multiplies the edge constants accordingly; thus \eqref{eq:path-expansion} holds for $v$.
\end{proof}

Applying Lemma~\ref{lemma:path-expansion} at $v=t$ gives $S(t)=\sum_{p\in\mathcal P} C_p$, and hence
\begin{align}
\mathrm{Lip}[f] \;\le\; S(t) = \sum_{p\in\mathcal P} \prod_{i=1}^{L_p} \mathrm{Lip}[h_{v_i}] \notag
,
\end{align}
which is the claim as demonstrated.
\end{proof}

\begin{figure}[t!]
  \centering
  \resizebox{0.9\linewidth}{!}{
  \begin{tikzpicture}[
    ->, 
    node distance=1.8cm and 2cm,
    every node/.style={circle, draw, minimum size=10mm},
    shorten >=1pt]

    \node[circle, fill=gray!20, inner sep=10pt] (S1) {$S_1$};
    
    \node[circle, inner sep=1.5pt, right = of S1, xshift=-1cm] (a1) {$a_1$};

    \node[circle, fill=gray!20, inner sep=10pt, right=of a1,xshift=-1cm] (S2) {$S_2$};

    \node[circle, inner sep=1.5pt, right = of S2,xshift=-1cm] (a2) {$a_2$};

    \node[circle, inner sep=1.5pt, right = of a2] (aLminusOne) {$a_{L-1}$};
    
    \node[circle, fill=gray!20, inner sep=10pt, right=of aLminusOne,xshift=-1cm] (SL) {$S_L$};

    \draw (S1) -- (a1);
    \draw (a1) -- (S2);
    \draw (S2) -- (a2);
    \draw[dashed] (a2) -- (aLminusOne);
    \draw (aLminusOne) -- (SL);
    
\end{tikzpicture}
}

  \caption[Non-Biconnected DAG Network]{Topology of non-biconnected DAG network. If a DAG is separated into two sub-DAGs by the removal of a vertex $a_i$, then $a_i$ is referred to as a \emph{cut vertex} (or \emph{articulation point}), and the DAG is said to be \emph{non-biconnected}. This diagram shows the topology for a DAG that contains $L-1$ articulation points $a_1, a_2, \dots, a_{L-1}$ and $L$ corresponding sub-DAGs $S_1, S_2, \dots, S_L$.}
  
  \label{fig:nonbiconnected_dag_network}
\end{figure}

\subsubsection{Lipschitz Constant of Non-Biconnected DAG}

If a DAG is separated into two disconnected sub-DAGs by the removal of a vertex, then the removed vertex is referred to as \emph{cut vertex} or \emph{articulation point}. Modern deep learning architectures often give rise to computation DAGs that are not biconnected: the DAGs can be separated into multiple subgraphs by removing a small set of vertices.

\begin{theorem}[Lipschitz Bound for Non-Biconnected DAG Network]
\label{thm:nonbiconnected_dag_lipschitz}
Figure~\ref{fig:nonbiconnected_dag_network} shows the typical non-biconnected DAG topology found in modern deep learning architectures. The DAG contains a set of cut vertices $a_1,a_2, \cdots, a_{L-1}$ and biconnected DAGs $S_1,S_2,\cdots,S_L$. Suppose the DAG network $f$ in Figure~\ref{fig:nonbiconnected_dag_network} is:
\begin{align}
    f= S_L\,\circ\,a_{L-1}\,\cdots\,a_2\,\circ\,S_2\,\circ\,a_1\,\circ\,S_1 \notag
    ,
\end{align}
then it is not difficult to show that the Lipschitz constant bound of this DAG is:
\begin{align}
\label{equ:nonbiconnected_dag_lipschitz}
\mathrm{Lip}\left[ f \right] \leq 
\Biggl(
\prod_{i=1}^{L}    \mathrm{Lip}\left[ S_i \right] 
\Biggr)
\Biggl(
\prod_{j=1}^{L-1}    \mathrm{Lip}\left[ a_j \right]
\Biggr) 
,
\end{align}
where the Lipschitz bound of each sub-DAG $\mathrm{Lip}\left[ S_i \right]$ is given by Theorem~\ref{thm:path_lipschitz_general} (\nameref{thm:path_lipschitz_general}).
\end{theorem}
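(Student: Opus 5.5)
The plan is to regard $f$ as a plain composition of $2L-1$ maps and apply Proposition~\ref{prop:lipschitz_composition} repeatedly. The key structural point is the role of a cut vertex. Because each $a_j$ is an articulation point of the DAG in Figure~\ref{fig:nonbiconnected_dag_network}, every computational path from $s$ to $t$ passes through $a_j$, and no edge runs from a node of $S_1,\dots,S_j$ (or $a_1,\dots,a_{j-1}$) to a node of $S_{j+1},\dots,S_L$ other than through $a_j$. Consequently, under the additive evaluation rule of Theorem~\ref{thm:path_lipschitz_general}, the output $x_{a_j}$ is the only quantity that the downstream part of the graph receives. The downstream computation is therefore a function of $x_{a_j}$ alone. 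This justifies writing
\begin{align}
f = S_L\circ a_{L-1}\circ S_{L-1}\circ\cdots\circ a_1\circ S_1 \notag
\end{align}
as a genuine function composition. Here each $S_i$ denotes the sub-network whose input node is $a_{i-1}$ (or $s$ when $i=1$) and whose output node feeds $a_i$ (or is $t$ when $i=L$). Each $a_j$ denotes the module $h_{a_j}$ applied at that vertex.

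Next, I would apply Proposition~\ref{prop:lipschitz_composition} inductively on the number of factors. For two factors the bound is exactly that proposition. For the inductive step, write $f_k$ for the composition of the first $k$ factors, so that $f_{k+1}=g\circ f_k$, and use $\mathrm{Lip}[g\circ f_k]\le \mathrm{Lip}[g]\,\mathrm{Lip}[f_k]$. Unrolling the induction gives the product of all $L$ sub-DAG constants and all $L-1$ cut-vertex constants. Regrouping this product yields \eqref{equ:nonbiconnected_dag_lipschitz}.

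Finally, each $S_i$ is itself a DAG with a unique input node and a unique output node, evaluated additively. Theorem~\ref{thm:path_lipschitz_general} therefore applies to it directly, with $s$ and $t$ replaced by its entry and exit nodes, and bounds $\mathrm{Lip}[S_i]$ by the sum over its internal paths. Substituting these bounds is legitimate because the right-hand side is monotone in each factor.

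The main obstacle is the first step: checking that the articulation-point structure really makes the computation factor as a composition. The additive rule $x_v=\sum_{(u\to v)} h_v(x_u)$ could, in principle, feed a node of $S_{i+1}$ directly from $s$ or from $S_{i-1}$, as a skip connection around $a_i$ would do. I would handle this by invoking the topology assumed in the figure, where $a_i$ separates the graph, to rule out such edges. I would also fix the convention that the module $h_{a_j}$ is counted at $a_j$ and not inside $S_{j+1}$, so that no constant is counted twice. A consistency check is that the product bound agrees with $\sum_{p}C_p$ from Theorem~\ref{thm:path_lipschitz_general} applied to the whole graph: every path factors uniquely into segments through the $a_j$, so the sum over paths splits into a product of sums.
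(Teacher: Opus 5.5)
Your proposal is correct and follows the same route as the paper. The paper gives no separate proof: it posits $f=S_L\circ a_{L-1}\circ\cdots\circ a_1\circ S_1$, implicitly iterates Proposition~\ref{prop:lipschitz_composition}, and bounds each $\mathrm{Lip}[S_i]$ by Theorem~\ref{thm:path_lipschitz_general}. Your justification of the composition structure from the cut-vertex topology and your convention against double-counting $h_{a_j}$ go beyond what the paper spells out, and your consistency check is right: with path-sum bounds for each $S_i$ the product equals $\sum_{p\in\mathcal{P}}C_p$ for the whole graph exactly, so any gain in tightness over Theorem~\ref{thm:path_lipschitz_general} can only come from sharper estimates of the individual $\mathrm{Lip}[S_i]$.
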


\begin{remark}
Theorem~\ref{thm:nonbiconnected_dag_lipschitz} (\nameref{thm:nonbiconnected_dag_lipschitz}) is particularly valuable for estimating the Lipschitz bound of a deep model. In contrast to path-based bound of Theorem~\ref{thm:path_lipschitz_general} (\nameref{thm:path_lipschitz_general}), which can be quite loose due to its combinatorial dependence on the number of paths, Theorem~\ref{thm:nonbiconnected_dag_lipschitz} yields significantly tighter and more structurally informed bounds.
\end{remark}

\subsubsection{Lipschitz Constant of Residual Network}
\label{sec:lipschitz_of_residual_module}

Residual networks \citep{he2016deep} address the optimization challenges of very deep neural architectures by introducing \emph{residual modules}, which are illustrated in Figure~\ref{fig:residual_module}. Each module $m_{\mathrm{res}}$ has the structure:
\begin{align}
    m_{\mathrm{res}}(x) = x + \phi(x), \notag
\end{align}
which consists of a non-identity unit $\phi$ and an identity skip connection. The resulting residual mapping $m_{\mathrm{res}}(x)$ maintains stable gradient flow, mitigates vanishing-gradient effects, and enables the training of substantially deeper networks without degradation. This simple architectural principle has proven highly effective and forms the backbone of many state-of-the-art deep learning models. Using Theorem~\ref{thm:path_lipschitz_general} (\nameref{thm:path_lipschitz_general}), it is not difficult to show that the Lipschitz constant bound of this structure is:
\begin{align}
    \mathrm{Lip}\left[m_{\mathrm{res}}\right] \leq 
    \mathrm{Lip}\left[s\right] +\mathrm{Lip}\left[\phi\right] = 
    1 + \mathrm{Lip}\left[\phi\right] \notag
    ,
\end{align}
which recovers the same results in \citep[Lemma 2, \S~2]{behrmann2019invertible}, \citep[\S~3.4]{gouk2021regularisation} and Proposition~\ref{prop:lipschitz_composition}~(\nameref{prop:lipschitz_composition}) by setting $s$ to an identity map.

\begin{figure}[t!]
    \centering

    \begin{tikzpicture}[>=latex, node distance=2.5cm, thick]

  \node (x) {$x$};

  \node[circle, fill=black, inner sep=1.5pt, right=of x, xshift=-1.5cm] (input) {};
  
  \node[draw, rectangle, minimum width=1.5cm, minimum height=1cm, right=of input, xshift=-1.5cm] (phi) {$\phi$};
  
  \node[circle, draw, right=of phi, inner sep=1pt, minimum size=0.6cm, xshift=-1.5cm] (sum) {$+$};
  
  \node[right=of sum, xshift=-1.5cm] (z) {$z$};

  \draw[->] (x) -- (phi);
  \draw[->] (phi) -- (sum);

  \draw[->] ($(input) + (0,0)$) |- ([yshift=0.8cm]sum.north) -- node[above, xshift=-2.cm,yshift=0.4cm] {$s$} (sum);

  \draw[->] (sum) -- (z);

\end{tikzpicture}

    \caption[Structure of Residual Module]{A residual module $m(x)=x + \phi(x)$ consists of a non-identity unit $\phi: x \mapsto \phi(x)$ and an identity skip connection unit $s: x \mapsto x$.}
    
    \label{fig:residual_module}
    
\end{figure}

\subsection{Complexity-Theoretic Generalization Bound}
\label{sec:generalization_bounds}

\paragraph{Notations.} Let $\mathcal{H}$ be a hypothesis family and $\mathcal{H} \ni h: X \to Y$ be a hypothesis. Let $\ell: Y \times Y \to \mathbb{R}$ be a loss function. For each $\ell(h(x), y)$, we can associate $\ell$ and $h$ with a map $G \ni g: X \times Y \to \mathbb{R}$, where $G$ is the family of loss functions associated with the hypothesis family $\mathcal{H}$.

%The generalization capacity of a neural network is fundamentally governed by its Lipschitz constant \citep{mohri2018foundations}. 

Lipschitz constants can enter complexity--theoretic generalization bounds and often interact with capacity, robustness, and smoothness~\citep{mohri2018foundations}. To quantify this capacity for a hypothesis, the concept of \emph{generalization gap} is often used to measure the error between \emph{true risk} and \emph{empirical risk} (Definition~\ref{def:generalization_gap}). This gap for a model $h \in \mathcal{H}$ is proportional to its empirical Rademacher complexity of the hypothesis family $\mathcal{H}$, and the empirical Rademacher complexity is proportional to the Lipschitz constant $\mathrm{Lip}\left[\mathcal{H}\right]:=\sup_{h \in \mathcal{H}} \mathrm{Lip}\left[h\right]$ of the hypothesis family:
\begin{align}
\text{Generalization gap of $h$} \sim \text{Rademacher complexity} \sim O(\mathrm{Lip}\left[\mathcal{H}\right]) \notag
\end{align}
(\citealp[\S~3]{mohri2018foundations}; \citealp[Part IV]{shalev2014understanding}). A smaller Lipschitz constant enforces smoother mappings, reducing the complexity of the function class and mitigating overfitting, thereby enhancing robustness to noise and adversarial perturbations \citep{neyshabur2015norm,gouk2021regularisation,castin2023smooth}.

\begin{definition}[Generalization Gap]
\label{def:generalization_gap}
Let $S:=\{(x_i,y_i)\}_{i=1}^m$ be a dataset where $(x_i,y_i)$ is sampled \emph{i.i.d.} from an unknown distribution $\mathcal{D}_X$. The true (population) risk for hypothesis $h$ is defined as:
\begin{align}
    R(h) := \mathbb{E}_{(x,y)\sim\mathcal{D}_X}[\ell(h(x),y)]
    ,\notag
\end{align}
and the empirical risk on $S$ is defined as:
\begin{align}
   R_S(h) :=  \frac{1}{m} \sum_{i=1}^m \ell(h(x_i), y_i)  \notag
\end{align}
(\citealp[\S~3]{mohri2018foundations}; \citealp[Part IV]{shalev2014understanding}). Accordingly, the generalization gap on $S$ is given by:
\begin{align}
     R(h) - R_S(h). \notag
\end{align}

\end{definition}

\subsubsection{Rademacher Complexity and its Bounds}

In learning theory, \emph{Rademacher complexity} quantifies the expressiveness of a hypothesis family $G:=\{g: X \to \mathbb{R}\}$ by measuring how well the $G$ can fit data associated with random labels \citep[\S~3.1]{mohri2018foundations}. Let $S = (x_1, x_2, \dots, x_m)$ be a dataset of size $m$ with each $x_i \in X$ drawn i.i.d. from an unknown distribution $\mathcal{D}_X$. Let $h : X \to \mathbb{R}$ be a hypothesis that assigns a real-valued score to each input. Let $\sigma_1, \dots, \sigma_m$ be $m$ independent \emph{Rademacher variables}, \ie, random variables uniformly distributed over $\{-1, +1\}$ \citep[\S~3.1]{mohri2018foundations}. For random labels $\sigma_1, \dots, \sigma_m$ assigned to $S$, the maximal correlation between the labels and the predictions of hypotheses in $\mathcal{H}$ is given by
\begin{align}
    \sup_{g \in G} \frac{1}{m} \sum_{i=1}^m \sigma_i g(x_i). \notag
\end{align}
By assigning all possible random labels to the dataset $S$, we can thus define \emph{Rademacher Complexity} for hypothesis family $\mathcal{G}$, stated in Definition~\ref{def:empirical_rademacher} (\nameref{def:empirical_rademacher}).

\begin{definition}[Rademacher Complexity {\citep[Definition 3.1, \S~3]{mohri2018foundations}}]
\label{def:empirical_rademacher}
The \emph{empirical Rademacher complexity} of $G$ with respect to $S$ is defined as
\begin{align}
\widehat{\mathfrak{R}}_S(G)
:= \mathbb{E}_{\sigma} \left[ \sup_{g \in G} \frac{1}{m} \sum_{i=1}^m \sigma_i g(x_i) \right], \notag
\end{align}
and the (expected) \emph{Rademacher complexity} is the expectation of $\widehat{\mathfrak{R}}_S(G)$ over all dataset $S$ of size $m$:
\begin{align}
\mathfrak{R}_m(G)
:= \mathbb{E}_{S \sim \mathcal{D}_X^m} \left[ \widehat{\mathfrak{R}}_S(G) \right]
,\notag
\end{align}
where $S \sim \mathcal{D}_X^m$ denotes $m$ samples i.i.d. drawn from $\mathcal{D}$. 
\end{definition}

In the paradigm of machine learning, for the case of the hypothesis family $\ell\circ \mathcal{H}$ is the composition of a model $h \in \mathcal{H}:=\{h:\mathbb{R}^n \to \mathbb{R} \}$ and loss function $\ell: \mathbb{R} \to \mathbb{R}$:
\begin{align}
    \ell\circ \mathcal{H} := \{\ell\circ h \mid h \in \mathcal{H}\}, \notag
\end{align}
by Talagrand's Contraction Lemma \citep[Lemma 4.2, \S~4]{mohri2018foundations}, the empirical Rademacher complexity of $\ell\circ \mathcal{H}$ on $S$ admits:
\begin{align}
    \widehat{\mathfrak{R}}_S(\ell\circ \mathcal{H}) \leq 
    \mathrm{Lip}\left[\ell\right] \;\widehat{\mathfrak{R}}_S(\mathcal{H}) 
    . \notag
\end{align}

For any $\delta > 0$, with probability at least $1-\delta$, the generalization gap for $h \in \mathcal{H}$ on $S$ is bounded by empirical Rademacher complexity of $\ell\circ \mathcal{H}$:
\begin{align}
    R(h) - R_S(h) \leq 2 \;\widehat{\mathfrak{R}}_S(\ell\circ\mathcal{H}) + 
    3 \sqrt{\frac{\log{\frac{2}{\delta}}}{2m}}
    \leq
    2\; \mathrm{Lip}\left[\ell\right] \;\widehat{\mathfrak{R}}_S(\mathcal{H}) + 3 \sqrt{\frac{\log{\frac{2}{\delta}}}{2m}}
\end{align}
\citep[Theorem 3.1 \& Theorem 3.3, \S~3]{mohri2018foundations}.

For general vectorized function, let:
\begin{align}
\Phi:=\{\phi: \mathbb{R}^n \to \mathbb{R}^n \mid \phi~\text{is a coordinate-wise identity map}\}   
\notag
\end{align}
be a coordinate-wise identity family. For vectorized function $h: \mathbb{R}^n \to \mathbb{R}^k$ and compositional hypothesis $h \circ \Phi$, suppose $h_i$ is $K$-Lipschitz continuous, using vector contraction theorem \citep[Equation 1 \& Corollary 4]{Maurer2016Contraction} yields:
\begin{align}
\widehat{\mathfrak{R}}_S(h \circ \Phi)  
&\leq
\sqrt{2} \; K \; \frac{1}{m} \; \mathbb{E}\Bigl[ \sup_{\phi \in \Phi} \sum_{i=1}^m \sum_{j=1}^n \sigma_{ij} \phi_j(x_i)\Bigr] \notag\\
&\leq \sqrt{2} \; K \; \frac{1}{m} \; \sum_{i=1}^m \sum_{j=1}^n 
\bigl|\phi_j(x_i)\bigr| = \sqrt{2} \; K \; \frac{1}{m} \; \sum_{i=1}^m \Biggl(\sum_{j=1}^n 
\bigl|x_{i:j}\bigr| \Biggr) = \sqrt{2} \; K \; \frac{1}{m} \; \sum_{i=1}^m \bigl\|x_i\bigr\|_1 \notag \\
&\leq \sqrt{2} \; K \; \frac{1}{m} \; m \; \sup_{x \in S} \bigl\| x \bigr\|_1 = \sqrt{2} \; K \; \sup_{x \in S} \bigl\| x \bigr\|_1
,
\end{align}
where $\sigma_{ij}$ are an $m \times n$ matrix of independent Rademacher variables, $x_{i:j}$ denotes the $j$-th component of the $i$-th data point, and $\sup_{x \in S} \bigl\| x \bigr\|_1$ is the $1$-norm dataset diameter. This result implies that Lipschitz constant of $h$ controls its generalization bound:
\begin{align}
    R(h) - R_S(h) \leq \mathcal{O}(K)
    .
\end{align}

\subsection{Training Dynamics of Lipschitz Continuity}
\label{sec:optimization_induced_dynamics}

The Lipschitz continuity of a parameterized network evolves as optimization updates its parameters. Consequently, the optimization process induces the dynamics of the network's Lipschitz continuity. While existing work in deep learning theory has largely focused on bounding the Lipschitz constants of neural networks, understanding how Lipschitz continuity evolves throughout training remains an open problem. Only a few studies have begun to explore this aspect. In particular, \citet{luo2025lipschitz} introduces the concept of \emph{optimization-induced dynamics} and establishes a continuous-time stochastic framework that explicitly links the optimization process to the time-varying Lipschitz continuity bound of deep networks. The literature \citep{luo2025lipschitz} uses the operator-theoretical results regarding how the largest singular values of parameter matrices vary with respect to perturbations from the literature \citep{luo2025spectralvariation}, along with the theory from high-dimensional stochastic differential equations (SDEs) from stochastic analysis, for establishing a mathematical framework modeling the evolution of a spectral-norm based Lipschitz continuity upper bound. They further validate their theoretical framework with experiments across datasets and regularization scenarios.

Consider an $L$-layer feed-forward neural network $f: \mathbb{R}^m \to \mathbb{R}$ with 1-Lipschitz activation functions (\eg, ReLU). Let $\boldsymbol{\theta}^{(\ell)}(t) \in \mathbb{R}^{m_\ell \times n_\ell}$ be the $\ell$-layer parameter matrix at time $t$ and $\boldsymbol{\theta}(t)$ be the collection of $\boldsymbol{\theta}^{(\ell)}(t)$ for all layers. Let $\mathcal{L}_f(\boldsymbol{\theta}^{(\ell)}(t))$ be the loss expectation for the $f$ with parameters $\boldsymbol{\theta}(t)$ at time $t$. Suppose that $\boldsymbol{\Sigma}_t^{(\ell)} \in \mathbb{R}^{(m_\ell n_\ell) \times (m_\ell n_\ell)}$ is the layer-wise covariance matrix of stochastic gradient noise, arising from mini-batch sampling. Let $K^{(\ell)}(t)$ be the Lipschitz constant at layer $\ell$. Let $K(t)$ be the network Lipschitz spectral-norm bound. \citet{luo2025lipschitz} show that the continuous-time dynamics of the parameters under stochastic gradient descent (SGD) with a sufficiently small learning rate $\eta > 0$ are given by a system of SDEs \citep[Definition 10]{luo2025lipschitz}:

\begin{align}
\begin{cases}
\dd \mathrm{vec}(\boldsymbol{\theta}^{(\ell)}(t)) = - \mathrm{vec}\left[\nabla^{(\ell)} \mathcal{L}_f(\boldsymbol{\theta}(t))\right]  \, \dd t   + \sqrt{\eta} \left[\boldsymbol{\Sigma}_t^{(\ell)} \right]^{\frac{1}{2}} \, \dd \boldsymbol{B}_t^{(\ell)} \\
    K^{(\ell)}(t)=\|\boldsymbol{\theta}^{(\ell)}(t)\|_{op}  \\
    Z(t)  = \sum_{l=1}^L \log K^{(\ell)}(t) \\
    K(t) = \mathrm{e}^{Z(t)} 
    \end{cases}
\end{align}
where:
\begin{itemize}
    \item $\mathrm{vec}: \mathbb{R}^{m \times n} \to \mathbb{R}^{mn}$ represents major-column vectorization operator.

    \item $\|\cdot\|_{op}$ the matrix operator (spectral) norm.

    \item $\nabla^{(\ell)} \mathcal{L}_f(\boldsymbol{\theta}(t))$ is the gradient of the loss with respect to the $\ell$-th layer parameters.

    \item $\boldsymbol{B}_t^{(\ell)}$ is a standard $(m_\ell n_\ell)$-dimensional Wiener process adapted to the filtration induced by mini-batch sampling.

\end{itemize}

This stochastic dynamical system characterizes, in continuous time, the evolution of both layer-wise and network-level Lipschitz continuity bounds induced by the optimization process. \citet[Theorem 15]{luo2025lipschitz} show that the layer-wise dynamics decompose into three \emph{driving forces}:
\begin{align}
\frac{\dd K^{(\ell)}(t)}{K^{(\ell)}(t)}  = \left( \mu^{(\ell)}(t) + \kappa^{(\ell)}(t) \right) \,  \dd t + \boldsymbol{\lambda}^{(\ell)}(t)^\top \, \dd \boldsymbol{B}_t^{(\ell)} 
, 
\end{align}    
where $\dd\,\boldsymbol{B}_t^{(\ell)} \sim \mathcal{N}(\boldsymbol{0}, \boldsymbol{\mathrm{I}}_{m_{\ell}n_{\ell}} \dd t)$ represents the increment of a standard Wiener process in $\mathbb{R}^{m_{\ell}n_{\ell}}$,
and the three \emph{driving forces} are:
\begin{enumerate}
    \item \textbf{Optimization-induced drift}:
    \begin{align}
    \mu^{(\ell)}(t) 
    &= \frac{\left\langle \boldsymbol{J}_{op}^{(\ell)}(t), -\mathrm{vec}\!\left[\nabla^{(\ell)} \mathcal{L}_f(\boldsymbol{\theta}(t))\right] \right\rangle}
            {\sigma_{1}^{(\ell)}(t)},
    \end{align}
    where $\boldsymbol{J}_{op}^{(\ell)}(t)$ is the Jacobian of the operator norm with respect to $\mathrm{vec}(\boldsymbol{\theta}^{(\ell)}(t))$ and $\sigma_{1}^{(\ell)}(t)$ is its largest singular value. This term captures the deterministic component of Lipschitz evolution driven by the mean gradient flow. This shows that the alignment of the gradient $\nabla^{(\ell)} \mathcal{L}_f(\boldsymbol{\theta}(t))$ and principal direction of the $\ell$-th layer parameter matrix $\boldsymbol{J}_{op}^{(\ell)}(t)$ at time $t$ determines the contribution of the optimization to the increment of the layer-wise Lipschitz constant.

    \item \textbf{Noise--curvature entropy production}:
    \begin{align}
    \kappa^{(\ell)}(t) 
    &= \frac{\eta}{2\,\sigma_{1}^{(\ell)}(t)}
       \left\langle \boldsymbol{H}_{op}^{(\ell)}(t), \boldsymbol{\Sigma}_t^{(\ell)} \right\rangle \ge 0,
    \end{align}
    where $\boldsymbol{H}_{op}^{(\ell)}(t)$ is the Hessian of the operator norm and $\boldsymbol{\Sigma}_t^{(\ell)}$ is the gradient-noise covariance. This non-negative term quantifies irreversible growth of the Lipschitz bound due to the interaction between stochastic gradient noise and curvature.

    \item \textbf{Diffusion-modulation intensity}:
    \begin{align}
    \boldsymbol{\lambda}^{(\ell)}(t) 
    &= \frac{\sqrt{\eta}}{\sigma_{1}^{(\ell)}(t)}
       \left[\boldsymbol{\Sigma}_t^{(\ell)}\right]^{1/2\top} \boldsymbol{J}_{op}^{(\ell)}(t),
    \end{align}
    which controls the variance of Lipschitz evolution by scaling the Wiener noise term from mini-batch sampling.
\end{enumerate}

\citet[Theorem 16]{luo2025lipschitz} show that, at the network level, these quantities aggregate as:
\begin{align}
\mu_Z(t) = \sum_{\ell=1}^L \mu^{(\ell)}(t) 
,\qquad
\kappa_Z(t) = \sum_{\ell=1}^L \kappa^{(\ell)}(t)
,  \qquad
\lambda_Z(t) = \left[\sum_{\ell=1}^L \|\boldsymbol{\lambda}^{(\ell)}(t)\|_2^2\right]^{\frac{1}{2}}
,
\end{align}
governing the deterministic trend, irreversible growth, and stochastic fluctuations of the overall Lipschitz bound $K(t)$.

Their framework is particularly useful for interpreting the behaviors of neural networks, such as the near-convergence behavior, noisy supervision and mini-batch sampling trajectories. \citet[\S~8]{luo2025lipschitz} show that:
\begin{enumerate}
    \item The Lipschitz constant bound irreversibly increases since the term \emph{noise-curvature entropy production} $\boldsymbol{\kappa}_Z(t)$ is \emph{non-negative}, which increases the system entropy \citep[\S~8.3]{luo2025lipschitz}.

    \item The magnitude of uniform label noise affects the Lipschitz bound \citep[\S~8.4 \& 8.5]{luo2025lipschitz}. In particular, larger supervision noise leads to a lower Lipschitz bound for the network. This is because the supervision noise shrinks the \emph{optimization-induced drift} $\boldsymbol{\mu}_Z(t)$.

    \item Mini-batch trajectories do not affect the variance of the Lipschitz bound if batch size is sufficiently large \citep[\S~8.6]{luo2025lipschitz}.
    
\end{enumerate}

\begin{remark}
The dynamical analysis of Lipschitz continuity in deep learning models remains an open research problem. For instance, the dynamics under small-batch training remains poorly understood.
\end{remark}

\section{Estimation Methods}
\label{sec:estimating_lipschitz}

Proposition~\ref{prop:survey_upper_bound_lipschitz} gives an upper bound of the Lipschitz constant of a network \citep{miyato2018spectral,luo2025lipschitz}. However, exact computation of the Lipschitz constant
\begin{align}
K = \sup_{x \neq y} \frac{\|f(x) - f(y)\|_2}{\|x - y\|_2}
= \sup_{x} \|\nabla f(x)\|_2 \notag
\end{align}
is generally NP-hard \citep{virmaux2018lipschitz}, and even inapproximable under the Exponential Time Hypothesis \citep{jordan2020exactly}. This section surveys major estimation and bounding techniques, ranging from statistical sampling to certified convex relaxations.  
We group them into \emph{Power Iteration}, \emph{Extreme Value Theory}, \emph{Derivative Bound Propagation}, \emph{Spectral Alignment}, \emph{Convex Optimization Relaxations}, and \emph{Exact/Relaxed MILP formulations}. 

\subsection{Power Iteration for Single Linear Unit}
\label{sec:estimation:power_iteration}

\textbf{Power Iteration \citep{mises1929praktische}} is also known as the Von Mises iteration \citep{mises1929praktische}. It is the standard way to estimate the Lipschitz constant for a linear layer --- fully-connected or convolutional, by approximating the spectral-norm of weight matrices, which bounds the layer Lipschitz constant (Proposition~\ref{prop:lipschitz_constant_of_linear_unit}) \citet{yoshida2017spectral,miyato2018spectral,sedghi2019singular,kim2021lipschitz,luo2025lipschitz}. Another desirable property of \emph{power iteration} in optimization is that \emph{power iteration} is differentiable. For example, \citet{yoshida2017spectral} and \citet{miyato2018spectral} use \emph{power iteration} for computing the largest singular values of parameter matrices, in which the differentiability of \emph{power iteration} allows the gradient propagation for gradient based optimization.

Suppose that $W \in \mathbb{R}^{m \times m}$ is the parameter matrix of a fully-connected layer or a convolutional layer, then the largest singular value $\sigma_1(W)$ is its operator norm:
\begin{align}
    \mathrm{Lip}[W] = \sigma_1(W)
    .
\end{align}
Computing the exact value of $\sigma_1(W)$ for a large $m \times m$ matrix $W$ is computationally expensive, as classical algorithms require $O(m^3)$ time. However, the largest singular value can be approximated using \emph{power iteration} by assuming the existence of a uniquely dominant singular value \citep[\S~7.3]{golub2013matrix}. Starting with random unit vectors $u_0, v_0$, one step of power iteration updates
\begin{align}
v_{k+1} \;\leftarrow\; \frac{W^\top u_k}{\|W^\top u_k\|_2},
\qquad
u_{k+1} \;\leftarrow\; \frac{W v_k}{\|W v_k\|_2}
,
\end{align}
and uses 
\begin{align}
\hat{\sigma}_1^{(k+1)} = u_{k+1}^\top W v_{k+1}    
\end{align}
as a differentiable estimate of $\sigma_1(W)$ at step $k+1$. At the iteration number $T$, the estimated Lipschitz constant of $W$ is approximated as:
\begin{align}
    \mathrm{Lip}[W] \approx 
    u_{T}^\top W v_T
    ,
\end{align}
where $T$ is the number of iterations. The estimation error is proportional to:
\begin{align}
    |\sigma_1 - \hat{\sigma}_1^{(T)} | = O\Bigl( \Bigl[ \frac{\sigma_2}{\sigma_1} \Bigr]^T \Bigr)
    ,
\end{align}
where $\sigma_1$ and $\sigma_2$ are the largest, and second largest singular values, respectively \citep[Equation 7.3.5, \S~7.3]{golub2013matrix}.

\subsection{Extreme Value Theory}
\label{sec:estimation:extreme_value_theory}

\textbf{CLEVER \citep{weng2018clever}} --- \emph{Cross-Lipschitz Extreme Value for nEtwork Robustness} --- converts the problem of estimating the attack-independent robustness into the problem of estimating the \emph{local} Lipschitz constant by sampling gradient norms in a neighborhood of the input \citep{weng2018clever}. This is because the $q$-norm Lipschitz constant of a network $g: \mathbb{R}^m \to \mathbb{R}$ is determined by:
\begin{align}
    \mathrm{Lip}_q\left[g\right] = \sup_x \|\nabla\; g(x) \|_q
    , 
\end{align}
so that, for any inputs $x$ and $y$, the inequality holds:
\begin{align}
    |g(x) - g(y)| \leq \mathrm{Lip}_q\left[f\right] \|x-y\|_p \notag
    ,
\end{align}
\citep[Lemma 3.1]{weng2018clever} where $1 \leq p, q \leq \infty$ satisfy the H\"older conjugacy relation \citep[\S~6]{rudin1976principles}:
\begin{align}
\frac{1}{p} + \frac{1}{q} = 1
. \notag
\end{align}

\paragraph{Adversarial Perturbation Bound.} \citet{weng2018clever} further show that the local Lipschitz constant can be used for deriving the adversarial perturbation bound. Let 
\begin{align}
f: \mathbb{R}^m \to \mathbb{R}^k    \notag
\end{align}
be a $k$-class classifier and $f_i$ be the $i$-th prediction. Let:
\begin{align}
    c(x) = \argmax_{1 \leq i \leq k} f_i(x) \notag
\end{align}
be the prediction. Then the $p$-norm adversarial perturbation $\|\delta\|_p$ to an input $x_0$ is bounded above by:
\begin{align}
    \|\delta\|_p \leq \min_{j \neq c} \frac{f_c(x_0) - f_j(x_0)}{\mathrm{Lip}_q\left[f_c - f_j\right]}
    ,\notag
\end{align}
where the local Lipschitz constant $\mathrm{Lip}_q\left[f_c - f_j\right]$ can be estimated through:
\begin{align}
    \mathrm{Lip}_q\left[f_c - f_j\right] = \max_{x \in B_p(x_0, \delta)} \|\nabla\; (f_c - f_j)(x) \|_q 
\end{align}
and $B_p(x_0, \delta)$ is a $p$-norm ball centered at $x_0$ with a radius $\delta$ \citep[Theorem 3.2]{weng2018clever}.

\paragraph{Distribution of Extreme Value $\max_{x \in B_p(x_0, \delta)} \|\nabla\; (f_c - f_j)(x) \|_q$.} Estimating:
\begin{align}
\max_{x \in B_p(x_0, \delta)} \|\nabla_x\; (f_c - f_j)(x) \|_q
\end{align}
can be through sampling $x \in B_p(x_0, \delta)$. Suppose the $n$ samples are $\{x^{(1)}, x^{(2)}, \cdots, x^{(n)} \}$. Their gradient $q$-norms are $\{\|\nabla\; (f_c - f_j)(x^{(1)}) \|_q, \|\nabla\; (f_c - f_j)(x^{(2)}) \|_q, \cdots, \|\nabla\; (f_c - f_j)(x^{(n)}) \|_q\}$. \citet{weng2018clever} show that the extreme value of the gradient $q$-norm:
\begin{align}
  Y := \lim_{n \to \infty} \; \max_{x^{(i)} \in B_p(x_0, \delta)} \Bigl\{\|\nabla\; (f_c - f_j)(x^{(1)}) \|_q, \|\nabla\; (f_c - f_j)(x^{(2)}) \|_q, \cdots, \|\nabla\; (f_c - f_j)(x^{(n)}) \|_q \Bigr\} 
\end{align}
can only be a distribution $P_Y$ of three distribution classes:
\begin{itemize}
    \item \textbf{Type I}: \emph{Gumbel class},
    \item \textbf{Type II}: \emph{Fr\'echet class},
    \item \textbf{Type III}: \emph{Reverse Weibull class},
\end{itemize}
according to Fisher-Tippett-Gnedenko Theorem \citep[Theorem 4.1]{weng2018clever}. The extreme value $\max\|\nabla\; (f_c - f_j)(x^{(1)}) \|_q$ is thus converted to estimating the right end-point (the location parameter $a_W$) of the reverse Weibull distribution $P_Y$:
\begin{align}
     \mathrm{Lip}_q\left[f_c - f_j\right] \approx a_W
     .
\end{align}

\subsection{Coordinate-Wise Gradient}
\label{sec:estimation:coordinate_wise_gradient}

\textbf{Fast-Lip \citep{weng2018fastlip}} derives an upper bound of local Lipschitz constant in the form of coordinate-wise gradients by analyzing the activation patterns of ReLU networks. Let $n_k$ denote the number of neurons at the $k$-th layer of an $m$-layer network and Let $n_0$ be the input dimension. Let $\phi_k: \mathbb{R}^{n_0} \to \mathbb{R}^{n_k}$ be the map from input to the output of $k$-th layer. Let $\rho$ be the coordinate-wise activation function. Then the relation between the $(k-1)$-th layer and the $k$-th layer can be written as:
\begin{align}
    \phi_k(x) = \rho\Bigl(
    W^{(k)} \phi_{k-1}(x) + b^{(k)}
    \Bigr)
    ,
\end{align}
where $W^{(k)} \in \mathbb{R}^{n_k \times n_{k-1}}$ is the $k$-th layer parameter matrix, and $b^{(k)} \in \mathbb{R}^{n_k}$ is the bias. Set $f(x) = \phi_m(x)$ and let $f_j(x)$ denote the $j$-th output of $f$. \citet{weng2018fastlip} start from analyzing the ReLU activation patterns and then deriving gradient bounds under this setting for bounding local Lipschitz constant.

\paragraph{Activation Patterns of ReLU Networks.} Let $l_r^{(k)}$ and $u_r^{(k)}$ denote the lower and upper bound for the $r$-th neuron in the $k$-th layer and let $z_r^{(k)}$ be the pre-activation at the $k$-th layer, given as:
\begin{align}
    z_r^{(k)} = W_{r,:}^{(k)} \phi_{k-1}(x) + b_r^{(k)}
    ,
\end{align}
where $W_{r,:}^{(k)}$ denotes the $r$-th row of $W^{(k)}$ \citep[\S~3.2]{weng2018fastlip}. For the neurons indexed by $[n_k]:=\{1, 2, \cdots, n_k\}$, then there are only three activation patterns:
\begin{enumerate}
    \item \textbf{Always Activated.} Neurons are always activated: $\mathcal{I}_k^+ := \{ r \in [n_k] \mid u_r^{(k)} \geq l_r^{(k)} \geq 0\}$.
    
    \item \textbf{Always Inactivated.} Neurons are always inactivated: $\mathcal{I}_k^- := \{ r \in [n_k] \mid l_r^{(k)} \leq u_r^{(k)} \leq 0\}$.
    
    \item \textbf{Either Activated or Inactivated.} Neurons are either activated or inactivated: $\mathcal{I}_k := \{ r \in [n_k] \mid l_r^{(k)} \leq 0 \leq u_r^{(k)} \}$.
\end{enumerate}

\paragraph{Gradient Analysis of ReLU Networks.} Gradient norm carries the information for local Lipschitz constant. \citet{weng2018fastlip} then analyze the coordinate-wise gradients of ReLU networks with a manner of layer-wise. Using the activation patterns of ReLU networks, the $k$-th layer output can be rewritten into:
\begin{align}
    \phi_k(x) = \Lambda^{(k)} \Bigl(
    W^{(k)}\phi_{k-1}(x) + b^{(k)}
    \Bigr)
    ,
\end{align}
where $\Lambda^{(k)}$ is the activation pattern matrix:
\begin{align}
    \Lambda^{(k)}_{r,r} = \begin{cases}
        1~\text{or}~0, &\text{if}~r \in \mathcal{I}_k \\
        1, &\text{if}~r \in \mathcal{I}_k^+ \\
        0, &\text{if}~r \in \mathcal{I}_k^-
    \end{cases}
    .
\end{align}
Let $\Lambda^{(k)}_{a}$ be the diagonal activation matrix for neurons in the $k$-th layer that are always activated and set $\Lambda^{(k)}_{u}: = \Lambda^{(k)} - \Lambda^{(k)}_{a}$. Starting by analyzing the bound of the gradient $\nabla \phi_k(x)$ coordinate-wisely in a $2$-layer ReLU network, \citet{weng2018fastlip} show that an inequality for gradient $q$-norm holds:
\begin{align}
    \max_{x \in B_p(x_0, \epsilon)}
    \;
    \Bigl| \left[ \nabla f_j(x) \right]_k \Bigr|
    \leq
    \max \Bigl( 
    C_{j,k}^{(1)} + L_{j,k}^{(1)}
    ,\;
    C_{j,k}^{(1)} + U_{j,k}^{(1)}
    \Bigr)
    ,
\end{align}
where:
\begin{align}
    C_{j,k}^{(1)} = W_{j,:}^{(2)} \Lambda^{(1)}_{a} W_{:,k}^{(1)}
    , \quad 
    L_{j,k}^{(1)} = \sum_{i \in \mathcal{I}_1, W_{j,i}^{(2)}W_{i,k}^{(2)} < 0} W_{j,i}^{(2)}W_{i,k}^{(2)}, 
    \quad
    \text{and}
    \quad
    U_{j,k}^{(1)} = \sum_{i \in \mathcal{I}_1, W_{j,i}^{(2)}W_{i,k}^{(2)} > 0} W_{j,i}^{(2)}W_{i,k}^{(2)}
    .
\end{align}
Then the gradient $q$-norm can be bounded above by:
\begin{align}
\mathrm{Lip}_q\left[f_j; B_p(x_0, \epsilon)\right]
=
\max_{x \in B_p(x_0, \epsilon)} \| \nabla f_j(x) \|_q 
\leq
\Biggl[ 
\sum_k
\Bigl(
\max_{x \in B_p(x_0,\epsilon)}
\bigl| [\nabla f_j(x)]_k \bigr|
\Bigr)^{q}
\Biggr]^{\frac{1}{q}}
,
\end{align}
yielding a local Lipschitz constant upper bound. This method is referred to as \textbf{Fast-Lip} in \citet{weng2018fastlip}.

\subsection{Spectral Alignment}
\label{sec:estimation:spectral_alignment}

\textbf{SeqLip \citep{virmaux2018lipschitz}} refines the upper bound of an $L$-layer sequential network $f \in \mathbb{R}^{n_1} \to \mathbb{R}^{n_L}$ with $1$-Lipschitz activation functions:
\begin{align}
    \mathrm{Lip}\left[f\right]^+ = \prod_{\ell=1}^{L} \|W^{(\ell)}\|_2
    ,
\end{align}
where $W^{(\ell)} \in \mathbb{R}^{n_{\ell} \times n_{\ell+1}}$ is the $\ell$-th layer parameter matrix, and $\mathrm{Lip}\left[f\right]^+$ is referred to as \textbf{AutoLip} bound in \citet{virmaux2018lipschitz}. By taking into account the spectral alignment between the parameter matrices of two consecutive layers, the bound is then refined as:
\begin{align}
  \mathrm{Lip}\left[f\right] \leq 
  \mathrm{Lip}\left[f\right]^+
  \Biggl(
  \underbrace{
  \prod_{\ell}^{L-1}
  \sqrt{
  (1 - r_{\ell} - r_{\ell+1})
  \max_{t^{(\ell)} \in [0,1]^{n_\ell}} 
  \langle
  t^{(\ell)} \cdot 
  v_1^{(\ell+1)}
  ,\;
  u_1^{(\ell)}
  \rangle^2
  +r_{\ell} + r_{\ell+1} + r_{\ell}r_{\ell+1}
  }
  }_{\text{spectral alignment}}
  \Biggr)
  ,
\end{align}
where $t^{(\ell)} \in [0,1]^{n_\ell}$ represents the $\ell$-th layer gradient patterns from activation functions for $n_{\ell}$ neurons --- the coordinate-wise gradient of an activation function such as ReLU falls in $[0, 1]$, $u_1^{(\ell)}$ is the first left-singular vector for $W^{(\ell)}$, $v_1^{(\ell+1)}$ is the first right-singular vector for $W^{(\ell+1)}$, and $r_{\ell}=\frac{\sigma_2^{(\ell)}}{\sigma_1^{(\ell)}}$ is the ratio of the second largest singular value to the first largest singular value for $W^{(\ell)}$ \citep[Theorem 3]{virmaux2018lipschitz}.

\citet{virmaux2018lipschitz} also show that, if $u, v \in \mathbb{R}^n$ are two independent random vectors taken uniformly from $\mathbb{S}^{n-1}=\{ x \in \mathbb{R}^n \mid \|x\|_2=1\}$, the following limit:
\begin{align}
    \lim_{n \to \infty}\;
    \max_{t \in [0,1]^n} \Bigl | 
    \langle
    t \cdot u 
    \;,
    v
    \rangle
    \Bigr |
    =\frac{1}{\pi}
\end{align}
holds \emph{almost surely}. Under this independent assumption, the bound reduces to:
\begin{align}
    \mathrm{Lip}\left[f\right] \approx \frac{\mathrm{Lip}\left[f\right]^+}{\pi^{L-1}}
\end{align}
\citep[Lemma 2]{virmaux2018lipschitz}. In real setting, the singular vectors are not independent across layers. 

\subsection{Convex Optimization Relaxation}
\label{sec:estimation:convex_optimization_relaxation}

\textbf{LipSDP \citep{fazlyab2019efficient}} interprets activation functions as gradients of convex potential functions, satisfying certain properties described by quadratic constraints. Therefore, Lipschitz constant estimation problem is treated as a semi-definite program (SDP) problem.

Let $f: \mathbb{R}^{n_0} \to \mathbb{R}^{n_L}$ be an $L$-layer feed-forward network, recursively defined as:
\begin{align}
    \begin{cases}
        x^{(0)} = x \\
        x^{(k+1)} = \rho \Bigl( W^{(k)} x^{(k)} + b^{(k)} \Bigr)
    \end{cases}
    ,
\end{align}
where $x^{(\ell)}$ is the output at the $\ell$-th layer, $W^{(\ell)} \in \mathbb{R}^{n_{\ell+1} \times n_{\ell}}$ is the parameter matrix at the $\ell$-th layer, $b^{(\ell)} \in \mathbb{R}^{n_{\ell+1}}$ is the bias at the $\ell$-th layer, and $\rho$ is the coordinate-wise activation function. For a single layer network:
\begin{align}
f(x) = W^{(1)} \rho \bigl(
W^{(0)}x + b^{(0)} + b^{(1)}
\bigr) 
,
\end{align}
\citet{fazlyab2019efficient} shows that the Lipschitz constant of $f$ is given by a SDP problem:
\begin{align}
    \mathrm{Lip}\left[f\right] \leq \sqrt{t}
    ,
\end{align}
defined by:
\begin{align}
    \text{minimize}    \qquad &t \\
    \text{subject to}  \qquad &M(t, T) \preceq 0 \quad T \in T_n,
\end{align}
where $\rho$ is \emph{slope-restricted} on $[\alpha, \beta]$ \citep[Definition 1]{fazlyab2019efficient}:
\begin{align}
    \alpha \leq \frac{\rho_j(x) - \rho_j(y)}{x-y} \leq \beta \qquad \forall x,y \in \mathbb{R}
,
\end{align}
$T_n$ is a convex set \citep[Lemma 1]{fazlyab2019efficient}:
\begin{align}
T_n:= \bigl\{ T \in \mathbb{S}^n \mid T = \sum_{i=1}^n \lambda_{ii}e_ie_i^\top, \lambda_{ii} \geq 0 \bigr\}
    ,
\end{align}
and:
\begin{align}
   M(t, T) :=
   \begin{bmatrix}
       -2\alpha\beta (W^{(0)})^\top T W^{(0)} - t I_{n_0} &(\alpha + \beta)(W^{(0)})^\top T \\
       (\alpha + \beta) T W^{(0)} & -2T + (W^{(1)})^\top W^{(1)}
   \end{bmatrix} 
   \preceq 0
\end{align}
holds true for $T \in T_n$ \citep[Theorem 1]{fazlyab2019efficient}. For more general $L$-layer fully-connected network, \citet{fazlyab2019efficient} show that $M(t, T)$ is:
\begin{align}
    M(t, T) = 
    \begin{bmatrix}
        A \\
        B
    \end{bmatrix}^\top
    \begin{bmatrix}
        -2\alpha\beta\,T & (\alpha + \beta)T \\
        (\alpha + \beta)T & -2T
    \end{bmatrix}
    \begin{bmatrix}
        A \\
        B
    \end{bmatrix}
    +
    \begin{bmatrix}
        -t I_{n_0} &0 &\cdots &0 \\
        0             &0 &\cdots &0 \\
        \vdots        &\vdots &\ddots &\vdots \\
        0             &0 &\cdots &(W^{(L)})^\top\,W^{(L)}
    \end{bmatrix}
    \preceq 0
    ,
\end{align}
where:
\begin{align}
    A=\begin{bmatrix}
        W^{(0)} &0 &\cdots &0 &0 \\
        0        &W^{(1)} &\cdots &0 &0 \\
        \vdots &\vdots &\ddots &\vdots &\vdots \\
        0 &0 &\cdots &W^{(L-1)} & 0
    \end{bmatrix}
    , \qquad
    B=\begin{bmatrix}
        0 &I_{n_1} &0 &\cdots &0 \\
        0  &0 &I_{n_2} &\cdots &0 \\
        \vdots &\vdots &\vdots &\ddots &\vdots \\
        0 &0 &0 &\cdots &I_{n_L}
    \end{bmatrix}
    ,
\end{align}
and:
\begin{align}
    C=\begin{bmatrix}
        0 &\cdots &0 &W^{(L)}
    \end{bmatrix}
    ,
    \qquad
    b=\begin{bmatrix}
        (b^{(0)})^\top \cdots (b^{(L-1)})^\top
    \end{bmatrix}^\top
\end{align}
\citep[Theorem 2]{fazlyab2019efficient}.

\subsection{Integer Programming for ReLU Networks}
\label{sec:estimation:integer_programming_for_relu_networks}

It is well known that computing the Lipschitz constant of an arbitrary scalar- or vector-valued function is NP-hard \citep{virmaux2018lipschitz}, and even inapproximable under the Exponential Time Hypothesis \citep{jordan2020exactly}. A common approximation is to estimate the Lipschitz constant using gradient norms, where the Jacobians can be explicitly derived via the chain rule. However, nondifferentiabilities, \eg, those arising in ReLU networks, can introduce inaccuracies into these estimates. An alternative approach is to formulate the Lipschitz bounding problem as an integer programming problem, which avoids such inaccuracies by directly bounding the Jacobians.

\textbf{LipMIP \citep{jordan2020exactly}} is a method that provably exactly compute
$\ell_1$- and $\ell_\infty$-norm Lipschitz constants of non-smooth ReLU networks. We summarize their method by simplifying their notations. Let $f: \mathbb{R}^d \to \mathbb{R}$ be a ReLU network. Let $\rho$ be activation function. Then, an $L$-layer ReLU network is recursively defined as:
\begin{align}
    \begin{cases}
        f(x) = \rho(Z_L(x)) \\
        Z^{(\ell)}(x) = W^{(\ell)}\;\rho\Bigl(Z^{(\ell-1)}(x)\Bigr) + b^{(\ell)} \\
        Z^{(0)} = x
    \end{cases}
,
\end{align}
where $W^{(\ell)} \in \mathbb{R}^{n_{\ell} \times n_{\ell-1}}$ is the $\ell$-th layer parameter matrix, and $Z^{(\ell)}(x)$ is the output of the $\ell$-th layer neurons. Let $\partial f(x)$ be the Clarke sub-differential convex hull of $f$ at point $x$ --- see Section~\ref{sec:subdifferential}~(\nameref{sec:subdifferential}). For example:
\begin{align}
 \partial \rho(0)=[0,1]   
 .
\end{align}

\paragraph{Pushforward Sub-Differential Hull Set.} However, in practice, PyTorch implementations often set $\partial \rho(0) = 1$, which can lead to inaccuracies in estimating the Lipschitz constant. To correctly estimate the Lipschitz constant of a ReLU network, the analysis must be carried out in the sense of sub-differentials. Let $\partial f(X)$:
\begin{align}
    \partial f(X) = \Bigl\{
    g \mid\; g \in \partial f(x), x \in X
    \Bigr\}
\end{align}
be the set-valued Clarke sub-differentials at set $X$. Let $\nabla^{\sharp} f(\bullet)$ be the pushforward set that the sub-gradients of ReLU activation at value $0$ come from the hull set $\partial \rho(0)$:
\begin{align}
    \nabla^{\sharp} f(\bullet) =\Bigl\{
    \nabla f(\bullet) \mid \text{the sub-differentials of activations come from the hull set}~\partial\rho(0)
    \Bigr\}
    .
\end{align}
Thus for a ReLU network, the \emph{feasible set} for its gradients on $X$ is given as:
\begin{align}
\nabla^{\sharp} f(X) =
  \Bigl\{   
    G \in \nabla^{\sharp} f(x) \mid x \in X 
  \Bigr\}
  .
\end{align}

Theoretically, \citep{jordan2020exactly} show that the push forward set sub-gradient $\nabla^{\sharp} f(x)$ is the Clarke sub-gradient hull $\partial f(x)$, that is:
\begin{align}
    \nabla^{\sharp} f(x) = \partial f(x)
\end{align}
\citep[Theorem 2]{jordan2020exactly} with all gradients come from $\partial \rho(0)$. Then the local $p \to q$ Lipschitz constant of $f$ on $X$ is given as:
\begin{align}
    \mathrm{Lip}_{p \to q}\left[f;X\right] =\sup_{G \in \nabla^{\sharp} f(X)} \sup_{\|v\|_p \leq 1} \frac{\|G^\top\,v\|_{q}}{\|v\|_p}
    = \sup_{G \in \nabla^{\sharp} f(X)} \|G^\top \|_q
    .
\end{align}
Finding the Lipschitz constant on $X$ can be formulated as an integer programming problem on the feasible set $\nabla^{\sharp} f(X)$.

\paragraph{Solving the Supremum of Sub-Differential Hull Set.} The problem:
\begin{align}
    \sup_{G \in \nabla^{\sharp} f(X)} \; \Bigl\{
    \|G^\top\|_q \mid G \in \nabla^{\sharp} f(X)
    \Bigr\}
\end{align}
is then represented as a \emph{mixed-integer polytope}, which is a mixed-integer programming problem for maximizing $\|G^\top\|_q$ by solving the combinations of input $x \in X$ and ReLU's sub-gradient $a \in [0, 1]^n$ in $X \times [0, 1]^n$ \citep[Definition 6 \& Lemma 1]{jordan2020exactly}. This problem can be solved by off-the-shelf MIP solvers.

\subsection{Remarks}

The estimation methods surveyed above involve an inherent trade-off between computational efficiency, tightness, and scalability. Power iteration \citep{mises1929praktische,yoshida2017spectral,miyato2018spectral} (see Section~\ref{sec:estimation:power_iteration}) is lightweight and differentiable, but it only provides a layer-wise upper-bound estimate of the spectral norm, and therefore may induce a loose network-level upper bound when combined across layers (see Proposition~\ref{prop:lipschitz_constant_of_linear_unit} and Proposition~\ref{prop:survey_upper_bound_lipschitz}). Extreme-value-theoretic approaches such as CLEVER \citep{weng2018clever} (see Section~\ref{sec:estimation:extreme_value_theory}) are scalable and often yield tight empirical estimates of local Lipschitz behavior by leveraging the extreme value distributions of observed Lipschitz constants (see Lemma~\ref{lemma:lipschitz_bounds_gradient}), yet they do not provide formal certificates. Methods based on coordinate-wise gradient propagation and spectral alignment, such as Fast-Lip \citep{weng2018fastlip} (see Section~\ref{sec:estimation:coordinate_wise_gradient}) and SeqLip \citep{virmaux2018lipschitz} (see Section~\ref{sec:estimation:spectral_alignment}), improve upon spectral-norm product bounds by exploiting architectural structure (see Proposition~\ref{prop:survey_upper_bound_lipschitz}), but they remain approximate and are typically tailored to specific network classes. In contrast, convex relaxation methods such as LipSDP \citep{fazlyab2019efficient} (see Section~\ref{sec:estimation:convex_optimization_relaxation}) and mixed-integer formulations such as LipMIP \citep{jordan2020exactly} (see Section~\ref{sec:estimation:integer_programming_for_relu_networks}) offer substantially stronger guarantees, with the latter being exact for ReLU networks on bounded domains by working directly with the feasible Jacobian or sub-differential set (see Section~\ref{sec:subdifferential}); however, these methods are considerably more computationally expensive and generally require additional architectural information to remain tractable.

\section{Regularization Approaches}
\label{sec:lipschitz_regularizations}

A variety of techniques have been developed to explicitly or implicitly enforce Lipschitz continuity in deep neural networks. These approaches can be categorized into:
\begin{enumerate}[label=(\roman*)]
    \item Section~\ref{sec:lipschitz_regularizations:weight}: \textbf{\nameref{sec:lipschitz_regularizations:weight}} --- constraining or regularizing weight matrices during initialization and training;
    
    \item Section~\ref{sec:lipschitz_regularizations:gradient}: \textbf{\nameref{sec:lipschitz_regularizations:gradient}} --- penalizing or normalizing gradient norms during training;

    \item Section~\ref{sec:lipschitz_regularizations:activation}: \textbf{\nameref{sec:lipschitz_regularizations:activation}} --- designing or constraining activation functions to be Lipschitz-bounded;

    \item Section~\ref{sec:lipschitz_regularizations:class_margin}: \textbf{\nameref{sec:lipschitz_regularizations:class_margin}} --- implicitly enforcing Lipschitz constraints via maximizing class decision boundaries;

    \item Section~\ref{sec:lipschitz_regularizations:architecture}: \textbf{\nameref{sec:lipschitz_regularizations:architecture}} --- enforcing Lipschitz constraints by designing architectures, particularly for transformers.

\end{enumerate}

\subsection{Weight Regularization}
\label{sec:lipschitz_regularizations:weight}

This subsection reviews methods that control the Lipschitz constant by directly constraining or regularizing the network's weight parameters, either at initialization or during training.

\subsubsection{Weight Clipping}
Weight clipping is a straightforward method to enforce Lipschitz continuity by limiting the magnitude of weight matrices. This is because for a matrix $W$, the operator-norm variation $\|W\|_{op}$ in $\ell_2$ for $W$ under perturbation $\Delta\,W$ is bounded above by:
\begin{align}
    \|W\|_{op} = \sigma_1(W) = \langle u_1 v_1^\top,\; \Delta W \rangle \leq \|u_1 v_1^\top\|_2\; \|\Delta\,W\|_2
    ,
\end{align}
with Cauchy--Schwarz inequality, where $\sigma_1(W)$ is the largest singular value of $W$, $u_1, v_1$ are the left- and right-singular vectors corresponding to the largest singular value \citep[Lemma 5.1]{luo2025spectralvariation}.

In the setting of generative adversarial networks (GANs), \citet{arjovsky2017wasserstein} showthat the discriminator $f$ of a GAN evaluates the Wasserstein distance $W\left(\mathbb{P}, \mathbb{Q}\right)$ between two distributions $\mathbb{P}$ and $\mathbb{Q}$. The Kantorovich-Rubinstein duality \citep{villani2008optimal} tells that:
\begin{align}
    W\left(\mathbb{P}, \mathbb{Q}\right) = \frac{1}{K} \; \sup_{\mathrm{Lip}\left[f\right] \leq K} \;
    \mathbb{E}_{x \sim \mathbb{P}}\bigl[f(x)\bigr] - \mathbb{E}_{y \sim \mathbb{Q}}\bigl[f(y)\bigr]
\end{align}
\citep{arjovsky2017wasserstein}. Therefore reducing the Lipschitz constant $K$ can reduce the Wasserstein distance $W\left(\mathbb{P}, \mathbb{Q}\right)$. \citet{arjovsky2017wasserstein} propose clipping weights to a fixed range:
\begin{align}
W \leftarrow \text{clip}(W, -c, +c)
\end{align}
\citep[Algorithm 1]{arjovsky2017wasserstein}, where $c$ is a small positive constant (\eg, $c=0.01$). Weight clipping is computationally efficient, but can lead to exploding or vanishing gradients if $c$ is too small, reducing model capacity, as criticized in later literature \citep{gulrajani2017improved}.

\subsubsection{Spectral Normalization and Regularization}

\paragraph{Explicit Regularization by Penalizing Spectral Norm.}
Referring to Propositions~\ref{prop:lipschitz_constant_of_linear_unit} (\nameref{prop:lipschitz_constant_of_linear_unit}) and~\ref{prop:survey_upper_bound_lipschitz} (\nameref{prop:survey_upper_bound_lipschitz}), the product of the spectral norms of the parameter matrices gives rise to an upper bound on the Lipschitz constant of a neural network. To penalize this Lipschitz bound during training and thereby improve generalization, \citeauthor{yoshida2017spectral} propose, for an $L$-layer network parameterized by:
\begin{align}
W:=(W^{(1)},W^{(2)},\cdots,W^{(L)})
,\notag
\end{align}
the following regularization objective:
\begin{align}
    \min_{W} \mathscr{L}_{\mathrm{task}}(\xi;W) + \frac{\lambda}{2}\sum_{\ell=1}^L \big(\sigma_1^{(\ell)}\big)^2,
\end{align}
where $\mathscr{L}_{\mathrm{task}}(\xi;W)$ represents the task loss evaluated on mini batch $\xi$, $\lambda$ denotes regularization coefficient and $\sigma_1^{(\ell)}$ denotes the largest singular value of the $\ell$-th parameter matrix \citep[Equation 1]{yoshida2017spectral}. In implementation, each $\sigma_1^{(\ell)}$ is estimated using power iteration (see Section~\ref{sec:estimation:power_iteration}).

\paragraph{Implicit Regularization by Normalizing Spectral Norm.} A challenge for Generative Adversarial Networks (GANs) \citep{goodfellow2014generative} is that the prediction of discriminator is often inaccurate and unstable during training \citep{arjovsky2017towards,miyato2018spectral}. \citet{miyato2018spectral} propose normalizing the weight matrix $W$ of a linear layer by:
\begin{align}
W \leftarrow \frac{W}{\|W\|_2} = \frac{W}{\sigma_1}
,
\end{align}
where $\sigma_1$ is the largest singular value of parameter matrix $W$, for ensuring a Lipschitz constant of at most $1$ for the layer \citep[Equation 8]{miyato2018spectral}. Readers can also refer to Proposition~\ref{prop:spectralnorm_bound_single_layer}. This method was proposed by \citeauthor{miyato2018spectral} for training generative adversarial networks (GANs) \citep{goodfellow2014generative}, improving the generations \citep[\S~2.1]{miyato2018spectral}.

\subsubsection{Orthogonal Weight}

\paragraph{Parseval Networks \citep{cisse2017parseval}.} Parseval networks enforce approximate orthonormality on weight matrices to bound their spectral-norms \citep{cisse2017parseval}. For a weight matrix $W^{(\ell)}$ in layer $\ell$, \citet{cisse2017parseval} minimize the $2$-norm of the deviation from orthonormality:
\begin{align}
\|W^{(\ell)\top} W^{(\ell)} - \mathrm{I}\|_F,
\end{align}
ensuring $\|W^{(\ell)}\|_2 \leq 1$ \citep[\S~4.2]{cisse2017parseval}. For a feedforward network with $L$ layers and 1-Lipschitz activations (\eg, ReLU), the global Lipschitz constant is bounded by:
\begin{align}
K \leq \prod_{\ell=1}^L \|W^{(\ell)}\|_2 \leq 1
,
\end{align} 
(see Proposition~\ref{prop:survey_upper_bound_lipschitz} (\nameref{prop:survey_upper_bound_lipschitz})). \citep{cisse2017parseval} argue that the robustness to adversarial attacks is enhanced by the reduced sensitivity to perturbations. %This regularization for orthogonal weights is also used in \citep{brock2017neuralphotoeditingintrospective} for training GANs.

\subsection{Gradient Regularization}
\label{sec:lipschitz_regularizations:gradient}

Gradient-based regularization methods enforce Lipschitz continuity by constraining the gradient norm of the loss function or network output with respect to inputs, ensuring smooth decision boundaries and robustness. Note that, for a network $f$ on convex $X$, its $2$-norm Lipschitz constant on $X$ is given by:
\begin{align}
    \mathrm{Lip}\left[f\right] = \sup_{x \in X} \|\nabla\,f(x)\|_2
    ,
\end{align}
(see Lemma~\ref{lemma:lipschitz_bounds_gradient} (\nameref{lemma:lipschitz_bounds_gradient})). Theoretically, constraining $\|\nabla\,f(x)\|_2$ can bound Lipschitz constant of $f$. 

For example, \citet{gulrajani2017improved} argue that bounding Lipschitz constant by weight clipping can lead to exploding or vanishing gradients, and reduce capacity of the critic in a GAN. They then introduce a regularization method by directly bounding the gradients of the critic network $D$ through:
\begin{align}
    \mathbb{E}_{x\,\sim\,\mathbb{P}_X}
    \Bigl[
    \bigl(
    \|\nabla\,D(x)\|_2 -1
    \bigr)^2
    \Bigr]
    ,
\end{align}
where $\mathbb{P}_X$ is the distribution of training sample set $X$ \citep[\S~4]{gulrajani2017improved}.

\subsection{Activation Regularization}
\label{sec:lipschitz_regularizations:activation}

Activation regularization methods modulate Lipschitz continuity by constraining the properties of activation functions or their outputs. Such approaches may impose explicit norm bounds, design activations that are inherently norm-preserving, or apply penalties to activation magnitudes, thereby limiting the contribution of nonlinearities to the overall Lipschitz constant of the network.

\subsubsection{Group-Sorting Activation}

Bounding the Lipschitz constant of a network by ensuring $1$-Lipschitz for affine units through spectral-norm constraints can improve robustness \citep{yoshida2017spectral,miyato2018spectral}. However, this often comes at the cost of reduced expressiveness \citep{anil2019sorting}. \citet{anil2019sorting} first show that, to remain expressive in a spectral-norm constrained network, the network must preserve the gradient norms at each layer \citep{anil2019sorting}. ReLU networks can only satisfy this for positive values, this is because:
\begin{align}
    \frac{\partial\,\mathrm{ReLU}(x)}{\partial\,x} = \begin{cases}
        1, &x > 0\\
(0,1), &\text{in sub-differential sense} \\ 
        0, &x < 0
    \end{cases}
    .
\end{align}
To preserve the gradient norms at activation layer, \citet{anil2019sorting} use a general purpose $1$-Lipschitz
activation function $\textbf{GroupSort}(x)$ which is homogeneous: 
\begin{align}
\textbf{GroupSort}(\alpha\,x)=\alpha\,\textbf{GroupSort}(x)    
\end{align}
\citep{anil2019sorting}. \textbf{GroupSort} activation separates
a pre-activation $x$ into $k$ groups:
\begin{align}
    x = (
    \underbrace{
    x_1, \cdots, x_{g_1}
    }_{\text{group 1}}
    ,\, 
    \underbrace{
    x_{g_1+1}, \cdots, x_{g_2}
    }_{\text{group 2}}
    ,\cdots\,
    \underbrace{
    x_{g_{k-1}+1}, \cdots, x_{g_k}
    }_{\text{group k}}
    )
    ,
\end{align}
and sorts each group into ascending order:
\begin{align}
    \textbf{GroupSort}(x) = (
    \underbrace{
    x_{s_1}, \cdots, x_{s_{g_1}}
    }_{\text{group 1}}
    ,\, 
    \underbrace{
    x_{s_{g_1}+1}, \cdots, x_{s_{g_2}}
    }_{\text{group 2}}
    ,\cdots\,
    \underbrace{
    x_{s_{g_{k-1}}+1}, \cdots, x_{s_{g_k}}
    }_{\text{group k}}
    )
,
\end{align}
where the $i$-th sorted group satisfies:
\begin{align}
x_{s_{g_{i-1}}+1} \leq
x_{s_{g_{i-1}}+2} \leq
\cdots \leq
x_{s_{g_{i}}}
\end{align}
\citep{anil2019sorting}. For the pre-activation in a linear unit:
\begin{align}
   Wx 
\end{align}
where $W \in \mathbb{R}^{m \times n}$ and $x \in \mathbb{R}^n$, if the linear unit is with spectral-norm constraint such that:
\begin{align}
    \|W\|_2 = 1
    ,
\end{align}
\textbf{GroupSort} activation directly gives rise to a $1$-Lipschitz constant activation:
\begin{align}
    \sup_{\|x\|_2=1} \| \textbf{GroupSort}(Wx) \|_2
    = \sup_{\|x\|_2=1} \| Wx \|_2 = \|W\|_2.
\end{align}
When the group size is $2$, \citet{anil2019sorting} refers to this special case as \textbf{MaxMin}, which is equivalent to the Orthogonal Permutation Linear Unit (OPLU) \citep{chernodub2017normpreservingorthogonalpermutationlinear}; when the group size is the entire input, this is referred to as \textbf{FullSort} in the literature \citep{chernodub2017normpreservingorthogonalpermutationlinear}.

To train a network with \textbf{GroupSort} activations and guarantee that all linear units are exactly $1$-Lipschitz during optimization instead of bounded by $1$ \citep{cisse2017parseval,gulrajani2017improved}, \citet{chernodub2017normpreservingorthogonalpermutationlinear} approximate the updated parameter matrix $W_0$ with a closest orthogonal matrix through a differentiable, iterative algorithm, referred to as \emph{Bj\"orck Orthogonalization} \citep{bjorck1971}. This algorithm is defined by:
\begin{align}
    W_{k+1} = W_k \Bigl(
    I + \frac{1}{2} Q_k + \cdots + (-1)^p 
    \begin{pmatrix}
        -\frac{1}{2} \\
        p
    \end{pmatrix}
    Q_k^p
    \Bigr)
    ,
\end{align}
where $W_{k}$ is the $k$-th iterative result, $p$ is a chosen hyper-parameter and $Q_k$ is:
\begin{align}
    Q_k = I - W_k^\top W_k
\end{align}
\citep[\S~4.2.1]{chernodub2017normpreservingorthogonalpermutationlinear}. As a result, the iteration at step $T$ leads to:
\begin{align}
    W_T^\top W_T \approx I
    ,
\end{align}
which provably ensures the linear unit with parameter matrix $W$ is $1$-Lipschitz continuous:
\begin{align}
    \|W_T\|_2 \approx 1.
\end{align}

\subsubsection{Contraction Activation and Invertible Residual Map}

Flow-based models or flows learn to transform a source distribution $p_X$ to target distribution $p_Z$, consisting of invertible networks \citep{Rezende2015VariationalNormalizingFlows}. Ensuring that the transformation is invertible and its Jacobian is computable is a straightforward way for ensuring invertibility:
\begin{align}
    \log\,p_X(x) = \log\,p_Z(z) + \log\, \left| \det J_F(x) \right|
\end{align}
where $F: \mathbb{R}^m \to \mathbb{R}^m$ is an invertible map and $J_F(x)$ is the Jacobian of $F$ at $x$ \citep{berg2018sylvester,dinh2017density}. Different from explicit computation of Jacobian, \citet{behrmann2019invertible} propose a method for inverting residual networks \citep{he2016deep} by ensuring the Lipschitz constants be less than $1$, so that the residual networks are \emph{contraction maps} \citep{behrmann2019invertible,perugachi2021invertible}. 

\paragraph{Inverting Residual Layer.} Let 
\begin{align}
    F(x) = x + g(x)
\end{align}
be a residual layer where $F: \mathbb{R}^m \to \mathbb{R}^m$, $g: \mathbb{R}^m \to \mathbb{R}^m$ and $x \in \mathbb{R}^m$. Let $g$ be a contraction map so that:
\begin{align}
    \mathrm{Lip}\left[g\right] < 1
    .
\end{align}
Suppose that the inversion $F^{-1}$ exists and set $y =F(x)$, so that:
\begin{align}
x = F^{-1}(y) =  y - g(x).
\end{align}
A \textbf{sufficient condition} for inverting $F(x)$ is that $g$ is a contraction map:
\begin{align}
    \mathrm{Lip}\left[g\right] < 1
\end{align}
\citep[Theorem 1]{behrmann2019invertible}. Set:
\begin{align}
    T(x) = F^{-1}(y) = y - g(x)
    ,
\end{align}
then the inversion of $y$ is a fixed-point problem for solving:
\begin{align}
    x = T(x)
    .
\end{align}
Using \emph{Banach contraction principle} or \emph{Banach fixed-point theorem}, set $x_0=y$, the $F^{-1}(y)$ can be approximated through:
\begin{align}
    x_{k+1} = y - g(x_k)
\end{align}
iteratively.

\paragraph{Contraction Activation.} \citet{chen2019residualflows} first introduce \textbf{LipSwish} activation function for inverting residual networks, defined as:
\begin{align}
    \textbf{LipSwish}(x) = \frac{x\sigma(\beta\,x)}{1.1}
    ,
\end{align}
where $\sigma$ is the sigmoid function, and $\beta > 0$ is a learnable positive constant (kept positive via softplus), initialized with $0.5$. $\textbf{LipSwish}$ has a Lipschitz constant at most $1$. However, the negative axis of $\textbf{LipSwish}$ has zero gradients almost everywhere. In a $1$-Lipschitz continuous network, the Jacobian norm across two consecutive layers is reduced to be at most one, which limits the network's expressiveness and referred to as \emph{gradient norm attenuation} problem \citep{anil2019sorting,li2019preventinggradientattenuation}. To mitigate this problem, \citet{perugachi2021invertible} use specially designed activation function \textbf{CLipSwish} by concatenating two \textbf{LipSwish} functions. The \textbf{CLipSwish} is therefore given as:
\begin{align}
    \Phi(x)=
    \begin{bmatrix}
        \textbf{LipSwish}(x) \\
        \textbf{LipSwish}(-x)
    \end{bmatrix}
    ,
    \qquad
    \textbf{CLipSwish}(x)= \frac{\Phi(x)}{\mathrm{Lip}\left[\Phi\right]} \leq 1
    ,
\end{align}
where $\mathrm{Lip}\left[\Phi\right] \approx 1.004$ \citep[\S~3.4]{perugachi2021invertible}. This concatenation overcomes the gradient attenuation problem introduced by $\textbf{LipSwish}$ since the negative axis has zero gradients almost everywhere while ensuring the activation is a contraction map.

\subsection{Class-Margin Regularization}
\label{sec:lipschitz_regularizations:class_margin}

\paragraph{Input Margin.} The goal of adversarial defense in classification task is to ensure that, for a $k$-class classifier $f: \mathbb{R}^d \to \mathbb{R}^k$, and an input $x \in \mathbb{R}^d$, a bounded perturbation $\|\delta_x\|_2 < c$ does not alter the prediction:
\begin{align}
    \hat{y} = \argmax\limits_{i \in [k]=\{1,2,\cdots,k\}} \;f_i(x+\delta_x),
\end{align}
where $f_j(x)$ denotes the output at the $j$-th coordinate, and the $\delta_x$ giving rise to the minimal $\|\delta_x\|_2$ is often referred to as $2$-norm \emph{input margin} \citep{Jonas2024marginconsistency}. More generally, \emph{input margin} can also be discussed with respect to $p$-norm ($1\leq p \leq \infty$). The $p$-norm \emph{input margin} represents the decision boundary in the input space with respect to the topology induced by $p$-norm.

\paragraph{Class/Logit Margin.} Of the same setting, the \emph{class margin} (\ie~\emph{logit margin}) $m(x)$ is defined as:
\begin{align}
    m(x) = f_i(x) - \max_{j \neq i} f_j(x)
    , \label{eq:classmargin_mx}
\end{align}
where $i$ is the ground-truth \citep[\S~4.1]{tsuzuku2018lipschitz}. By writing equation~(\eqref{eq:classmargin_mx}) into:
\begin{align}
m(x) = (e_i - e_j)^\top
    \begin{bmatrix}
        f_i(x) \\
        \max_{j \neq i} f_j(x)
    \end{bmatrix}
    ,
\end{align}
where $e_i$ and $e_j$ are one-hot basis, for $2$-norm, applying Proposition~\ref{prop:lipschitz_concatenation} (\nameref{prop:lipschitz_concatenation}) immediately yields:
\begin{align}
\mathrm{Lip}\left[m\right]^2 \leq  \mathrm{Lip}\left[f\right]^2 + \mathrm{Lip}\left[f\right]^2 \Longrightarrow \mathrm{Lip}\left[m\right] \leq \sqrt{2} \; \mathrm{Lip}\left[f\right]
.
\end{align}
Therefore, a guaranteed $2$-norm perturbation $\delta_x$ does not alter the prediction, if:
\begin{align}
\label{eq:2norm_perturbation_deltax}
    \|\delta_x\|_2 \leq \frac{m(x)}{\mathrm{Lip}\left[m\right]} = \frac{m(x)}{\sqrt{2} \;\mathrm{Lip}\left[f\right]}
\end{align}
\citep[also see][Proposition 1 \& 2]{tsuzuku2018lipschitz}, which implies that the class margin must be at least:
\begin{align}
    m(x) = f_i(x) - \max_{j \neq i} f_j(x) \geq \sqrt{2} \,c\, \mathrm{Lip}\left[f\right]
    .
\end{align}

\paragraph{Lipschitz-Margin Training.} Of the same setting, in training stage, \citet{tsuzuku2018lipschitz} adjust each class margin $j \neq i$ by:
\begin{align}
    f_j(x) \leftarrow f_j(x) + \sqrt{2}\,c\,\mathrm{Lip}\left[f\right]
    ,
\end{align}
where $c$ is a guaranteed perturbation bound \citep[Algorithm 1]{tsuzuku2018lipschitz}. For each linear unit $\phi: \mathbb{R}^m \to \mathbb{R}^n$ in the network, and let $u \in \mathbb{R}^m$ be drawn i.i.d. from $\mathcal{N}(0,1)$. \citet{tsuzuku2018lipschitz} use a general method for estimating the Lipschitz constant of $\phi$ iteratively by:
\begin{align}
    u \leftarrow \frac{u}{\|u\|_2}, \qquad
    v \leftarrow \phi(u), \qquad
    \sigma \leftarrow \|v\|_2, \qquad
    u \leftarrow \frac{1}{2}\frac{\partial\,\|v\|_2^2}{\partial\,u}
\end{align}
\citep[Theorem 1 \& Algorithm 2]{tsuzuku2018lipschitz}. At the end of the iteration $T$, the Lipschitz constant of $\phi$ is given by:
\begin{align}
    \mathrm{Lip}\left[\phi\right] \approx \sigma
,
\end{align}
almost surely.

\subsection{Architectural Regularization}
\label{sec:lipschitz_regularizations:architecture}

Lipschitz continuity can also be constrained through architectural design choices, including activation functions, parameter structures, initialization schemes, and optimization. As a complement to regularization approaches that do not fall into one single category, we survey these approaches in this section.

\subsubsection{Orthogonalizing Convolution}

Parameter orthogonality in neural networks can directly yield $1$-Lipschitz continuous (Proposition~\ref{prop:column_orthogonal_matrix})~(\nameref{prop:column_orthogonal_matrix})). This is because, for a matrix $W \in \mathbb{R}^{m \times n}$, a semi-orthogonal matrix $W$ preserves the $2$-norm:
\begin{align}
    W^\top\,W = I_n
    \quad \text{or} \quad
    WW^\top = I_m
    \quad
    \Longrightarrow 
    \quad
    \mathrm{Lip}\left[ W \right]=1
    ,
\end{align}
if $m \geq n$ and $W^\top W=I_n$, $W$ is an isometric map; if $m < n$ and $WW^\top = I_m$, $W$ is non-expansive, vanishing on the kernel $\ker(W)$ (Proposition~\ref{prop:column_orthogonal_matrix})~(\nameref{prop:column_orthogonal_matrix})).

\paragraph{Extending Semi-Orthogonality to Convolution.} Let $W \in \mathbb{R}^{c_{out} \times c_{in} \times n \times n}$ be a convolutional filter where $c_{out}$ is the number of output channels, $c_{in}$ is the number of input channels, and $n \times n$ are the input dimensions. The action on an input $x \in \mathbb{R}^{c_{in} \times n \times n}$ is denoted by:
\begin{align}
    W \circledast x: \mathbb{R}^{c_{in} \times n \times n} \to \mathbb{R}^{c_{out} \times n \times n}.
\end{align}
Using the norm preserving concept for $1$-Lipschitz maps on $\ell_2$, for $\forall\;x \in \mathbb{R}^{c_{in} \times n \times n}$, if the $2$-norms are preserved:
\begin{align}
    \|W \circledast x\|_2 = \|x\|_2
    ,
\end{align}
then the convolutional filter $W$ is referred to as semi-orthogonal \citep{trockman2021orthogonalizing}.

\paragraph{Filter Orthogonalization via Cayley Transform.} However, convolutional operation is not matrix multiplication, to apply the existing results from matrix theory, \citet{trockman2021orthogonalizing} harness the \emph{Convolutional Theorem} \citep{jain1989fundamentals} for converting the convolutional operation in spatial domain to multiplication in frequency domain by:
\begin{align}
    \mathscr{F}\left[W \circledast x\right][:,i,j] =\mathscr{F}\left[W\right][:,:,i,j] \; \mathscr{F}\left[x\right][:,:,i,j] = \widehat{W}[:,i,j]\widehat{x}[:,i,j]
    ,
\end{align}
where $\mathscr{F}$ represents Fourier transform, $\widehat{W}=\mathscr{F}\left[W\right]$, $\widehat{x}=\mathscr{F}\left[x\right]$, $[:,:,i,j]$ and $[:,i,j]$ are slicing operations by fixing indices $(i,j)$) \citep[Equation 3 \& 4, \S~4]{trockman2021orthogonalizing}. Note that $\widehat{W}[:,:,i,j]$ is not orthogonal or semi-orthogonal, \citet{trockman2021orthogonalizing} use a bijective \emph{Cayley Transform} for deriving an orthogonal matrix $Q[:,:,i,j]$ from $\widehat{W}[:,:,i,j]$ by:
\begin{align}
    Q[:,:,i,j] = (I - A[:,:,i,j])(I + A[:,:,i,j])^{-1},
\end{align}
where:
\begin{align}
    A[:,:,i,j] = \widehat{W}[:,:,i,j] - \widehat{W}[:,:,i,j]^\ast 
    ,
\end{align}
and $A[:,:,i,j]$ is referred to as \emph{skew-Hermitian} matrix \citep{trockman2021orthogonalizing}. Conceptually, the convolution on frequency domain is then computed through:
\begin{align}
    \mathscr{F}\left[W \circledast x \right][:,i,j] &= Q[:,:,i,j] \; \widehat{x}[:,i,j] \\
    &=(I - A[:,:,i,j])(I + A[:,:,i,j])^{-1} \;\widehat{x}[:,i,j]
    ,
\end{align}
then $Q[:,:,i,j] \; \widehat{x}[:,i,j]$ is inverted back to spatial domain by:
\begin{align}
    \mathscr{F}^{-1}\left[ Q[:,:,i,j] \; \widehat{x}[:,i,j]\right],
\end{align}
which leads to the convolution $1$-Lipschitz continuous.

\subsubsection{Orthogonality with Lie Unitary Group}

\citet{lezcano2019cheap} introduce a method, stemming from Lie group theory, through the exponential map, for ensuring $1$-Lipschitz by construction. Orthogonal
constraints networks can improve the robustness and generalization capabilities \citep{huang2018orthogonalweight,Bansal2018orthogonality}. For a matrix $A$, if $AA^\ast =I$, then the matrix $A$ is referred to as a unitary matrix. Unitary matrices allow to solve the exploding or vanish gradients \citep{Arjovsky2016unitary,arjovsky2017wasserstein}. In the sense of Lie algebra, these unitary matrices form a special orthogonal group on field $\mathbb{R}$:
\begin{align}
    SO(n) = \Bigl\{ 
    A \in \mathbb{R}^{n \times n} \mid A^\top A = I
    \Bigr\}
\end{align}
and unitary group on field $\mathbb{C}$:
\begin{align}
    U(n) = \Bigl\{ 
    A \in \mathbb{C}^{n \times n} \mid A^\ast  A = I
    \Bigr\}
\end{align}
\citep[\S~3.1]{lezcano2019cheap}. We are interested in converting a parameter matrix $W \in \mathbb{R}^{n \times n}$ to $SO(n)$ or $U(n)$. \citet{lezcano2019cheap} harness the concept of \emph{skew-symmetric matrix} group (Definition~\ref{def:skew_hermitian}~(\nameref{def:skew_hermitian})) as a bridge:
\begin{align}
    \mathfrak{so}(n)=\{B\in\mathbb{R}^{n\times n}\;:\;B=-B^\top\}
\end{align}
and:
\begin{align}
    \mathfrak{u}(n)=\{B\in\mathbb{R}^{n\times n}\;:\;B=-B^\ast \}
    .
\end{align}
Taking:
\begin{align}
    B=W - W^\top
\end{align}
can easily send a parameter matrix $W$ to $\mathfrak{u}(n)$. 

\paragraph{Connecting to Lie Orthogonal Group.} Then there exists an exponential \emph{surjective} map between two groups:
\begin{align}
    \exp: \mathfrak{so}(n) \to SO(n)
\end{align}
and:
\begin{align}
    \exp: \mathfrak{u}(n) \to U(n)
,
\end{align}
which is defined as:
\begin{align}
    \exp(B) = I + B + \frac{1}{2}B^2 + \cdots
\end{align}
\citep[\S~3.2]{lezcano2019cheap}. Then a network $f(x; W)$ is parameterized on the Lie group $SO(n)$ by an algebraic mapping chain:
\begin{align}
    W \in \mathbb{R}^{n \times n} 
    \xlongrightarrow[]{B=W-W^\top}  B \in \mathfrak{so}(n) \xlongrightarrow[]{A=\exp(B)}  A \in SO(n)
    ,
\end{align}
and the optimization problem becomes:
\begin{align}
    \min_{A}~f(x; A) \Longleftrightarrow \min_{B}~f(x; \exp(B))
\end{align}
\citep[\S~3.3]{lezcano2019cheap}.

\paragraph{Optimizing on Lie Orthogonal Group.} The Lie group $SO(n)$ forms a Riemannian manifold, to optimize a function $f$ on the Riemannian manifold $SO(n)$:
\begin{align}
    f: \mathcal{X} \times SO(n) \to \mathbb{R}
    ,
\end{align}
where $\mathcal{X}$ is input space, one may use \emph{Riemannian gradient descent} \citep{absil2009optimization}. Given a point on the manifold $A \in SO(n)$, $\mathcal{T}_A\,SO(n)$ is the tangent space at $A$ with induced metric, and a gradient $\Omega \in \mathcal{T}_A\,SO(n)$, then the geodesic update is:
\begin{align}
    A \leftarrow A \, \exp\left(-\eta\,A^\ast  \, \Omega\right)
    ,
\end{align}
where $\eta$ is a learning rate $\eta > 0$ \citep{lezcano2019cheap}. Computing the Riemannian exponential map is expensive; a cheaper first-order approximation, the \emph{Cayley map}, may be used for the update instead:
\begin{align}
    A \leftarrow A \, \phi\left(-\eta\,A^\ast  \, \Omega\right)
    ,
\end{align}
where $\phi$ is the \emph{Cayley map}:
\begin{align}
    \phi(A) = (I+\frac{1}{2}A)(I-\frac{1}{2}A)^{-1}    
\end{align}
\citep{Wisdom2016FullCapacity,Vorontsov2017Orthogonality}. Finally, this induces an update rule on $B \in \mathfrak{so}(n)$:
\begin{align}
    \exp(B) \leftarrow \exp\Bigl(B-\eta\,\nabla\,f\left(x;\exp(B)\right)\Bigr)\,
\end{align}
where $\nabla\,f\left(x;\exp(B)\right)$ is the usual gradient with Euclidean metric. In a specially designed recurrent neural network (RNN), \citet{lezcano2019cheap} transform the matrix exponential maps skew-symmetric matrices to orthogonal matrices transforming an optimization problem with orthogonal constraints \citep{lezcano2019cheap}. Particularly, \citet{lezcano2019cheap} use Pad\'e approximants and the scale-squaring trick to compute machine-precision approximations of the matrix exponential and its gradient \citep{lezcano2019cheap}. As a result, the optimization remains in the group and the Lipschitz constants of the updated matrices remain $1$ by design.

\subsubsection{Lipschitz Continuous Transformers}

\paragraph{$L_2$ Self-Attention Transformer.} \citeauthor{kim2021lipschitz} have shown that dot-product self-attention is non-globally Lipschitz continuous \citep[Theorem 3.1.]{kim2021lipschitz}. Referring to \eqref{eq:shdp_attention} and \eqref{eq:KQV_def} in Section~\ref{sec:selfattention_lipschitz}, recall that the distance matrix for \emph{query} $Q$ and \emph{key} $K$ in multi-head dot-product attention matrix are computed through their row-wise dot-product:
\begin{align}
    P:= \mathrm{Softmax}\Bigl( \frac{QK^\top}{\sqrt{d_K}} \Bigr) = \mathrm{Softmax}\Bigl( \frac{xW_Q(xW_K)^\top}{\sqrt{d_K}} \Bigr) \propto \exp{\Bigl(\frac{xW_Q(xW_K)^\top}{\sqrt{d_K}}\Bigr)}
    ,
\end{align}
which leads to unbounded Jacobian norm. To solve the issue caused by dot-product attention computation, \citeauthor{kim2021lipschitz} propose to use $L_2$ distance instead by:
\begin{align}
    P_{ij} \propto \exp{\Bigl(\frac{-\|x_iW_Q - x_jW_K\|_2^2 }{\sqrt{d_K}}\Bigr)}
    ,
\end{align}
where $x_i$ and $x_j$ are the $i$-th and $j$-th tokens of $x$, and $P_{ij}$ represents their attention score \citep[Equation 8]{kim2021lipschitz}. 

\citeauthor{kim2021lipschitz} further show that, for a sequence length $n$, input dimension $d$, and $h$ heads, its $2$-norm Lipschitz bound is:
\begin{align}
\mathrm{Lip}\left[L_2\mathrm{-Attention}\right]
\leq
\frac{\sqrt{n}}{\sqrt{d_K}}
\Bigl[
4\phi^{-1}(n-1)+1
\Bigr]
\Biggl(
\sqrt{
\sum_{h_i} 
\|W_{Q,h_i}\|_2^2\;
\|W_{V,h_i}\|_2^2\;
}
\Biggr)
\|W_O\|_2 
, \label{equ:kims_dotproduct_bound:l2}
\end{align}
and the $\infty$-norm Lipschitz bound:
\begin{align}
\mathrm{Lip}_{\infty}\left[L_2\mathrm{-Attention}\right]
\leq 
\Biggl[
4\phi^{-1}(n-1) + \frac{1}{\sqrt{d_K}}
\Biggr]
\;
\|W_O^\top\|_{\infty}
\;
\max_{h_i} 
\|W_{Q,h_i}\|_{\infty}
\|W_{Q,h_i}^\top\|_{\infty}
\;
\max_{h_j}  \|W_{V,h_j}^\top\|_{\infty} 
, \label{equ:kims_dotproduct_bound:linfty}
\end{align}
where:
\begin{align}
\phi(x)=x\exp(x+1)    \notag
\end{align}
is an invertible univariate function on $x>0$,
$W_{Q,h_i}$ is the \emph{query} parameter matrix for head $h_i$, $W_{V,h_i}$ is the \emph{value} parameter matrix for head $h_i$, and $W_O \in \mathbb{R}^{d \times d}$ is the projection parameter matrix; the per-head key dimension is $d_K = d/h$ \citep[Theorem~3.2]{kim2021lipschitz}. They further conclude that their results can lead that the $2$-norm Lipschitz bound is at the scale:
\begin{align}
\mathrm{Lip}[\mathrm{Attention}] \sim O(\sqrt{n} \log n) 
,    
\end{align}
and the $\infty$-norm Lipschitz bound is at the scale:
\begin{align}
\mathrm{Lip}_{\infty}[\mathrm{Attention}] \sim
O(\log n)
\end{align}
\citep[Theorem~3.2]{kim2021lipschitz}. The bounds in \citep[Theorem~3.2]{kim2021lipschitz} are complemented by concurrent work \citep{vuckovic2020mathematical} with a measure-theoretical framework. \citeauthor{vuckovic2020mathematical} show that the $1$-norm Lipschitz bound is at the scale:
\begin{align}
\mathrm{Lip}_1[\mathrm{Attention}] \sim O(\sqrt{\log{n}})
\end{align}
\citep[Theorem 29 \& Corollary 30]{vuckovic2020mathematical}.

\begin{remark}
The Lipschitz constant bound of an $L_2$ self-attention layer remains dependent on the sequence length $n$ but is independent of the input norm. 
\end{remark}

\paragraph{LipsFormer.} \citet{qi2023lipsformer} demonstrate that enforcing Lipschitz continuity in Transformer architecture \citep{vaswani2017attention} is more crucial for ensuring training stability in contrast to the tricks such as \emph{learning rate warmup}, \emph{layer normalization}, \emph{attention formulation}, and \emph{weight initialization} \citep{qi2023lipsformer}. Their proposed method, referred to as \textbf{LipsFormer}, replaces the Transformer parts with their Lipschitz continuous counterparts: \emph{CenterNorm} for \emph{LayerNorm}, \emph{spectral initialization} for \emph{Xavier initialization}, \emph{scaled cosine similarity attention} for \emph{dot-product attention}, and \emph{weighted residual shortcut} for \emph{residual connection} \citep{qi2023lipsformer}. These modifications result in a Transformer architecture with a provably bounded Lipschitz constant. Their key architectural design elements are summarized as follows:
\begin{enumerate}[left=1em]
    \item \textbf{CenterNorm instead of LayerNorm.} LayerNorm \citep{ba2016layer} is widely used in Transformer, defined as:
\begin{align}
    \mathrm{LN}(x) = \gamma \, \odot \, z + \beta
    ,
\end{align}
and:
\begin{align}
    z = \frac{y}{\mathrm{std}(y)}, \qquad y = \Bigl(I_d - \frac{1}{d}\boldsymbol{1}_d\boldsymbol{1}_d^\top\Bigr)x
    ,
\end{align}
where $x \in \mathbb{R}^d$ is input, $I_d$ is an identity matrix in $\mathbb{R}^{d \times d}$, $\boldsymbol{1}_d$ is an all-ones vector in $\mathbb{R}^d$, $\mathrm{std}(y)$ is the standard deviation of $y$, $\odot$ is element-wise product, $\gamma$ and $\beta$ are learnable parameters initialized to $1$ and $0$ respectively \citep{qi2023lipsformer}. The Jacobian of $z$ with respect to $x$ is given as:
\begin{align}
    \frac{\partial\,z}{\partial\,x} = \frac{1}{\mathrm{std}(y)} 
    \Bigl(I_d - \frac{1}{d}\boldsymbol{1}_d\boldsymbol{1}_d^\top\Bigr)
    \Bigl(
    I_d - \frac{yy^\top}{\|y\|_2^2}
    \Bigr)
    ,
\end{align}
which shows that \textbf{LayerNorm} is not Lipschitz continuous when $\mathrm{std}(y) \to 0$. To address this problem, \citet{qi2023lipsformer} introduce \textbf{CenterNorm}, defined as:
\begin{align}
    \mathrm{CN}(x) = \gamma\,\odot\,\frac{d}{d-1}
    \Bigl(I_d-\frac{1}{d}\boldsymbol{1}_d\boldsymbol{1}_d^\top\Bigr)x + \beta
    ,
\end{align}
which admits a Lipschitz constant:
\begin{align}
    \mathrm{Lip}\left[\mathrm{CN}\right] = \frac{d}{d-1} \approx 1
\end{align}
for sufficient large $d$, $\gamma=1$, and $\beta=0$ \citep{qi2023lipsformer}.

\item \textbf{Scaled Cosine Similarity Attention (SCSA).} \citet{kim2021lipschitz} show that standard dot-product self-attention is not Lipschitz continuous since the Lipschitz constant depends on sequence length which is not bounded \citep{kim2021lipschitz}, see also Section~\ref{sec:selfattention_lipschitz} (\nameref{sec:selfattention_lipschitz}). To address this problem, \citet{qi2023lipsformer} replace standard dot-product attention part with \textbf{SCSA}, defined as:
\begin{align}
    \mathrm{SCSA}(x;\nu,\tau)=\nu PV, \qquad P = \mathrm{softmax}(\tau\,QK^\top),
\end{align}
where $\nu$ and $\tau$ are learnable scalars, and $K,Q,V$ are row-wise normalized in $\ell_2$ \citep[\S~4.1.2]{qi2023lipsformer}. 

\begin{remark}
It is worth noting that their method reduces the Lipschitz constant of the attention module; however, the module remains not globally Lipschitz continuous since the Lipschitz constant still depends on sequence length $n$:
\begin{align}
    \mathrm{Lip}_2\left[SCSA\right] \leq O(n^2), \qquad \text{and} \qquad \mathrm{Lip}_{\infty}\left[SCSA\right] \leq O(n^2)
\end{align}
\citep[Theorem 1]{qi2023lipsformer}.
\end{remark}

\item \textbf{Weighted Residual Shortcut.} For a residual module:
\begin{align}
    h(x) = x + g(x)
    ,
\end{align}
where $g: \mathbb{R}^d \to \mathbb{R}^d$, $h: \mathbb{R}^d \to \mathbb{R}^d$ and $x \in \mathbb{R}^d$, the Lipschitz constant of $h$ is bounded by:
\begin{align}
    \mathrm{Lip}\left[h\right] \leq  1 + \mathrm{Lip}\left[g\right]
    ,
\end{align}
see Section~\ref{sec:lipschitz_of_residual_module} (\nameref{sec:lipschitz_of_residual_module}). To further reduce the Lipschitz constant introduced by residual module, \citet{qi2023lipsformer} replace the standard residual module with:
\begin{align}
    \mathrm{WRS}(x) = x + \alpha \, \odot\, g(x)
    ,
\end{align}
where $\alpha$ is a learnable parameter, initialized to a small value $[0.1, 0.2]$ \citep{qi2023lipsformer}. Assuming the residual branch $g$ is $1$-Lipschitz, the Lipschitz constant of $\mathrm{WRS}$ in residual module is bounded by:
\begin{align}
\mathrm{Lip}\left[\mathrm{WRS}\right] \leq 1 + \max(\alpha)
.
\end{align}

\item \textbf{Spectral-based Weight Initialization.} \citet{qi2023lipsformer} further normalize the parameters at initialization by spectral-norm. Let $W$ be a parameter matrix, the $W$ is normalized by:
\begin{align}
    W \leftarrow \frac{W}{\|W\|_2}
\end{align}
\citep[\S~4.1.4]{qi2023lipsformer}.

\end{enumerate}

\subsubsection{Jacobian Norm Minimization}

Constraining the Lipschitz constant of Transformer architecture for stable optimization and robustness remains an open research problem. \citet{yudin2025pay} first derive an explicit tight local Lipschitz constant for self-attention module in Transformer architecture by analyzing the Jacobian of softmax \citep{yudin2025pay}. Building on top of the Jacobian analysis, they present \textbf{JaSMin} (\textbf{Ja}cobian \textbf{S}oftmax norm \textbf{Min}imization) for enhancing Transformer's robustness by constraining the local Lipschitz constant \citep{yudin2025pay}.

\paragraph{Bounding Lipschitz through Softmax Jacobian.} Consider the softmax function:
\begin{align}
    \mathrm{softmax}: \mathbb{R}^d \to \mathbb{R}^d
    ,
\end{align}
then the Jacobian of the softmax function is:
\begin{align}
\mathcal{M}(P_x) := \nabla \, \mathrm{softmax}\left(x\right) = 
\mathrm{diag}(P_x) - P_x P_x^\top,
\end{align}
where:
\begin{align}
    P_x = \Biggl( \frac{\mathrm{e}^{x_i}}{\sum_j \mathrm{e}^{x_j}} \Biggr) 
    .
\end{align}
\citet{yudin2025pay} incorporate this Jacobian expression of the softmax function into self-attention, leading to a Lipschitz constant bound for a single head:
\begin{align}
    \mathrm{Lip}\Bigl[\mathrm{Attention}\Bigr] 
    \leq
    \|W_V\|_2
    \Biggl(
    \|P\|_2 + 2 \|x\|_2^2 \; \|A\|_2 \; \max_i \|\mathcal{M}(P_{i, :})\|_2
    \Biggr)
\end{align}
with:
\begin{align}
    P = \mathrm{softmax}\Bigl(
    x
    A
    x^\top
    \Bigr) \in \mathbb{R}^{n \times n} \qquad \text{and} \qquad A = \frac{W_QW_K^\top}{\sqrt{d_K}} \in \mathbb{R}^{d \times d}
\end{align}
\citep[Theorem~3]{yudin2025pay}. It is worth noting that the local Lipschitz constant contains a term:
\begin{align}
 \mathrm{Lip}\Bigl[\mathrm{Attention}\Bigr] \leq
 O\Bigl(
 \max_i \|\mathcal{M}(P_{i, :})\|_2
 \Bigr)
 ,
\end{align}
which shows that softmax Jacobian norm determines an upper bound of the local Lipschitz constant.

\paragraph{Jacobian Softmax Norm Minimization.} To approximate the Jacobian norm more efficiently, \citet{yudin2025pay} introduce a surrogate function $g$ that captures the partial ordering of the singular values of softmax mappings. For $x \in \mathbb{R}_{>0}^n$, let $x_{(k)}$ be the $k$-th largest component of $x$. Define:
\begin{align}
    g_k(x) := x_{(k)}(1 - x_{(k)} + x_{(k+1)})
,
\end{align}
for $k=1,2,\cdots,n-1$ \citep[Definition 1]{yudin2025pay}. Let $A = \mathrm{diag}(x) - xx^\top$, then the following inequality holds true:
\begin{align}
    x_{(1)} \geq g_1(x) \geq \sigma_1(A)
    \geq
    x_{(2)} \geq g_2(x) \geq \sigma_2(A)
    \geq
    x_{(n)} \geq g_n(x) \geq \sigma_n(A)
    \geq 0
\end{align}
\citep[Theorem 4]{yudin2025pay}.

Building on the derived bound, \citet{yudin2025pay} propose an efficient regularization loss expressed in terms of the function $g$ with two forms:
\begin{align}
    \mathcal{L}_{\mathrm{JaSMin}(k=0)}
    = \sum_{\ell=1}^{L}
    \sum_{h_i=1}^{h}
    \max_j\log
    \Bigl[
    g_1\Bigl(
    P_{j,:}^{\ell, i}
    \Bigr)
    \Bigr]
    ,
\end{align}
and:
\begin{align}
    \mathcal{L}_{\mathrm{JaSMin}(k)}
    = \sum_{\ell=1}^{L}
    \sum_{h_i=1}^{h}
    \max_j\log
    \Biggl[
    \frac{
    g_1\Bigl(
    P_{j,:}^{\ell, i}
    \Bigr)
    }{
    g_k\Bigl(
    P_{j,:}^{\ell, i}
    \Bigr)
    }
    \Biggr]
    ,
\end{align}
where $P_{j,:}^{\ell, i}$ represents the softmax map for the $\ell$-th attention module ($1\leq \ell \leq L$), $i$-th head ($1 \leq i \leq h$), and the $j$-th row of the map \citep[\S~4]{yudin2025pay}. They also show that for $\sfrac{g_1}{g_k}$ is bounded below $\gamma$:
\begin{align}
    \frac{
    g_1\Bigl(P_{j,:}^{\ell,i}\Bigr)
    }{
    g_k\Bigl(P_{j,:}^{\ell,i}\Bigr)
    } \leq \gamma
,
\end{align}
for $1 \leq \gamma \leq \frac{k}{4}$. Then the softmax Jacobian $2$-norm is bounded by:
\begin{align}
   \| \mathcal{M}(P_{j,:}^{\ell,i}) \|_2 \leq
   O\Bigl(\frac{\gamma}{k}\Bigr)
\end{align}
\citep[Proposition 1]{yudin2025pay}.

\subsection{Remarks}

The regularization approaches surveyed above also involve a trade-off between enforceability, computational cost, expressive power, and guarantee strength. Weight-based methods such as weight clipping \citep{arjovsky2017wasserstein} (see Section~\ref{sec:lipschitz_regularizations:weight}) and spectral normalization \citep{miyato2018spectral,yoshida2017spectral} (see Section~\ref{sec:lipschitz_regularizations:weight}) are simple to implement and scale well to large models, but they typically control only upper bounds induced by layer norms rather than the exact network Lipschitz constant (see Proposition~\ref{prop:lipschitz_constant_of_linear_unit} and Proposition~\ref{prop:survey_upper_bound_lipschitz}). Gradient-based methods \citep{gulrajani2017improved} (see Section~\ref{sec:lipschitz_regularizations:gradient}) act more directly on the input--output sensitivity by penalizing gradient norms (see Lemma~\ref{lemma:lipschitz_bounds_gradient}), yet they are often local in nature and can be computationally expensive due to repeated differentiation. Activation- and architecture-based approaches, such as GroupSort \citep{anil2019sorting}, invertible residual constructions \citep{behrmann2019invertible,perugachi2021invertible}, and Lipschitz-continuous transformer variants \citep{kim2021lipschitz,qi2023lipsformer,yudin2025pay} (see Sections~\ref{sec:lipschitz_regularizations:activation} and~\ref{sec:lipschitz_regularizations:architecture}), provide stronger structural control by design, but may restrict architectural flexibility or reduce expressive efficiency. Class-margin regularization \citep{tsuzuku2018lipschitz} (see Section~\ref{sec:lipschitz_regularizations:class_margin}) connects Lipschitz control to certifiable robustness more directly through margin-based objectives, although its effectiveness still depends on the quality of the underlying Lipschitz bound (see Equation~\eqref{eq:2norm_perturbation_deltax}).

\section{Certifiable Robustness}
\label{sec:certifiable_robustness}

Certifiable robustness, a cornerstone of trustworthy deep learning, guarantees that a neural network's predictions remain consistent under adversarial perturbations within a specified norm ball. Unlike empirical defenses like adversarial training, which lack formal guarantees, Lipschitz-based certification methods leverage the Lipschitz constant to bound sensitivity to input changes, ensuring provable robustness. While some literature, such as \citep{clevert2015elu}, have been introduced previously from the perspectives --- such as \textbf{theoretical foundations}, \textbf{estimation methods} and \textbf{regularization approaches}, we may revisit some of them in this section from the perspective of \textbf{certifiable robustness}.

\subsection{Formal Robustness Guarantees}

To complement the results in Section~\ref{sec:lipschitz_regularizations:class_margin}, we now discuss formal robustness guarantees for the general case. In the setting of a $k$-class classifier $f$ with Lipschitz constant $K$, a central question in adversarial defense is to determine the largest perturbation norm that leaves the prediction unchanged.

\begin{theorem}[$p$-Norm Lipschitz Margin Robustness Radius]
Let $f:\mathbb{R}^d\to\mathbb{R}^k$ be a classifier. Let $f_i$ be the $i$-th output. Let $c=\arg\max_i f_i(x)$ be the predicted class for $x$. Define the (one-versus-rest) margin
\begin{align}
m(x)\;:=\;f_c(x)-\max_{j\ne c} f_j(x).
\end{align}
Let $f$ be Lipschitz continuous from the input $\ell_p$ norm to the output $p$-norm, \ie,
\begin{align}
\|f(x)-f(y)\|_p \;\le\; \mathrm{Lip}_p[f]\,\|x-y\|_p \quad \text{for all } x,y.
\end{align}

Then for any perturbation $\delta_x$ with
\begin{align}
\|\delta_x\|_p \;<\; \frac{m(x)}{2^{1-\frac{1}{p}}\,\mathrm{Lip}_p[f]},
\end{align}
the prediction is unchanged: 
\begin{align}
 \arg\max_i f_i(x+\delta_x)=c
 .
\end{align}
\end{theorem}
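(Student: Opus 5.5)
Fix $j\neq c$ and study the pairwise margin $g_j(z):=f_c(z)-f_j(z)=(e_c-e_j)^\top f(z)$. If $g_j(x+\delta_x)>0$ for every $j\neq c$, then $f_c(x+\delta_x)>f_j(x+\delta_x)$ for all $j\neq c$, and so $\arg\max_i f_i(x+\delta_x)=c$. We already know $g_j(x)\ge m(x)$ by definition of the one-versus-rest margin. The task therefore reduces to bounding how much $g_j$ can drop under a perturbation of $\ell_p$-size $\|\delta_x\|_p$.

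The key step is a H\"older estimate in the output space. Write $\Delta:=f(x+\delta_x)-f(x)\in\mathbb{R}^k$. Then
\begin{align}
|g_j(x+\delta_x)-g_j(x)| = |(e_c-e_j)^\top \Delta| \le \|e_c-e_j\|_{p^\ast}\,\|\Delta\|_p ,\notag
\end{align}
where $p^\ast$ is the H\"older conjugate, $\frac1p+\frac1{p^\ast}=1$. The vector $e_c-e_j$ has exactly two nonzero entries, each of modulus $1$, so $\|e_c-e_j\|_{p^\ast}=2^{1/p^\ast}=2^{1-\frac1p}$. This also covers $p=1$ with $p^\ast=\infty$ (giving the factor $1$) and $p=\infty$ with $p^\ast=1$ (giving the factor $2$). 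Combining this with the assumed Lipschitz property $\|\Delta\|_p\le \mathrm{Lip}_p[f]\,\|\delta_x\|_p$ yields
\begin{align}
g_j(x+\delta_x)\ \ge\ g_j(x)-2^{1-\frac1p}\,\mathrm{Lip}_p[f]\,\|\delta_x\|_p\ \ge\ m(x)-2^{1-\frac1p}\,\mathrm{Lip}_p[f]\,\|\delta_x\|_p .\notag
\end{align}
Under the hypothesis $\|\delta_x\|_p< m(x)/(2^{1-\frac1p}\mathrm{Lip}_p[f])$, the right-hand side is strictly positive. This holds uniformly in $j$, which closes the argument.

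The main point of care is the constant. The naive route through the concatenation bound (Proposition~\ref{prop:lipschitz_concatenation}) or the triangle inequality gives a factor of $2$ rather than $2^{1-1/p}$. The sharper factor comes only from pairing the difference vector $e_c-e_j$ with the dual norm $\|\cdot\|_{p^\ast}$. In the case $p=2$ this recovers the $\sqrt2$ of Equation~\eqref{eq:2norm_perturbation_deltax}, which serves as a consistency check.

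Two minor points remain. Strictness of the inequality is needed, since ties at the argmax must be excluded. Degenerate cases should also be noted: if $m(x)=0$ the statement is vacuous, and $\mathrm{Lip}_p[f]>0$ is implicitly assumed.
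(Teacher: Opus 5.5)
Your proof is correct and follows essentially the same route as the paper. Both arguments pair $e_c-e_j$ with the H\"older-conjugate norm, $\|e_c-e_j\|_{q}=2^{1/q}=2^{1-\frac{1}{p}}$, and combine this with $\mathrm{Lip}_p[f]$ to bound how far the margin can drop. Your per-$j$ version, which bounds each $g_j=(e_c-e_j)^\top f$ instead of the Lipschitz constant of $m$ (a function with a $\max$ inside, so not literally a linear functional of $f$), is slightly tidier; it also gives the strict positivity that matches the theorem's strict inequality, whereas the paper's write-up ends with a non-strict $\le$.
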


\begin{remark}
This result is particularly useful for analyzing the adversarial defense problem. Similar results have been discussed in the literature~\citep{szegedy2014intriguing,hein2017formal,cisse2017parseval,tsuzuku2018lipschitz}.
\end{remark}

\begin{proof}
By writing \eqref{eq:classmargin_mx} into:
\begin{align}
m(x) = (e_c - e_j)^\top
    \begin{bmatrix}
        f_c(x) \\
        \max_{j \neq c} f_j(x)
    \end{bmatrix}
    ,\notag
\end{align}
where $e_c$ and $e_j$ are one-hot basis, for $p$-norm, applying H\"older's inequality with the conjugate exponent $q$ ($\frac{1}{p} + \frac{1}{q} = 1$) immediately yields:
\begin{align}
\mathrm{Lip}_p\left[m\right] \leq \left\| e_c - e_j \right\|_{q} \; \mathrm{Lip}_p\left[f\right] = 2^{\frac{1}{q}} \; \mathrm{Lip}_p\left[f\right] = 2^{1-\frac{1}{p}} \; \mathrm{Lip}_p\left[f\right]
,\notag
\end{align}
so that a guaranteed $p$-norm perturbation $\delta_x$ does not alter the prediction, if:
\begin{align}
\|\delta_x\|_p \leq \frac{m(x)}{\mathrm{Lip}_p\left[m\right]} = \frac{m(x)}{2^{1-\frac{1}{p}} \;\mathrm{Lip}_p\left[f\right]}
,\notag
\end{align}
which recovers \eqref{eq:2norm_perturbation_deltax} in Section~\ref{sec:lipschitz_regularizations:class_margin} with $p=2$.

\end{proof}

\subsection{Certifiable Robustness via Global Lipschitz Bound}

For a network $f$ with 1-Lipschitz activations and $W^{(\ell)}$ is the $\ell$-th layer parameter matrix, the spectral-norm product bounds the Lipschitz constant of $f$ globally:
\begin{align}
\mathrm{Lip}[f] \le \prod_{\ell=1}^L \|W^{(\ell)}\|_2,\notag
\end{align}
providing a \emph{certified global bound}. This is computationally cheap but typically loose. \textbf{SeqLip} \citep{virmaux2018lipschitz} refines this spectral product by incorporating singular-vector alignment between layers, also producing certified global bounds. \textbf{LipSDP} \citep{fazlyab2019efficient} interprets activation functions as gradients of convex potential functions, satisfying certain properties described by quadratic constraints. Therefore, Lipschitz constant estimation problem is treated as a semi-definite program (SDP) problem. 

\subsection{Certifiable Robustness via Local Lipschitz Bound}

Local Lipschitz bounds are computed over a neighborhood. \textbf{Fast-Lip} \citep{weng2018fastlip} analyze the activation patterns of ReLU networks and derive a gradient norm based local Lipschitz bound. \textbf{CLEVER} \citep{weng2018clever} estimates local constants via extreme value theory on sampled gradient norms; this improves tightness but does not yield a formal certificate. MILP-based methods \citep{jordan2020exactly} encode the exact piecewise-linear constraints of ReLU networks within the ball, yielding exact local Lipschitz constants (and thus exact certified radii), but are limited to small networks.

\subsection{Certifiable Robustness in Language Models}

\paragraph{Proper Distance Metrics for Text.} Language models typically operate on text sequences as inputs, for which the conventional notion of a Lipschitz constant over $\mathbb{R}$ with $\ell_p$ norm does not directly apply. The \emph{Levenshtein distance} \citep{levenshtein1966binary} measures the number of character replacements, insertions or deletions needed in order to transform a sequence $x$ to sequence $y$. It is a proper metric that satisfies all axioms of a metric space:
\begin{enumerate}
    \item \textbf{Non-Negativity.} $d_{\mathrm{lev}}(x,y) \geq 0$;
    \item \textbf{Identity.} $d_{\mathrm{lev}}(x,y) = 0 \Longleftrightarrow x =y$;
    \item \textbf{Symmetry.} $d_{\mathrm{lev}}(x,y) = d_{\mathrm{lev}}(y,x)$;
    \item \textbf{Triangle Inequality.} $d_{\mathrm{lev}}(x,z) \leq d_{\mathrm{lev}}(x,y) + d_{\mathrm{lev}}(y,z)$;
\end{enumerate}
where $d_{\mathrm{lev}}(x,y)$ denotes the Levenshtein distance between two sequences $x$ and $y$. Thereby it provides a principled foundation for discussing Lipschitz continuity in the context of language models. For tokenized text --- where each vocabulary item is represented as a one-hot vector --- a variant of the \emph{Levenshtein distance}, known as the \emph{ERP distance}, has been proposed for comparing sequences of one-hot vectors \citep{chen2004editdistance}.

\citet{rocamora2025certified} introduce \textbf{LipsLev}, a method for training convolutional text classifiers that are \emph{1-Lipschitz} in the sense of Levenshtein distance for Lipschitz constant based defense \citep{rocamora2025certified}. That is, for a classifier $f: S \to \mathbb{R}^c$, two one-hot vector-valued sequences $x \in \mathbb{R}^{n \times d} \subset S $ and $y \in \mathbb{R}^{m \times d} \subset S$ where $n,m$ are sequence lengths and $d$ is one-hot vector dimension, the classification margin must hold:
\begin{align}
    | f_i(x) - f_j(y)| \leq K_{\mathrm{lev}} \; d_{\mathrm{lev}}(x, y)
\end{align}
for a constant $K_{\mathrm{lev}} \leq 1$, so that the prediction does not change under perturbation radius $d_{\mathrm{lev}}(x, y) < R$ \citep[Theorem 4.3 \& Corollary 4.4]{rocamora2025certified}. They then train a 1-Lipschitz classifier over text sequences --- in the sense of the Levenshtein distance --- by normalizing each layer's outputs through division by its corresponding Lipschitz constant \citep[\S~4.3]{rocamora2025certified}.

\section{Conclusions}

Lipschitz continuity serves as a cornerstone for enhancing the robustness, generalization, and optimization stability of deep neural networks, providing a rigorous mathematical framework to quantify and control sensitivity to input perturbations. This comprehensive survey synthesizes key insights across theoretical foundations, estimation techniques, regularization approaches, and certifiable robustness, highlighting their interconnections and practical implications. By unifying disparate research threads, we address a critical gap in the literature, offering a cohesive resource for researchers and practitioners. Challenges remain in balancing bound tightness with computational scalability, preserving model expressivity under constraints, and extending certifications to diverse norms and large-scale architectures like transformers, paving the way for future advancements in trustworthy deep learning systems.

% \subsubsection*{Broader Impact Statement}
% In this optional section, TMLR encourages authors to discuss possible repercussions of their work,
% notably any potential negative impact that a user of this research should be aware of. 
% Authors should consult the TMLR Ethics Guidelines available on the TMLR website
% for guidance on how to approach this subject.

% \subsubsection*{Author Contributions}
% If you'd like to, you may include a section for author contributions as is done
% in many journals. This is optional and at the discretion of the authors. Only add
% this information once your submission is accepted and deanonymized. 

\section*{Acknowledgments}
This publication has emanated from research conducted with the financial support of \textbf{Taighde \'Eireann} -- Research Ireland under grant number 18/CRT/6223. The authors gratefully acknowledge the partial in-kind support provided by the J.E. Cairnes School of Business \& Economics at the University of Galway, Ireland. We especially thank Ping Song of Research Ireland -- Centre for Research Training in AI \& the University of Galway (Ireland) for the help. For the purpose of Open Access, the author has applied a CC BY public copyright license to any Author Accepted Manuscript version arising from this submission. We also thank reviewers for their constructive comments which can significantly improve our research quality.

\bibliography{references}
\bibliographystyle{tmlr}

\appendix
%\section{Appendix}
% You may include other additional sections here.

\section{Proofs: Lipschitz Constants of Activation Functions}
\label{appendix:proofs_lipschitz_constants_of_activation_functions}

\subsection{Proof: Lipschitz Constant of Sigmoid}
\label{appendix:proofs_lipschitz_constants_of_activation_functions:Sigmoid}

\begin{proof}

Consider the sigmoid function:
\begin{align}
\mathrm{Sigmoid}(x) = f(x) = (1 + e^{-x})^{-1}
.
\end{align}

Let $u = 1 + e^{-x}$, so $f(x) = u^{-1}$. The derivative is:
\begin{align}
f'(x) = -\frac{1}{u^2} \cdot \frac{du}{dx}
.
\end{align}

Compute $\frac{du}{dx}$:
\begin{align}
u = 1 + e^{-x}, \quad \frac{du}{dx} = -e^{-x}
.
\end{align}

Thus:
\begin{align}
f'(x) = -\frac{1}{(1 + e^{-x})^2} \cdot (-e^{-x}) = \frac{e^{-x}}{(1 + e^{-x})^2}
.
\end{align}

Express the derivative in terms of $f(x)$:
\begin{align}
1 - f(x) = \frac{e^{-x}}{1 + e^{-x}},
\end{align}
and:
\begin{align}
f'(x) = \frac{1}{1 + e^{-x}} \cdot \frac{e^{-x}}{1 + e^{-x}} = f(x) (1 - f(x))
.
\end{align}

Thus:

\begin{align}
|f'(x)| = f(x) (1 - f(x))
.
\end{align}

\textbf{Maximize the Derivative.} Since $0 < f(x) < 1$, we maximize $g(z) = z(1 - z)$ for $z = f(x) \in (0,1)$:
\begin{align}
g'(z) = 1 - 2z = 0 \implies z = \frac{1}{2}
,
\end{align}
thus:
\begin{align}
g\left(\frac{1}{2}\right) = \frac{1}{2} \cdot \frac{1}{2} = \frac{1}{4}
.
\end{align}

Find when $f(x) = \frac{1}{2}$:
\begin{align}
\frac{1}{1 + e^{-x}} = \frac{1}{2} \implies e^{-x} = 1 \implies x = 0
,
\end{align}
at $x = 0$:
\begin{align}
f'(0) = \frac{1}{2} \cdot \frac{1}{2} = \frac{1}{4}
.
\end{align}

As $x \to \infty$, $e^{-x} \to 0$, so $f'(x) \to 0$. As $x \to -\infty$, $e^{-x} \to \infty$, so:

\begin{align}
f'(x) \approx \frac{e^{-x}}{e^{-2x}} = e^x \to 0
.
\end{align}

Hence:
\begin{align}
    \mathrm{Lip}\left[\mathrm{Sigmoid}(x)\right] = \sup_x |f'(x)| = \frac{1}{4}.
\end{align}

\end{proof}

\subsection{Proof: Lipschitz Constant of Tanh}
\label{appendix:proofs_lipschitz_constants_of_activation_functions:Tanh}

\begin{proof}

The tanh is defined as:
\begin{align}
f(x) = \tanh(x) = \frac{e^x - e^{-x}}{e^x + e^{-x}}
,
\end{align}
for all $x, y \in \mathbb{R}$.

Let $g(x) = \sinh(x)$, $h(x) = \cosh(x)$, so $f(x) = \frac{g(x)}{h(x)}$. The derivative is:

\begin{align}
f'(x) = \frac{g'(x)h(x) - g(x)h'(x)}{h(x)^2}
.
\end{align}

Since $g'(x) = \cosh(x)$, $h'(x) = \sinh(x)$, and $h(x)^2 = \cosh^2(x)$:
\begin{align}
f'(x) = \frac{\cosh(x) \cdot \cosh(x) - \sinh(x) \cdot \sinh(x)}{\cosh^2(x)} = \frac{\cosh^2(x) - \sinh^2(x)}{\cosh^2(x)}
.
\end{align}

Using the identity $\cosh^2(x) - \sinh^2(x) = 1$:
\begin{align}
f'(x) = \frac{1}{\cosh^2(x)} = \sech^2(x).
\end{align}

Since $\cosh(x) \geq 1$, we have:
\begin{align}
|f'(x)| = \sech^2(x)    .
\end{align}

Express the derivative in terms of $f(x)$:
\begin{align}
f'(x) = \sech^2(x) = 1 - \tanh^2(x) = 1 - f(x)^2
.
\end{align}

Since $-1 \leq f(x) \leq 1$, we have $|f'(x)| = 1 - f(x)^2 \leq 1$.

\textbf{Maximize the Derivative.} Maximize $|f'(x)| = \sech^2(x)$:
\begin{align}
\cosh(x) = \frac{e^x + e^{-x}}{2}
.
\end{align}

At $x = 0$:
\begin{align}
\cosh(0) = \frac{e^0 + e^0}{2} = 1, \quad \sech^2(0) = \frac{1}{\cosh^2(0)} = 1
.
\end{align}

As $|x| \to \infty$, $\cosh(x) \approx \frac{e^{|x|}}{2}$, so:
\begin{align}
\sech^2(x) \approx \frac{4}{e^{2|x|}} \to 0
.
\end{align}

The supremum of $|f'(x)|$ is 1 at $x = 0$. Alternatively, since $f(x)^2 \leq 1$, the supremum of $1 - f(x)^2$ occurs when $f(x) = 0$:
\begin{align}
\tanh(0) = 0, \quad f'(0) = 1 - 0^2 = 1
.
\end{align}

Hence:
\begin{align}
    \mathrm{Lip}\left[\tanh(x)\right] = \sup_x |f'(x)| = 1.
\end{align}

\end{proof}

\subsection{Proof: Lipschitz Constant of Softplus}
\label{appendix:proofs_lipschitz_constants_of_activation_functions:Softplus}

\begin{proof}

Consider the softplus function:
\begin{align}
f(x) = \ln(1 + e^x)
.
\end{align}

The derivative is:
\begin{align}
f'(x) = \frac{d}{dx} \ln(1 + e^x) = \frac{e^x}{1 + e^x}
.
\end{align}

Since $e^x > 0$ and $1 + e^x > 1$, we have $f'(x) > 0$, so:
\begin{align}
|f'(x)| = \frac{e^x}{1 + e^x}
.
\end{align}

\textbf{Maximize the Derivative.} To find the Lipschitz constant, we need to compute:
\begin{align}
K = \sup_{x \in \mathbb{R}} \frac{e^x}{1 + e^x}.
\end{align}

Notice that $\frac{e^x}{1 + e^x}$ is the sigmoid function, which ranges between 0 and 1. Analyze its behavior:
\begin{enumerate}
    \item As $x \to \infty$, $e^x$ grows large, so:
    \begin{align}
    \frac{e^x}{1 + e^x} \approx \frac{e^x}{e^x} = 1
    .
    \end{align}

    \item As $x \to -\infty$, $e^x \to 0$, so:
    \begin{align}
    \frac{e^x}{1 + e^x} \to \frac{0}{1} = 0.
    \end{align}

\end{enumerate}

To confirm the supremum, consider the function $g(x) = \frac{e^x}{1 + e^x}$. Its derivative is:
\begin{align}
g'(x) = \frac{e^x (1 + e^x) - e^x \cdot e^x}{(1 + e^x)^2} = \frac{e^x}{(1 + e^x)^2}
.
\end{align}

Since $g'(x) > 0$ for all $x$, $g(x)$ is strictly increasing, approaching 0 as $x \to -\infty$ and 1 as $x \to \infty$. Thus:
\begin{align}
\sup_{x \in \mathbb{R}} \frac{e^x}{1 + e^x} = 1
\end{align}

Hence:
\begin{align}
    \mathrm{Lip}\left[\mathrm{Softplus}(x)\right] = \sup_x |f'(x)| = 1.
\end{align}

\end{proof}

\subsection{Proof: Lipschitz Constant of Swish}
\label{appendix:proofs_lipschitz_constants_of_activation_functions:Swish}
\begin{proof}
Let
\begin{align}
f(x) = x\,\sigma(x), \quad \sigma(x) = \frac{1}{1+e^{-x}}.
\end{align}
The derivative is
\begin{align}
g(x) := f'(x) = \sigma(x) + x\,\sigma(x)\bigl(1 - \sigma(x)\bigr).
\end{align}
Since $\sigma(-x) = 1 - \sigma(x)$, we have
\begin{align}\label{eq:sym}
g(-x) = 1 - g(x)
.
\end{align}

Using $\sigma(x) = \frac12\bigl(1 + \tanh(\frac{x}{2})\bigr)$ and $\sigma(x)(1-\sigma(x)) = \frac14\operatorname{sech}^2(\frac{x}{2})$,
\begin{align}
g(x) = \frac12\Bigl(1 + \tanh\!\frac{x}{2}\Bigr) + \frac{x}{4}\,\operatorname{sech}^2\!\frac{x}{2}.
\end{align}
Differentiating,
\begin{align}
g'(x) = \frac14\,\operatorname{sech}^2\!\frac{x}{2} \,\Bigl( 2 - x\,\tanh\!\frac{x}{2} \Bigr),
\end{align}
so the maximizer $x^\star>0$ satisfies
\begin{align}
x^\star\,\tanh\!\frac{x^\star}{2} = 2.
\end{align}

Using $\tanh(\frac{x^\star}{2}) = \frac{2}{x^\star}$ and
\begin{align}
\operatorname{sech}^2\!\frac{x^\star}{2} = \frac{(x^\star)^2 - 4}{(x^\star)^2}.
\end{align}

Thus
\begin{align}
K = g(x^\star)
= \frac12 + \frac{x^\star}{4}.
\end{align}

Since $g(-x) = 1 - g(x)$, $x^\star$ gives the global maximum of $|g|$, solving $x^\star\,\tanh\!\frac{x^\star}{2} = 2$ gives:
\begin{align}
x^\star \approx 2.3993572805\cdots
.
\end{align}

Hence:
\begin{align}
    \mathrm{Lip}\left[\mathrm{Swish}(x)\right] = \sup_x |f'(x)| \approx 1.09983932\cdots
\end{align}

\end{proof}

\subsection{Proof: Lipschitz constant of GELU}
\label{appendix:proofs_lipschitz_constants_of_activation_functions:GELU}
\begin{proof}

Let:
\begin{align}
f(x) = x\,\Phi(x),
\end{align}
where: 
\begin{align}
\Phi(x) = \tfrac12\bigl(1+\operatorname{erf}(x/\sqrt{2})\bigr)  
\end{align}
is the standard normal CDF and: 
\begin{align}
\phi(x)=\tfrac{1}{\sqrt{2\pi}}e^{-x^2/2}    
\end{align}
is its PDF. 

Differentiate:
\begin{align}
g(x):=f'(x)=\Phi(x)+x\,\phi(x).
\end{align}

Note the symmetry $\Phi(-x)=1-\Phi(x)$ and $\phi(-x)=\phi(x)$, hence
\begin{align}
g(-x)=1-g(x).
\end{align}

Compute the critical points:
\begin{align}
g'(x)=\phi(x)+\phi(x)+x\,\phi'(x)
      =2\phi(x) - x^2\phi(x)
      =\phi(x)\,(2-x^2).
\end{align}

Since $\phi(x)>0$, we have $g'(x)>0$ for $|x|<\sqrt{2}$ and $g'(x)<0$ for $|x|>\sqrt{2}$. Hence $g$ attains its unique global maximum at $x^\star=\sqrt{2}$ and minimum at $-\sqrt{2}$. 

Since $g(-x)=1-g(x)$, the minimum equals $1-g(\sqrt{2})$, so indeed $\sup_x|g(x)|=g(\sqrt{2})$.

Evaluate at:
\begin{align}
x=\sqrt{2} \approx 1.414214\cdots    
,
\end{align}
hence:
\begin{align}
\mathrm{Lip}\left[\mathrm{GELU}(x)\right] &= \sup_x |f'(x)| \\
&= g(\sqrt{2}) \\
&= \Phi(\sqrt{2})+\sqrt{2}\,\phi(\sqrt{2}) \\
&= \tfrac12\bigl(1+\operatorname{erf}(1)\bigr) + \frac{e^{-1}}{\sqrt{\pi}} \\
&\approx 1.128904145
.
\end{align}

\end{proof}

\subsection{Proof: Lipschitz constant of Softmax}
\label{appendix:proofs_lipschitz_constants_of_activation_functions:Softmax}

\begin{proof}
Let
\begin{align}
\operatorname{softmax}(z)_i=\frac{e^{z_i}}{\sum_{j=1}^n e^{z_j}}
,
\end{align}
and:
\begin{align}
p:=\operatorname{softmax}(z)\in \Delta^{n-1}
,
\end{align}
where $\Delta^{n-1}$ is the probability simplex in $\mathbb{R}^n$ (\ie, the set of all probability vectors of length $n$):
\begin{align}
\Delta^{n-1} \;:=\; \big\{\, p \in \mathbb{R}^n \ \big|\ p_i \ge 0,\ \sum_{i=1}^n p_i = 1 \,\big\}.
\end{align}

The Jacobian is
\begin{align}
J(z)=\nabla \operatorname{softmax}(z)=\mathrm{diag}(p)-p\,p^\top,
\end{align}
which is symmetric positive semidefinite (PSD) and satisfies:
\begin{align}
J(z)\,\mathbf{1}=0.    
\end{align}

Hence the Lipschitz constant $K$ is:
\begin{align}
K=\sup_{z\in\mathbb{R}^n}\|J(z)\|_2=\sup_{p\in\Delta^{n-1}}\lambda_{\max}\!\big(\mathrm{diag}(p)-p\,p^\top\big).
\end{align}
where $\lambda_{\max}$ is the largest eigenvalue of $\mathrm{diag}(p)-p\,p^\top$.

For any unit vector $v\in\mathbb{R}^n$,
\begin{align}
v^\top J(z)\,v
= v^\top\!\mathrm{diag}(p)\,v - (p^\top v)^2
= \sum_{i=1}^n p_i v_i^2 - \Big(\sum_{i=1}^n p_i v_i\Big)^2
= \mathrm{Var}_{i\sim p}[v_i].
\end{align}

Therefore
\begin{align}
\|J(z)\|_2=\sup_{\|v\|_2=1}\mathrm{Var}_{i\sim p}[v_i].
\end{align}

By Popoviciu's inequality on variances, for any random variable $X$ supported in $[a,b]$,
\begin{align}
\mathrm{Var}[X]\le \frac{(b-a)^2}{4},
\end{align}
with equality attained by a two--point distribution at the endpoints. 

Applying this to $X=v_i$ under $p$, we get
\begin{align}
\mathrm{Var}_{i\sim p}[v_i] \le \frac{\big(\max_i v_i-\min_i v_i\big)^2}{4}.
\end{align}

Maximizing the RHS over $\|v\|_2=1$ yields $\max_i v_i-\min_i v_i\le \sqrt{2}$, with equality for
\begin{align}
v=\frac{1}{\sqrt{2}}(e_a-e_b)\quad\text{for some distinct }a,b,
\end{align}
so that
\begin{align}
\sup_{\|v\|_2=1}\mathrm{Var}_{i\sim p}[v_i]
\le 
\frac{(\sqrt{2})^2}{4}=\frac12.
\end{align}

This upper bound is (arbitrarily) attainable by choosing $p$ supported on the two indices $a,b$ with $p_a=p_b=\tfrac12$ and $p_i\to 0$ for $i\notin\{a,b\}$ (which is approached by softmax logits $z_a=z_b\gg z_{i\notin\{a,b\}}$). In that case,
\begin{align}
J=\begin{bmatrix}
\frac14 & -\frac14 \\[2pt] -\frac14 & \frac14
\end{bmatrix}\oplus 0_{n-2},
\end{align}
whose largest eigenvalue is $\tfrac12$.

Hence:
\begin{align}
    \mathrm{Lip}\left[\mathrm{Softmax}(x)\right] = \sup_z \|J(z)\|_2 = \frac{1}{2},
\end{align}
independent of $n\ge 2$.

\end{proof}

\end{document}